\documentclass[acmsmall]{acmart}

\usepackage[english]{babel}


\usepackage{amsmath,amsfonts}
\usepackage{booktabs}       
\usepackage{nicefrac}       
\usepackage{microtype}      
\usepackage{xcolor}
\usepackage[T1]{fontenc}    
\usepackage{amsthm}
\usepackage{textcomp}
\usepackage{stfloats}
\usepackage{url}
\usepackage{verbatim}
\usepackage{graphicx}
\usepackage{bm}
\usepackage{cleveref}
\usepackage{enumitem}
\usepackage{mathtools}
\theoremstyle{plain}
\theoremstyle{definition}

\usepackage{array}
\usepackage{lipsum}
\usepackage{multirow}
\usepackage{makecell}
\usepackage{xcolor}
\usepackage{tikz}
\usepackage{ulem}
\usepackage{etoolbox}
\newcommand*{\circled}[1]{\lower.7ex\hbox{\tikz\draw (0pt, 0pt)%
    circle (.5em) node {\makebox[1em][c]{\small #1}};}}
\robustify{\circled}

\usepackage{afterpage}
\usepackage{makecell}
\usepackage{tabulary}
\usepackage{xcolor}
\usepackage{colortbl}
\usepackage{prettyref}
\usepackage{framed}
\usepackage{booktabs}       

\newcommand{\savehyperref}[2]{\texorpdfstring{\hyperref[#1]{#2}}{#2}}
\newrefformat{eq}{\savehyperref{#1}{Eq. \textup{(\ref*{#1})}}}
\newrefformat{eqn}{\savehyperref{#1}{Eq.~\textup{(\ref*{#1})}}}
\newrefformat{lem}{\savehyperref{#1}{Lemma~\ref*{#1}}}
\newrefformat{assump}{\savehyperref{#1}{Assumption~\ref*{#1}}}
\newrefformat{defi}{\savehyperref{#1}{Definition~\ref*{#1}}}\newrefformat{tab}{\savehyperref{#1}{Table~\ref*{#1}}}
\newrefformat{lemma}{\savehyperref{#1}{Lemma~\ref*{#1}}}
\newrefformat{line}{\savehyperref{#1}{Line~\ref*{#1}}}
\newrefformat{thm}{\savehyperref{#1}{Theorem~\ref*{#1}}}
\newrefformat{corr}{\savehyperref{#1}{Corollary~\ref*{#1}}}
\newrefformat{cor}{\savehyperref{#1}{Corollary~\ref*{#1}}}
\newrefformat{sec}{\savehyperref{#1}{Section~\ref*{#1}}}
\newrefformat{app}{\savehyperref{#1}{Appendix~\ref*{#1}}}
\newrefformat{ex}{\savehyperref{#1}{Example~\ref*{#1}}}
\newrefformat{fig}{\savehyperref{#1}{Figure~\ref*{#1}}}
\newrefformat{alg}{\savehyperref{#1}{Algorithm~\ref*{#1}}}
\newrefformat{rem}{\savehyperref{#1}{Remark~\ref*{#1}}}
\newrefformat{conj}{\savehyperref{#1}{Conjecture~\ref*{#1}}}
\newrefformat{prop}{\savehyperref{#1}{Proposition~\ref*{#1}}}
\newrefformat{proto}{\savehyperref{#1}{Protocol~\ref*{#1}}}
\newrefformat{prob}{\savehyperref{#1}{Problem~\ref*{#1}}}
\newrefformat{claim}{\savehyperref{#1}{Claim~\ref*{#1}}}
\newrefformat{que}{\savehyperref{#1}{Question~\ref*{#1}}}
\newrefformat{op}{\savehyperref{#1}{Open Problem~\ref*{#1}}}
\newrefformat{fn}{\savehyperref{#1}{Footnote~\ref*{#1}}}
\newrefformat{property}{\savehyperref{#1}{Property~\ref*{#1}}}

\usepackage{xspace}
\usepackage{amsmath}
\usepackage{amsthm}
\usepackage{bbm}
\usepackage{algorithm}
\usepackage[noend]{algpseudocode}
\usepackage{bm}
\usepackage{makecell}
\usepackage{tabulary}
\usepackage{subcaption}

\DeclareMathOperator*{\argmin}{argmin} 

\newcommand{\calD}{\mathcal{D}}
\newcommand{\calL}{\mathcal{L}}
\newcommand{\calN}{\mathcal{N}}

\newcommand{\calR}{\mathcal{R}}
\newcommand{\calS}{\mathcal{S}}

\newcommand{\calRO}{\mathcal{O}}

\newcommand{\EE}{\mathbb{E}}

\newcommand{\calX}{\mathcal{X}}

\newcommand{\gray}[1]{{\color{gray}#1}}

\newtheorem{theorem}{Theorem}
\newtheorem{lemma}{Lemma}
\newtheorem{definition}{Definition}
\newtheorem{prop}{Proposition}
\newtheorem{remark}{Remark}

\theoremstyle{definition}




\usepackage[most]{tcolorbox}


\begin{document}

\title{Optimizing Privacy, Utility, and Efficiency in Constrained Multi-Objective Federated Learning}

\author{Yan Kang$^*$}
\email{yangkang@webank.com}
\affiliation{%
  \institution{WeBank}
  \country{China}}

\author{Hanlin Gu$^*$}
\email{allengu@webank.com}
\affiliation{%
  \institution{WeBank}
  \country{China}
}

\author{Xingxing Tang}
\email{xtangav@cse.ust.hk}
\affiliation{%
  \institution{Hong Kong University of Science and Technology}
  \country{China}
}

\author{Yuanqin He}
\email{yuanqinhe@webank.com}
\affiliation{%
  \institution{WeBank}
  \country{China}}

\author{Yuzhu Zhang}
\email{yz-zhang22@mails.tsinghua.edu.cn}
\affiliation{%
  \institution{Shenzhen International Graduate School, Tsinghua University}
  \country{China}
}

\author{Jinnan He}
\email{hjn22@mails.tsinghua.edu.cn}
\affiliation{%
  \institution{Shenzhen International Graduate School, Tsinghua University}
  \country{China}
}

\author{Yuxing Han}
\email{yuxinghan@sz.tsinghua.edu.cn}
\affiliation{%
  \institution{Shenzhen International Graduate School, Tsinghua University}
  \country{China}
}

\author{Lixin Fan}
\email{lixinfan@webank.com}
\affiliation{%
  \institution{WeBank}
  \country{China}
}

\author{Kai Chen}
\email{kaichen@cse.ust.hk}
\affiliation{%
  \institution{Hong Kong University of Science and Technology}
  \country{China}
}

\author{Qiang Yang$\dagger$}
\email{qyang@cse.ust.hk}
\affiliation{%
  \institution{WeBank and Hong Kong University of Science and Technology}
  \country{China}
}

\renewcommand{\shortauthors}{Kang, et al.}

\def\thefootnote{*}\footnotetext{These authors contributed equally to this work}\def\thefootnote{\arabic{footnote}}
\def\thefootnote{$\dagger$}\footnotetext{Corresponding author}\def\thefootnote{\arabic{footnote}}

\begin{abstract}
Conventionally, federated learning aims to optimize a single objective, typically the utility. However, for a federated learning system to be trustworthy, it needs to simultaneously satisfy multiple/many objectives, such as maximizing model performance, minimizing privacy leakage and training cost, and being robust to malicious attacks. Multi-Objective Optimization (MOO) aiming to optimize multiple conflicting objectives simultaneously is quite suitable for solving the optimization problem of Trustworthy Federated Learning (TFL). In this paper, we unify MOO and TFL by formulating the problem of constrained multi-objective federated learning (CMOFL). Under this formulation, existing MOO algorithms can be adapted to TFL straightforwardly. Different from existing CMOFL works focusing on utility, efficiency, fairness, and robustness, we consider optimizing privacy leakage along with utility loss and training cost, the three primary objectives of a TFL system. We develop two improved CMOFL algorithms based on NSGA-II and PSL, respectively, for effectively and efficiently finding Pareto optimal solutions, and provide theoretical analysis on their convergence. We design measurements of privacy leakage, utility loss, and training cost for three privacy protection mechanisms: Randomization, BatchCrypt (An efficient homomorphic encryption), and Sparsification. Empirical experiments conducted under the three protection mechanisms demonstrate the effectiveness of our proposed algorithms.

\end{abstract}


\begin{CCSXML}
<ccs2012>
   <concept>
       <concept_id>10010147.10010919</concept_id>
       <concept_desc>Computing methodologies~Distributed computing methodologies</concept_desc>
       <concept_significance>500</concept_significance>
       </concept>
   <concept>
       <concept_id>10010147.10010257</concept_id>
       <concept_desc>Computing methodologies~Machine learning</concept_desc>
       <concept_significance>500</concept_significance>
       </concept>
   <concept>
       <concept_id>10002978</concept_id>
       <concept_desc>Security and privacy</concept_desc>
       <concept_significance>500</concept_significance>
       </concept>
   <concept>
       <concept_id>10010405</concept_id>
       <concept_desc>Applied computing</concept_desc>
       <concept_significance>500</concept_significance>
       </concept>
 </ccs2012>
\end{CCSXML}

\ccsdesc[500]{Computing methodologies~Distributed computing methodologies}
\ccsdesc[500]{Computing methodologies~Machine learning}
\ccsdesc[500]{Security and privacy}
\ccsdesc[500]{Applied computing}

\keywords{trustworthy federated learning, multi-objective optimization, privacy}

\maketitle

\section{Introduction}




Due to the increasingly stricter legal and regulatory constraints (e.g., GDPR\footnote{GDPR is applicable as of May 25th, 2018 in all European member states to harmonize data privacy laws across Europe. \url{https://gdpr.eu/}} and HIPAA\footnote{HIPAA is a federal law of the USA created in 1996. It required the creation of national standards to protect sensitive patient health information from being disclosed without the patient’s consent or knowledge}) enforced on user privacy, private data from different users or organizations cannot be directly merged to train machine learning models. In recent years, federated learning (FL) has emerged as a practical privacy-preserving solution to tackle data silo issues without sharing users' private data. Initially, FL~\cite{mcmahan2017communication} was proposed to build models using data from millions of mobile devices. It then extended to enterprise settings where the number of participants might be much smaller, but privacy concerns are paramount~\cite{yang2019federated}. For now, FL has been widely applied to various domains such as finance~\cite{kang2021privacy,long2020federated}, healthcare~\cite{rieke2020future,dinh2022health} and advertisement~\cite{tan2020rec,yang2020federated}.

Conventionally, an FL system~\cite{McMahan2016Comm} aims to optimize a single objective, typically the utility (i.e., model performance), while treating other objectives as secondaries. However, if an FL system aims to optimize only a single objective, it likely fails to meet the requirements of other crucial objectives. For example, when a differential privacy-protected FedAvg algorithm~\cite{Wei2020feddp} wants to maximize the model performance solely, it has to reduce the noise added to protect data privacy, thereby increasing the risk of leaking privacy. For an FL system to be trusted by people (e.g., FL participants, users, and regulators), it must simultaneously fulfill multiple objectives, such as maximizing model performance, minimizing privacy leakage and training costs, and being robust to malicious attacks. We call an FL system a \textit{trustworthy federated learning system} if it optimizes the tradeoff among at least privacy, utility, and efficiency.

\begin{figure*}[!t]
\centering
\includegraphics[width=0.98\linewidth]{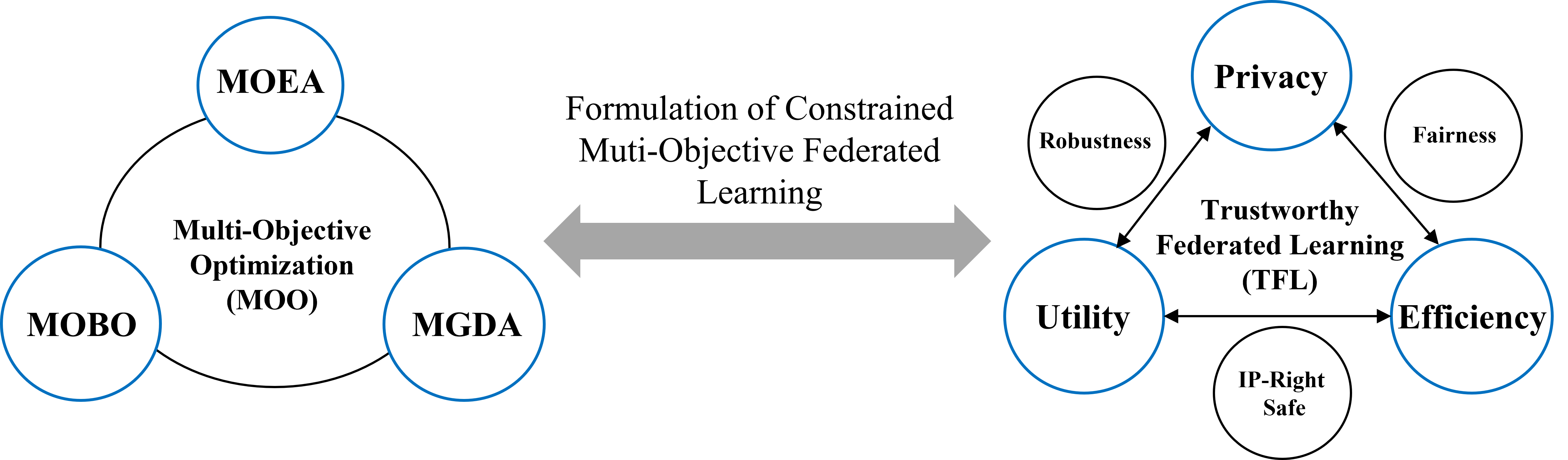}
\small
\vspace{-1em}
\caption{Formulating the problem of constrained multi-objective federated learning to unify the multi-objective optimization and trustworthy federated learning (see Sec. \ref{sec:formulation} for detail).} 
\label{fig:moo_tfl}
\end{figure*}

In this paper, we propose to apply multi-objective optimization (MOO) to optimize multiple objectives of trustworthy federated learning (TFL). To this end, we unify MOO and TFL by formulating the problem of constrained multi-objective federated learning (CMOFL), as illustrated in Figure \ref{fig:moo_tfl}. This unified formulation allows us to adapt MOO algorithms to find Pareto optimal solutions for TFL. While TFL can involve an arbitrary number of objectives, we focus on privacy, utility, and efficiency, which are primary pillars of trustworthy federated learning. The benefits of finding Pareto optimal solutions and front that optimize the three objectives include:
\begin{itemize}
    \item Pareto optimal solutions with different optimal trade-offs among conflicting objectives can support participants' flexible requirements.
    \item Solving the Pareto front for practical problems helps define the applicable boundaries of privacy protection mechanisms. For example, when DP is applied, the Pareto front can tell us that the privacy leakage is deteriorating noticeably as the protection strength reduces for maintaining utility (see Figure \ref{fig:pareto_front} (b) and (e)). Therefore, DP is unsuitable for applications in which utility and privacy are critical.
    \item  Pareto front can be considered as a comprehensive evaluation of the privacy-preserving capability of a protection mechanism concerning privacy, utility, and efficiency, and thus can guide the standard-setting for privacy levels. 
    
\end{itemize}

Existing efforts \cite{Hangyu2020nsgafl,zhong2022optimizing,Morell2022flcop,Chai04,2022arXiv220902428Y} have been devoted to leveraging multi-objective optimization algorithms to find Pareto optimal solutions that optimize utility, efficiency, robustness, and fairness. In this paper, we develop two improved constrained multi-objective federated learning algorithms based on NSGA-II~\cite{NSGA-II} and PSL~\cite{linParetoSetLearning2022}, respectively, to effectively and efficiently find Pareto optimal solutions that minimize privacy leakage, utility loss, and training cost.

In sum, our main contributions mainly include:

\begin{itemize}
    \item We formulate the problem of constrained multi-objective federated learning (CMOFL), which unifies multi-objective optimization and trustworthy federated learning. The formulation involves an average-case CMOFL and a worst-case CMOFL (see Def. \ref{def:constra-multi-objective-FL} and Def. \ref{def:worst-constra-multi-objective-FL}). The former aims to minimize the objectives of the FL system, while the latter aims to minimize the objectives of individual participants.
    \item We aim to minimize privacy leakage, along with utility loss and training cost, which are the three primary concerns of trustworthy federated learning. This is one of the first attempts in CMOFL to consider privacy an objective to optimize. 
    \item We develop two improved constrained multi-objective federated learning (CMOFL) algorithms based on NSGA-II~\cite{NSGA-II} and PSL~\cite{linParetoSetLearning2022}, respectively (Sec. \ref{sec:cmofl-algos}), which can find better Pareto optimal solutions than their baseline counterparts. The two CMOFL algorithms leverage the regret function to penalize solutions that violate privacy leakage or training cost constraint during optimization. We provide the convergence analysis of the two improved algorithms and show that the two algorithms can converge to Pareto optimal solutions with sufficient generations or population size. 
    \item We design measurements of privacy leakage, utility loss, and training cost for Randomization, BatchCrypt (an efficient version of homomorphic encryption)~\cite{zhang2020batchcrypt}, and Sparsification, respectively. Empirical experiments on each of the three protection mechanisms demonstrate the efficacy of our algorithms.
\end{itemize}
\section{Related Work}

We briefly review works related to our study of constrained multi-objective federated learning. 

\subsection{Privacy Attacking and Protection Mechanisms in FL}\label{sec:related:attack}


Federated learning protects data privacy by keeping participants' private data locally and sharing only model information (parameters and gradients). However, recent research on \textit{Deep Leakage from Gradients} (DLG)~\cite{zhu2019dlg} and its follow-up works~\cite{geiping2020inverting,zhao2020idlg,yin2021see} demonstrated that adversaries could reconstruct private data by solving a gradient-match optimization problem based on the shared model information. 

The literature has proposed various protection mechanisms to prevent private data from being inferred by adversaries. The widely used ones are \textit{Homomorphic Encryption (HE)}~\cite{gentry2009fully,batchCryp,kang2021privacy}, \textit{Differential Privacy}~\cite{geyer2017differentially,truex2020ldp,abadi2016deep}, \textit{Secret Sharing}~\cite{SecShare-Adi79,SecShare-Blakley79,bonawitz2017practical} and \textit{Sparsification} \cite{nori2021fast}. Recently, several works integrated split learning into FL~\cite{gu2021federated,Thapa2022SplitFed,wu2021fedcg} to protect privacy by splitting the local model of each party into private and public models and sharing only the public one with the server. Essentially, these works are special cases of sparsification. Another line of privacy protection  works~\cite{zhang2018mixup,huang2020instahide} is instance encoding, which transforms original training data to encoded ones such that the machine learning models are trained using encoded data, thereby preventing private data from being inferred by adversaries.

\subsection{Multi-Objective Optimization} 
Multi-objective optimization (MOO) optimizes multiple conflicting objectives simultaneously and outputs a set of Pareto optimal solutions instead of a single one~\cite{Deb2014}. A Pareto optimal solution represents a trade-off between different objectives, each of which can not be further enhanced without deteriorating others. MOO approaches can be gradient-free and gradient-based. 

Gradient-free MOO approaches do not require knowledge about the problem and thus can solve black-box optimization problems. A large volume of existing MOO works focuses on designing gradient-free MOO methods. These methods are typically based on evolutionary algorithms and Bayesian optimization. Multi-Objective Evolutionary Algorithms (MOEA), such as NSGA-II~\cite{NSGA-II}, NSGA-III~\cite{Deb2014nsga3}, SPEA2~\cite{zitzler2001spea2,kim2004spea2+} and MOEA/D~\cite{zhang2007moead}, converge fast and can find diverse Pareto solutions. However, MOEA is computationally expensive, especially when the black-box objective functions are expensive to evaluate. 

To reduce the cost of expensive objective functions, a Multi-Objective Bayesian Optimization (MOBO)~\cite{marco2002,biswas2021,yang2019945,daulton2022mobohd} algorithm approximates a surrogate model to each black-box objective function and leverages an acquisition method to search for Pareto optimal solutions. MOBO can find Pareto optimal solutions with a small objective function evaluation budget. However, the quality of the Pareto optimal solutions depends on the performance of surrogate models. 

With gradient information, gradient-based MOO approaches~\cite{desideri2012mgda,mahapatra2020mtl,liu2021stochastic} can be applied to large-scale optimization problems, such as learning neural networks. These methods mainly leverage multiple gradient descent algorithm (MSGA)~\cite{desideri2012mgda,desideri2012mgdav} to find Pareto optimal solution. The core idea of MSGA is to find the gradient with the direction that can decrease all objectives simultaneously. 


\subsection{Multi-Objective Federated Learning} \label{sec:rel_mofl}

In a multi-objective federated learning problem (MOFL), participants aim to optimize multiple competing objectives (e.g., privacy leakage, utility loss, learning cost, and fairness). MOFL typically leverages multi-objective optimization approaches to find Pareto solutions for these objectives. Existing research on MOFL mainly has two lines of direction. Table \ref{tab:mofl_survey} summarizes these works. 
\begin{table}[!h]
\footnotesize
\centering
\caption{Existing multi-objective federated learning works.}
\vspace{-0.5em}
\begin{tabular}{c||c|c|c}
\hline
\multicolumn{1}{c||}{Work} & \multicolumn{1}{c|}{Objective Scope} & \multicolumn{1}{c|}{Specific Objectives}  & \multicolumn{1}{c}{Type of Algorithm} \\
\hline
\hline
\cite{2023arXiv230109357J,NEURIPS2021_db8e1af0,2022arXiv220109917M,2022arXiv220902428Y} & Worst-case & Fairness, Utility & Gradient-based\\
\hline
\multirow{1}{*}{\cite{Hangyu2020nsgafl,zhong2022optimizing,Morell2022flcop,Chai04,2022arXiv220902428Y}} & \multirow{1}{*}{Average-case} & \multirow{1}{*}{\shortstack{Efficiency, Fairness, Robustness, Utility}} & \multirow{1}{*}{Gradient-free}\\
\hline
\end{tabular}
    \label{tab:mofl_survey}
\end{table}

The first line of MOFL works~\cite{2023arXiv230109357J,NEURIPS2021_db8e1af0,Zeou2022fedmgda,2022arXiv220109917M} treats each client's local model utility as an objective and typically aims to achieve fairness among clients by optimizing model parameters. ~\cite{Zeou2022fedmgda,NEURIPS2021_db8e1af0} leveraged Multi-Gradient Descent Algorithm (MGDA) to find a common descent direction for all objectives. \cite{2022arXiv220109917M} achieves fairness by enabling the server to assign each client a score based on a validation dataset. This gives the server the control to reward cooperative parties and punish uncooperative parties. \cite{2023arXiv230109357J} proposed adaptive federated Adam as the server optimizer to accelerate fair federated learning with bias. 

The second line of MOFL works \cite{Hangyu2020nsgafl,zhong2022optimizing,Morell2022flcop,Chai04,2022arXiv220902428Y} aims to find Pareto solutions that minimize system objectives, such as communication cost and resource consumed for performing FL training and test error of the global model. A solution represents variables that impact these system objectives. These variables can be the model structure (e.g., number of layers, number of neurons in each layer, and size of filters), training hyperparameters (e.g., learning rate, batch size, and local epoch), and communication-reduction schemes (e.g., ratio or portion of model parameters to be shared with the server and quantization bits). \cite{Hangyu2020nsgafl,zhong2022optimizing,Chai04} leveraged NSGA-II, NSGA-III and MOEA/D, respectively, to minimize communication cost and global model test error by optimizing the model structure and training hyperparameters. \cite{Morell2022flcop} also tries to minimize communication cost and global model test error but by optimizing the communication-reduction scheme.

These two lines of research apply multi-objective optimization algorithms to minimize objectives decided by federated learning parties (individually or as a whole). Another relevant but different research direction~\cite{XU2021107532,Xu2021,2022arXiv221008295L} aims to leverage federated learning to train a global surrogate model and an acquisition function using data dispersed among multiple parties. 


 

\section{Background}
We review the concepts of multi-objective optimization and trustworthy federated learning.

\subsection{Multi-Objective Optimization}

A multi-objective optimization problem is typically formulated as follows:
\begin{align}
\begin{array}{l@{\quad}l@{}l@{\quad}l}
\min\limits_{{x} \in \mathcal{X}} ( f_1({x}), f_2({x}), \ldots , f_m({x})),
\end{array}
\end{align} 
where $x$ is a solution in the decision space $\mathcal{X}$ and $\{f_i\}_{i=1}^m$ are the $m$ objectives. 

No single solution can simultaneously optimize all objectives for a non-trivial multi-objective optimization problem with conflicting objectives. Therefore, decision-makers are often interested in Pareto optimal solutions for supporting their flexible requirements. We have the following definitions of Pareto dominance, Pareto optimal solutions, Pareto set, and Pareto front.



\begin{definition}[Pareto Dominance]\label{def:pareto_dom}
Let $x_a, x_b \in \mathcal{X}$,  $x_a$ is said to dominate $x_b$, denoted as $x_a \prec x_b$, if and only if $f_i(x_a) \leq f_i(x_b), \forall i \in \{1,\ldots,m\}$ and $f_j(x_a) <  f_j(x_b), \exists j \in \{1,\ldots,m\}$.
\end{definition}

\begin{definition}[Pareto Optimal Solution]\label{def:pareto_sol}
A solution $x^{*} \in \mathcal{X}$ is called a Pareto optimal solution if there does not exist a solution $\hat{x} \in \mathcal{X}$ such that $\hat{x} \prec x^*$.
\end{definition}

Intuitively, a Pareto optimal solution represents a trade-off between conflicting objectives, each of which can not be further enhanced without negatively affecting others. All Pareto optimal solutions form a Pareto set, and their corresponding objectives form the Pareto front. The Pareto set and Pareto front are formally defined as follows.

\begin{definition}[Pareto Set and Front]\label{def:pareto_set_front}
The set of all Pareto optimal solutions is called the Pareto set, and its image in the objective space is called the Pareto front.
\end{definition}

In order to compare Pareto fronts achieved by different MOFL algorithms, we need to quantify the quality of a Pareto front. To this end, we adopt the hypervolume (HV) indicator \cite{Eckart2004hv} as the metric to evaluate Pareto fronts. Definition \ref{def:hypervolume} formally defines the hypervolume.



\begin{definition}[Hypervolume Indicator]\label{def:hypervolume}
Let $z = \{z_1,\cdots, z_m\}$ be a reference point that is an upper bound of the objectives $Y = \{y_1,\ldots, y_m\}$, such that $y_{i} \leq z_i$, $\forall i \in [m]$. the hypervolume indicator $\text{HV}_z(Y)$ measures the region between $Y$ and $z$ and is formulated as:
\begin{equation}
    \text{HV}_z(Y) = \Lambda \left( \left \{ q \in \mathbb{R}^m \big| q \in \prod_{i=1}^{m}[y_i, z_i]  \right \}\right) 
\end{equation}
where $\Lambda(\cdot)$ refers to the Lebesgue measure.
\end{definition}

Intuitively, the $\text{HV}(Y)$ can be described as the size of the space covered by $Y$. The larger the HV value, the better the $Y$.

\subsection{Trustworthy Federated Learning}
The motivation for adopting federated learning (FL) is that FL enables multiple participants to train a machine learning model that performs much better than the model trained by a single participant by leveraging the private data of all participants but without sharing them. As the research on FL goes deeper and broader, FL has encountered many new challenges. For example, FL algorithms without any protection mechanism applied are vulnerable to gradient-based privacy attacks~\cite{Lyu2022privacy}, such as the deep leakage from gradients (DLG). To address these challenges and build a trustworthy FL system, various aspects beyond utility should be considered. 
The literature has offered several proposals for what aspects a trustworthy Artificial Intelligence (AI) system~\cite{Scott2020tai,li2023tai,liu2023tai,Chen2023sai} in general and a trustworthy FL system~\cite{zhang2023survey,zhang2023trading,yang2023fedipr,jsan12010013} in particular should satisfy.
We call an FL system a \textit{trustworthy federated learning system} if it optimizes the tradeoff among at least the following three factors: privacy, utility, and efficiency.
\begin{itemize}
    \item \textbf{Privacy}. the private data of participants should be protected to prevent them from being inferred by adversaries. Privacy is of paramount importance to trustworthy federated learning. 
    \item \textbf{Utility}. The performance of the FL model should be maximized on the test data. A protection mechanism may lead to utility loss. Thus, there may have a tradeoff between privacy leakage and utility loss.
    \item \textbf{Efficiency}. The FL models should be trained or make inferences within an acceptable or controllable computation and communication cost. A protection mechanism may result in decreased efficiency. Thus, there may have a tradeoff between privacy leakage and efficiency.
\end{itemize}
\vspace{-0.5em}
\begin{remark}
We consider efficiency as an essential aspect of trustworthy FL because 
privacy protection mechanisms are often time-consuming operations, 
which makes them impractical to apply if efficiency is not optimized.
\end{remark}

\vspace{-0.4em}
Other aspects that are also crucial to trustworthy federated learning include:
\begin{itemize}
    \item \textbf{Robustness}. The FL system should tolerate extreme conditions, such as being attacked by malicious adversaries.   
    \item \textbf{Fairness}. Fairness largely impacts participants' willingness to join federated learning. We consider fairness from two perspectives: (1) the performance of FL model(s) should be maximized on each participant's test data; (2) the payoff should be fairly distributed to each participant according to their contributions to the federation~\cite{Lim2020hier,Yu2020fairinc}. 
    \item \textbf{IP-right security}. Private models' intellectual property (IP) should be protected, traced, and audited to prevent valuable model assets from being stolen or breached.
    \item \textbf{Explainability}. The decisions made by FL models should be understood by both technical FL participants and non-technical users and regulators.
\end{itemize}

\section{A Constrained Multi--Objective Federated Learning Framework}\label{sec:framework}

In this section, we formulate the constrained multi-objective federated learning problem, and define privacy leakage, learning cost, and utility loss, the three objectives we consider in this work. 
\begin{table*}[!htp]
\footnotesize
  \centering
  \setlength{\belowcaptionskip}{15pt}
  \caption{Table of Notation}
  \label{table:notation}
  \vspace{-1em}
    \begin{tabular}{cc}
    \toprule
    Notation & Meaning\cr
    \midrule\
    $X$ and $Y$ & Solutions and objective values of multi-objective optimization\\[0.1em]
    $\epsilon_{p}$, $\epsilon_u$, and $\epsilon_c$ & Privacy leakage, utility loss, and training cost  \\[0.1em]
    $W^{\calRO}_k$ & Unprotected model parameters of client $k$ \\[0.3em]
    $W^{\calD}_k$ & Protected model parameters of client $k$ \\[0.3em]
    $W_{\text{fed}}^{\calD}$ & Protected global model parameters \\[0.3em]
 $P^{\calRO}_k$ & Distribution of $W^{\calRO}_k$  \\[0.3em]
 $P^{\calD}_k$ & Distribution of $W^{\calD}_k$  \\
    \bottomrule
    \end{tabular}
\end{table*}

\subsection{General Setup}\label{sec:setup}
\noindent\textbf{FL setting and Treat Model.} We focus on \textit{horizontal federated learning} (HFL), which involves $K$ participating parties that each holds a private dataset $D_k, k \in [K]$. The server is \textit{semi-honest}, and it may launch \textit{privacy attacks} on exchanged information to infer participants' private data. To mitigate privacy leakage, each participant applies a protection mechanism to the model information that will be shared with the server.
Table \ref{table:notation} summarizes frequently used notations in this paper.

The training procedure of trustworthy federated learning with $K$ clients involves at least following four steps (also see Figure \ref{fig:mofl} (a)):
\begin{enumerate}[label=\circled{\arabic*}]
\item With the global model downloaded from the server, each client $k$ trains its local model using its private data set $D_k$, and obtains the local model $W_k^{\calRO}$.

\item In order to prevent semi-honest adversaries from inferring other clients' private information $D_k$ based on shared model $W_k^{\calRO}$, each client $k$ adopts a protection mechanism $\mathcal{M}$ (e.g., DP and HE) to convert model $W_k^{\calRO} $ to protected model $W_k^{\calD}$, and sends $W_k^{\calD}$ to the server. 

\item The server aggregates $W_k^{\calD}, k=1,\cdots,K$ to generate a new global model $W_{\text{fed}}^{\calD}$.

\item Each client $k$ downloads the global model $W_{\text{fed}}^{\calD}$ and trains its local model based on $W_{\text{fed}}^{\calD}$. (note that if the protection mechanism is HE, each client need to decrypt $W_k^{\calD}$ before local training).
\end{enumerate}

The steps \circled{1}-\circled{4} iterate until the algorithm reaches the termination condition.\\
\vspace{-1.0em}

\subsection{Problem Formulation}\label{sec:formulation}
Conventionally, the $K$ participants of FL aim to collaboratively minimize a \textit{single} objective, typically the test error of global model $W_{\text{fed}}$. This federated optimization problem is formulated as~\cite{McMahan2016Comm}:
    \begin{equation}\label{equ:fl}
        \min_{W_{\text{fed}}}f(W_{\text{fed}}) \qquad\text{where}\qquad f(W_{\text{fed}})=\frac{n_k}{n} \sum_{k = 1}^K F_{k}(W_{\text{fed}}),
    \end{equation} 
where $n_k$ denotes the size of the dataset $D_k$, $n=\sum_{k=1}^K n_k$, and $F_{k}(W_{\text{fed}}) = \frac{1}{|D_k|}\sum_{i\in D_k} \ell(W_{\text{fed}}, D_{k,i})$ is the loss of predictions made by the model parameter $W_{\text{fed}}$ on dataset $D_k$, in which $D_{k,i}$ denotes the $i$-th data-label pair of client $k$.                                          


However, the formulation in Eq (\ref{equ:fl}) does not satisfy the demand of \textit{trustworthy} federated learning (TFL), which may involve \textit{multiple/many} objectives, such as utility loss, privacy leakage, training cost, and robustness. These objectives are typically conflicting with each other and cannot be minimized to optimal simultaneously. In addition, TFL participants typically have constraints on objectives for the final solutions to be practical or feasible in real-world federated learning applications. For example, participants may require the leaked privacy to be within an acceptable threshold or the training time within a reasonable amount of time. Multi-objective optimization is used to optimize multiple objectives at the same time while considering constraints. Thus, it is perfectly suitable to solve the optimization problem of trustworthy federated learning. We unify MOO and TFL, and formulate the constrained multi-objective federated learning problem as follows.

\begin{definition}[Average-case Constrained Multi-Objective Federated Learning] \label{def:constra-multi-objective-FL}
We formalize the average-case Constrained Multi-Objective Federated Learning (CMOFL) problem as follows:
\begin{equation}
    \begin{split}
        &\min\limits_{x \in \mathcal{X}} ( f_1(x), f_2(x), \ldots , f_m(x) ) \text{, where } f_i(x) = \sum_{k=1}^Kp_{i,k}F_{i,k}(x)\text{ for $1\leq i\leq m$}, \\
        & \text{  subject to  } \,\, f_j(x) \leq \phi_j, \forall j\in \{1,\cdots,m\}
    \end{split}
\end{equation}
where $x \in \mathbb{R}^d$ is a solution in the decision space $\mathcal{X}$, $\{f_i\}_{i=1}^m$ are the $m$ objectives to minimize, $F_{i,k}$ is the local objective of client $k$ corresponds to the $i$th objective $f_i$,  $p_{i,k}$ is the coefficient of $F_{i,k}$ satisfying $\sum_{k=1}^Kp_{i,k}=1$, and $\phi_i$ is the upper constraint of $f_i$.
\end{definition}

\vspace{-0.6em}
\begin{remark}
A solution $x$ in this paper refers to a set of \textit{hyperparameters}, e.g., the learning rate, batch size, and protection strength parameters. Each hyperparameter set corresponds to specific privacy leakage, utility loss, and training cost values. We look for Pareto optimal solutions of hyperparameters that simultaneously minimize privacy leakage, utility loss, and training cost.
\end{remark}
\vspace{-0.3em}


Definition \ref{def:constra-multi-objective-FL} considers objectives that each is a weighted average of participants' local objectives. The average-case objectives can be considered as the objectives of the whole FL system. In Definition \ref{def:worst-constra-multi-objective-FL}, we also define CMOFL that optimizes the objectives of individual participants.


\begin{definition}[Worst-case Constrained Multi-Objective Federated Learning] \label{def:worst-constra-multi-objective-FL}
We formalize the worst-case Constrained Multi-Objective Federated Learning problem as follows:
\begin{equation}
    \begin{split}
        &\min\limits_{x \in \mathcal{X}} ( f_1(x), f_2(x), \ldots , f_m(x) )  \text{, where } f_i(x) = \max_{1\leq k \leq K}F_{i,k}(x)\text{ for $1\leq i\leq m$},\\
       & \text{ subject to  } \,\, f_j(x) \leq \phi_j, \forall j\in \{1,\cdots,m\}
        \end{split}
\end{equation}
where $x \in \mathbb{R}^d$ is a solution in the decision space $\mathcal{X}$, $\{f_i\}_{i=1}^m$ are the $m$ objectives to minimize and the $i$th objective $f_i, i\in[m]$ is the maximum (i.e., worst-case) of participants' $i$th local objectives $\{F_{i,k}\}_{k=1}^K$, and $\phi_i$ is the upper bound of $f_i$. 
\end{definition}

In this work, we focus on the average-case constrained multi-objective federated learning, and we are particularly interested in minimizing privacy leakage, utility loss, and training cost, the three primary concerns of trustworthy federated learning.


\subsection{Privacy Leakage, Utility Loss and Learning Cost}


To mitigate privacy leakage, each participant applies a protection mechanism to transform original model parameters $W_k^\calRO$ to a \textit{distorted} (i.e., protected) ones $W_k^\calD$ and sends $W_k^\calD$ to the server for further training. This implies that a protection mechanism impacts not only privacy leakage but also utility loss and training cost. In this section, we provide protection-mechanism-agnostic definitions of privacy leakage $\epsilon_p$, utility loss $\epsilon_u$, and training cost $\epsilon_c$. Based on these definitions, we will provide specific measurements of $\epsilon_p$, $\epsilon_u$, and  $\epsilon_c$ for Randomization, BatchCrypt, and Sparsification in Sec. \ref{sec:exp_setup} for experiments.


\noindent\textbf{Privacy Leakage.}\label{def:bayesian_privacy}
Following \cite{du2012privacy,rassouli2019optimal}, the distortion (i.e., protection) extent is defined as the distance between the distribution $P_k^\calRO$ of original model parameters $W_k^\calRO$ (i.e., $W_k^\calRO \sim P_k^\calRO$) and the distribution $P_k^\calD$ of protected model parameters $W_k^\calD$ (i.e., $W_k^\calD \sim P_k^\calD$). In this work, we leverage the distortion extent to formulate the privacy leakage:
\begin{align}\label{eq: def_of_pl}
\epsilon_p = \frac{1}{K}\sum_{k=1}^K \epsilon_{p,k} \text{,  where } \epsilon_{p,k} = 1 - \text{TV}(P_k^\calRO||P_k^\calD),
\end{align}
where $\text{TV}(\cdot||\cdot)$ denotes the Total Variation distance between two distributions. A larger distortion applied to the original model parameters leads to larger $\text{TV}(P_k^\calRO||P_k^\calD)$, thereby less privacy leakage.  
\begin{remark}
We employ the TV distance (in the range of [0, 1]) to measure the distortion extent following \cite{duchi2013local}. Zhang et al. \cite{zhang2022no} defined a Bayesian privacy leakage by measuring the information on the private data that a semi-honest adversary may infer when observing protected model information. They demonstrated that the proposed $\epsilon_p$ in Eq. (\ref{eq: def_of_pl}) serves as a lower bound for Bayesian privacy leakage (Theorem 4.1 in \cite{zhang2022no}).
\end{remark}

The privacy leakage defined in Eq. (\ref{eq: def_of_pl}) shows that a larger distortion leads to less privacy leakage. On the other hand, a larger distortion may result in larger utility loss and training cost. We define the utility loss and training cost as follows.

\noindent\textbf{Utility Loss.}
The utility loss $\epsilon_u$ of a federated learning system is defined as follows:
    \begin{equation}\label{def:utility}
        \epsilon_u = \text{U}({W_{\text{fed}}^{\calRO}}) - \text{U}(W_{\text{fed}}^{\calD}),
    \end{equation}
where $\text{U}(W_{\text{fed}}^{\calD})$ and $\text{U}(W_{\text{fed}}^{\calRO})$ measure the utility of protected global model $W_{\text{fed}}^{\calD}$ and unprotected global model $W_{\text{fed}}^{\calRO}$, respectively.


\noindent\textbf{Training Cost}
The training cost $\epsilon_c$ of a federated learning system can be divided into computation cost and communication cost, and we define it as follows:
\label{def:cost}
\begin{align*}
    \epsilon_c = \frac{1}{K}\sum_{k=1}^K (Q_{\text{comp}}+Q_{\text{comm}}),
\end{align*}
where $Q_{\text{comp}}$ measures computation cost, while $Q_{\text{comm}}$ measures communication cost.

\section{Constrained Multi-Objective Federated Learning Algorithms}\label{sec:cmofl-algos}

In this section, we provide an overview of a gradient-free multi-objective federated learning (MOFL) workflow and propose two improved MOFL algorithms for finding the Pareto optimal solutions that minimize privacy leakage, utility loss, and training cost. 

\begin{figure*}[ht!]
\centering
\includegraphics[width=0.96\linewidth]{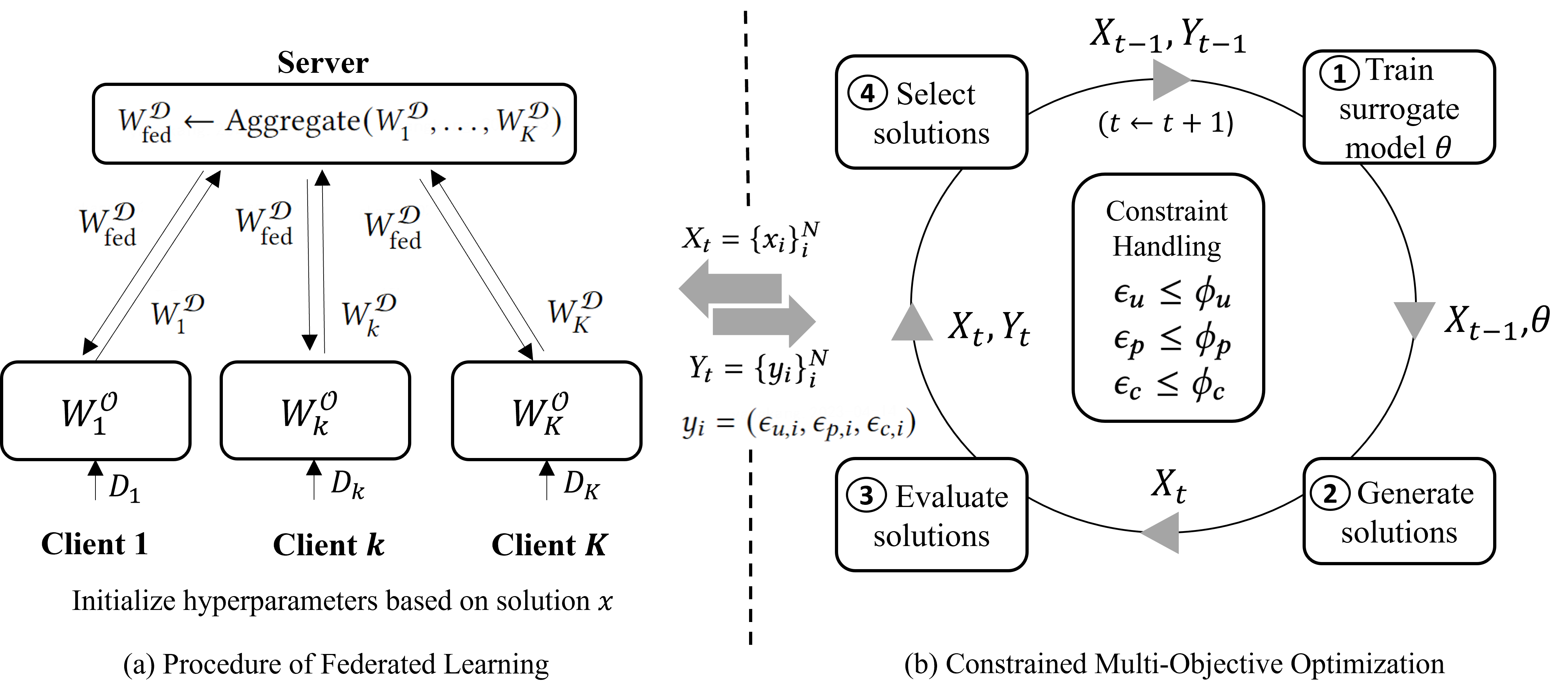}
\small
\vspace{-1em}
\caption{The Constrained Multi-Objective Federated Learning Workflow. The left panel illustrates a typical federated learning training procedure. The right panel gives the general constrained multi-objective optimization procedure involving five sub-procedures: train surrogate models, generate solutions, evaluate solutions, select solutions, and constraint handling. The select solutions sub-procedure and the evaluate solutions sub-procedure may call the federated learning procedure to obtain utility loss $\epsilon_u$, privacy leakage $\epsilon_p$, and training cost $\epsilon_c$ by evaluating a given solution $x \in \mathcal{X}$.} 
\label{fig:mofl}
\end{figure*}

Figure \ref{fig:mofl} illustrates the workflow of the general CMOFL, which consists of (i) a federated learning procedure that evaluates solutions and (ii) a multi-objective optimization (MOO) procedure that finds Pareto optimal solutions under constraints based on objective values. 

\begin{itemize}
    \item The federated learning procedure, serving as an evaluation function, evaluates given solutions to measure their corresponding objective values for privacy leakage, utility loss, and training cost. Algorithm  \ref{alg:fl} describes the detailed FL procedure in MOFL. It follows the conventional secure FL~\cite{zhang2022no} training procedure but with additional steps of measuring privacy leakage, utility loss, and training cost. More specifically, each client measures privacy leakage and training cost after local training and uploads them to the server (line 10-11 in Algo. \ref{alg:fl}), which in turn aggregates uploaded information (line 14 in Algo. \ref{alg:fl}). Upon the completion of training, the server measures the final privacy leakage, training cost, and utility loss, and then sends them to the MOO procedure (line 16-17 in Algo. \ref{alg:fl}).
    \item The MOO procedure typically involves five sub-procedures: train surrogate models, generate candidate solutions, evaluate solutions, select solutions, and constraint handling. Different MOFL algorithms implement the five sub-procedures differently and may call them in different orders. Surrogate models are involved in surrogate-based MOO algorithms such as multi-objective Bayesian optimization. They enhance optimization efficiency by assigning solutions with surrogate objective values instead of calling expensive objective functions. Evaluate solutions or select solutions subprocedure may need to call the federated learning procedure to obtain the real values of FL objectives. Constraint handling subprocedure eliminates solutions that violate constraints.
\end{itemize}

\begin{algorithm}[!ht]\vspace{-3pt}
    \caption{Federated Learning Optimization (FLO)}
	\begin{algorithmic}[1]
    \vspace{2pt}
    \Statex \textbf{Input:} Dataset $D_k$ owned by client $k \in [K]$, solutions $X$, protection mechanism $\mathcal{M}$.
    \Statex \textbf{Output:} objective values $Y$ for $X$ \vspace{4pt}
    \For{each solution $x$ $\in$ $X$}
    \State set global model structure and hyperparameters according to $x$;
    \State initialize global model parameters $W_{\text{fed}}$;
    \For{each communication round $i \in I$}
    \State \gray{$\triangleright$ \textit{Clients perform:}}
    \For{client $k\in [K]$}
    \State decode $W_{\text{fed},i} \leftarrow W_{\text{fed},i}^\calD$ (optionally) and set $W_{k,i} \leftarrow W_{\text{fed},i}$;
    \State Local Update $W_{k,i} \leftarrow W_{k,i} - \eta \nabla \calL_{k,i}$ based on $D_k$;
    \State Apply protection mechanism to obtain $W_{k,i}^\calD \leftarrow \mathcal{M}(W_{k,i})$;
   \State Measure objectives of privacy leakage $\epsilon_{p,k,i}$ and learning cost $\epsilon_{e,k,i}$;
    \State Upload the $W_{k,i}^\calD$, $\epsilon_{p,k,i}$ and $\epsilon_{e,k,i}$ to the server;
     \EndFor
     \State \gray{$\triangleright$ \textit{Server perform:}}
     \State $W_{\text{fed},i+1}^\calD$ $\leftarrow \frac{1}{K}\sum_{k=1}^{k,i} W_{k,i}^\calD$;
     \State $\epsilon_{p,i}$ $\leftarrow$ Aggregate($\epsilon_{p,k,i}, k\in[K]$);$\epsilon_{c,i}$ $\leftarrow$ Aggregate($\epsilon_{c,k,i}, k\in[K]$)
     \State Distribute the $W_{\text{fed},i+1}^\calD$ to all clients.
     \EndFor
    \State $\epsilon_{p}$ $\leftarrow$ Aggregate($\epsilon_{p,i}, i\in[I]$);\,$\epsilon_{c}$ $\leftarrow$ Aggregate($\epsilon_{c,i}, i\in[I]$)
    \State Evaluate test accuracy of $W_{\text{fed}}^\calD$ on the test dataset and calculate utility loss $\epsilon_u$.
    \State $Y \leftarrow Y + (\epsilon_p, \epsilon_c.\epsilon_u)$
\EndFor
    \State \Return $Y$;
	\end{algorithmic}\label{alg:fl}
\end{algorithm}

Built upon the general MOFL workflow, we propose two constrained multiple-objective federated learning (CMOFL) algorithms to find better Pareto optimal solutions that satisfy constraints. To this end, both algorithms leverage a regret function to penalize solutions that violate constraints on privacy or training cost during optimization. The first CMOFL algorithm is based on NSGA-II~\cite{NSGA-II}, and thus we name it CMOFL-NSGA-II. The second one is based on PSL (Pareto Set Learning)~\cite{linParetoSetLearning2022}, a multi-objective Bayesian optimization algorithm, and we name it CMOFL-PSL. 

\subsection{CMOFL-NSGA-II}

Non-dominated Sorting Genetic Algorithm II (NSGA-II) is a well-known multi-objective evolutionary algorithm proposed by~\cite{NSGA-II}. NSGA-II follows the general procedure of the genetic algorithm and is characterized by a fast non-dominated sorting approach and diversity preservation heuristics based on crowding distance. The fast non-dominated sorting approach sorts a population into different non-dominated levels, and the crowding distance measures the closeness among individuals (i.e., solutions) at the same dominance level. Individuals with a higher non-dominated level and larger crowding distance are more likely to be selected to enter the next generation. 

\begin{algorithm}[!h]
	\caption{CMOFL-NSGA-II}
	\begin{algorithmic}[1]
	\Statex \textbf{Input:} generations $T$, datasets $\{D_k\}_{k=1}^K$ owned by $K$ clients; constraints $\phi_u, \phi_p, \phi_c$; penalty coefficients $\alpha_u, \alpha_p, \alpha_c$.
    \Statex \textbf{Output:} Pareto optimal solutions and Pareto front $\{X_{T}, Y_{T}\}$
    \State Initialize solutions $\{X_{0}\}$.
    \For{each generation $t$ $=1,2,\cdots,T$} 
        \State Crossover and mutate parent solutions $X_{t-1}$ to produce offspring solutions $P$; 
        \State $R$ $\leftarrow$ Merge $X_{t-1}$ and $P$;
        \State $Y$ $\leftarrow$ FLO$(R, \{D_k\}_{k=1}^K)$ 
        \Comment{\gray{use FL to obtain objective values}}
        

        \For{each tuple ($\epsilon_u, \epsilon_p, \epsilon_c$) in $Y$} \Comment{\gray{penalize objectives that violate constraints}}
        \begin{equation}
          \epsilon_{i} = \epsilon_{i} \text{ + } \alpha_i \max\{0, \epsilon_i-\phi_i\}, i \in \{u, p, c\}
        \end{equation}
        \EndFor
        
        \State $R^S \leftarrow$ Non-dominated sorting and crowding distance sorting $R$ based on $Y$;
        \State $X_{t}$ $\leftarrow$ Select $N$ high-ranking solutions from $R^S$;
    \EndFor
         \State \Return $\{X_{T}, Y_{T}\}$
	\end{algorithmic}\label{alg:nsga_fl}
\end{algorithm}

The FL version of NSGA-II was proposed in the work~\cite{Hangyu2020nsgafl}. Our algorithm CMOFL-NSGA-II improves it by applying constraints to objectives for finding better and more practical Pareto optimal solutions. 

Algo. \ref{alg:nsga_fl} describes the training process of CMOFL-NSGA-II. The first generation starts from initial solutions $X_0$. In each generation, offspring solutions $P$ are generated by crossover and mutation on parent solutions. Then, parent solutions and their offspring solutions are merged to form current solutions $R$, the objective values $Y$ of which are obtained by calling federated learning optimization procedure (Line 5 in Algo \ref{alg:nsga_fl}). We loop over objective values in $Y$ and add penalties to those violating prespecified constraints (Line 6 in Algo \ref{alg:nsga_fl}). Next, non-dominated sorting and crowding distance sorting are performed on $R$ based on $Y$, resulting in $R^S$. Last, $N$ solutions with the highest ranking in $R^S$ are selected to enter the next generation. Upon completing $T$ generations of evolution, the algorithm returns Pareto optimal solutions and their corresponding Pareto Front.

We provide the convergence analysis for Algo. \ref{alg:nsga_fl} as follows: 


\begin{lemma} \label{lem:lemm4-maintxt} The work \cite{zheng2022first}
considers the following LOTZ and ONEMINMAX benchmarks in two multi-objective problems. Let $d$ be the dimension of the solution space.
\begin{itemize}
    \item \textbf{LOTZ:} If the population size $N$ is at least $5(d+1)$,
then the expected runtime is $O(d^2
)$ iterations and $O(Nd^2)$
fitness evaluations.
\item \textbf{ONEMINMAX:} if the population size $N$ is at least $4(d + 1)$, then the
expected runtime is $O(d \text{log} d)$ iterations and $O(Nd \text{log} d)$
fitness evaluations.
\end{itemize}
\end{lemma}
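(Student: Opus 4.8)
The plan is to reproduce the two-phase runtime argument of \cite{zheng2022first}, treating the two benchmarks in parallel since they share the same $(d+1)$-point Pareto front $\{(i, d-i) : 0 \le i \le d\}$, realized by the strings $1^i 0^{d-i}$. First I would fix notation: for both LOTZ and ONEMINMAX the front values are parametrized by an integer $i \in \{0, \ldots, d\}$, and a population $P_t$ of size $N$ is carried across generations by merging parents and offspring into a multiset $R_t$ of size $2N$, applying fast non-dominated sorting, and selecting the top $N$ individuals by rank and then by crowding distance. Because each generation performs exactly $\Theta(N)$ fitness evaluations, the fitness-evaluation bounds follow from the iteration bounds by multiplying by $N$, so the whole task reduces to bounding the expected number of generations.

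The key preservation lemma, and the main obstacle, is to show that once an objective value $v$ on the Pareto front is represented in $R_t$, the population-size condition guarantees that $v$ is still represented in $P_{t+1}$; that is, front coverage is monotone and never shrinks. I would establish this via the crowding-distance accounting of \cite{zheng2022first}: on the first (Pareto-optimal) rank, sort the individuals by $f_1$ (equivalently by $f_2$, since $f_1 + f_2 = d$ holds on the front); the two extremes receive infinite crowding distance, and within each block of individuals sharing a common objective vector only the two block-endpoints can receive positive crowding distance while the interior duplicates receive $0$. Hence the number of individuals with positive crowding distance is proportional to the $d+1$ distinct front values, and choosing $N$ at least $4(d+1)$ (respectively $5(d+1)$, to absorb the non-front ranks that LOTZ creates) forces survival of at least one representative of every covered value. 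Monotone coverage is exactly what fails for small $N$, and pinning down the constant in this crowding-distance count is the delicate part.

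With preservation in hand, the remaining step is a progress analysis that is standard drift over the front. For ONEMINMAX every bit string is Pareto optimal, so the whole run widens the covered interval $[\ell, r]$ of front values; extending it requires selecting an extreme individual as a parent and flipping one of the $\Theta(\ell)$ or $\Theta(d-r)$ eligible bits, and summing the resulting waiting times over all $d+1$ values yields a coupon-collector bound of $O(d \log d)$ generations. For LOTZ I would split into reaching the front and then covering it; both phases are governed by a LeadingOnes-type drift in which each of the $O(d)$ unit improvements costs $\Theta(d)$ expected mutations, giving $O(d^2)$ generations overall. Multiplying each iteration bound by the $\Theta(N)$ evaluations per generation produces the claimed $O(Nd\log d)$ and $O(Nd^2)$ fitness-evaluation counts, completing the sketch.
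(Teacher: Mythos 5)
The first thing to note is that the paper does not prove this lemma at all: it is quoted directly from \cite{zheng2022first} as an imported result, and Appendix B.1 merely restates it (as Lemma 4) with the same citation and no argument. So there is no in-paper proof to compare against; your attempt can only be judged as a reconstruction of the cited work. At the level of architecture, your reconstruction is faithful to that work's strategy: a preservation lemma showing that covered Pareto-front values never disappear, obtained by counting individuals with positive crowding distance; then a progress analysis (harmonic/coupon-collector sums over the front for ONEMINMAX, LeadingOnes-style $\Theta(d)$ waiting times per unit improvement for LOTZ); and finally the observation that each generation costs $\Theta(N)$ evaluations, so evaluation bounds follow from iteration bounds.

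Two gaps matter, however, because the entire content of the lemma is the explicit thresholds $4(d+1)$ and $5(d+1)$. First, the step you defer as ``the delicate part'' is the whole theorem: per distinct objective value, at most two individuals per objective-sorting can receive positive crowding distance, and since NSGA-II sorts each objective independently with arbitrary tie-breaking among equal values, the block endpoints in the $f_1$-sorting need not coincide with those in the $f_2$-sorting; this is exactly what yields the constant $4$ (not $2$) per value and hence $4(d+1)$. Your parenthetical remark that sorting by $f_1$ is ``equivalently'' sorting by $f_2$ glosses over precisely the phenomenon that determines the constant. Second, your explanation of the larger threshold $5(d+1)$ for LOTZ --- absorbing ``the non-front ranks that LOTZ creates'' --- does not withstand scrutiny: survival selection removes worse ranks entirely before touching rank 1, so lower-ranked individuals can never displace rank-1 (in particular Pareto-optimal) ones, and the rank-1 values of any LOTZ population form an antichain and so number at most $d+1$ even before the front is reached. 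As written, your argument would suggest that $4(d+1)$ suffices for LOTZ too, contradicting the statement you are supposed to prove; the true source of the extra $(d+1)$ lies in the phase-1 analysis of \cite{zheng2022first} and must be identified, not guessed. A minor slip: the front points are realized by the strings $1^i 0^{d-i}$ only for LOTZ; for ONEMINMAX every bit string is Pareto optimal, as you correctly say later.
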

Lemma \ref{lem:lemm4-maintxt} demonstrates the NSGA-II algorithm could obtain almost Pareto optimal solutions (i.e., within a small $\epsilon$ error) for LOTZ and ONEMINMAX benchmarks with sufficiently large population size $N$. Further, we provide the convergence analysis of Algo. \ref{alg:nsga_fl} when the objective values obtained by Algo. \ref{alg:nsga_fl} approaches the finite Pareto optimal objective values within $\epsilon$ error from the perspective of hypervolume.
\begin{theorem} \label{thm:thm1}
Let  $Y^*$ be the finite Pareto optimal objective values w.r.t $m$ objectives. If for any $y^* \in Y^*, \exists y^T \in Y_T$ obtained by Algo. \ref{alg:nsga_fl}, s.t.  $\|y^T-y^*\| \leq \epsilon$, then we have:
\begin{equation}
    \text{HV}_z(Y^*)-\text{HV}_z(Y_T) \leq Cm\epsilon,
\end{equation}
where $\text{HV}_z(\cdot)$ represents the hypervolume with reference point $z$, $m$ is the number of objectives and $C$ is a constant. See proof in Appendix B.1.
\end{theorem}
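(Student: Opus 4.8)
The plan is to bound the hypervolume gap by interposing a uniformly shifted copy of the Pareto front between $Y_T$ and $Y^*$, and then to control the thin ``shell'' separating the two dominated regions. Recall from \pref{def:hypervolume} that $\text{HV}_z(Y)$ is the Lebesgue measure of the dominated region $\text{Dom}(Y) := \bigcup_{y \in Y} \prod_{i=1}^m [y_i, z_i]$, the union of the axis-aligned boxes that each point carves out toward the reference point $z$. Since the claimed inequality is only an \emph{upper} bound on $\text{HV}_z(Y^*) - \text{HV}_z(Y_T)$, it suffices to prove a lower bound on $\text{HV}_z(Y_T)$; in particular I will never need the reverse inclusion $\text{Dom}(Y_T) \subseteq \text{Dom}(Y^*)$, which keeps the argument clean even if some points of $Y_T$ happen not to be dominated by $Y^*$.

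First I would convert the hypothesis into a coordinatewise estimate: from $\norm{y^T - y^*} \le \epsilon$ we get $|y^T_i - y^*_i| \le \epsilon$, and in particular $y^T_i \le y^*_i + \epsilon$ for every coordinate $i$. Define the shifted front $\hat Y^* := \{\, y^* + \epsilon \mathbf{1} : y^* \in Y^* \,\}$ with $\mathbf{1} = (1,\dots,1)$. The coordinatewise bound shows that the approximator $y^T$ of a given $y^*$ dominates $y^* + \epsilon \mathbf{1}$, so $\prod_i [y^T_i, z_i] \supseteq \prod_i [y^*_i + \epsilon, z_i]$; taking unions over all Pareto points gives $\text{Dom}(Y_T) \supseteq \text{Dom}(\hat Y^*)$ and hence $\text{HV}_z(Y_T) \ge \text{HV}_z(\hat Y^*)$. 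Consequently $\text{HV}_z(Y^*) - \text{HV}_z(Y_T) \le \text{HV}_z(Y^*) - \text{HV}_z(\hat Y^*)$, and it remains to bound the right-hand side, namely the measure of the shell $\text{Dom}(Y^*) \setminus \text{Dom}(\hat Y^*)$.

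To bound this shell I would use subadditivity together with a per-point product estimate. Any point of $\text{Dom}(Y^*) \setminus \text{Dom}(\hat Y^*)$ lies in some box $\prod_i [y^*_i, z_i]$ yet in no shifted box; in particular it is excluded from the shifted copy of the very $y^*$ that dominates it, so the shell is contained in $\bigcup_{y^* \in Y^*} \big( \prod_i [y^*_i, z_i] \setminus \prod_i [y^*_i + \epsilon, z_i] \big)$. Writing $a_i = z_i - y^*_i$ and $D = \max_{i, y^*}(z_i - y^*_i)$, each per-point difference has measure $\prod_i a_i - \prod_i (a_i - \epsilon)_+$, where the clamp $(\cdot)_+$ absorbs shifted boxes that degenerate when $y^*_i + \epsilon > z_i$. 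This telescopes as $\sum_{j=1}^m \big(a_j - (a_j-\epsilon)_+\big)\, \prod_{i<j} a_i \prod_{i>j}(a_i - \epsilon)_+ \le m \epsilon D^{m-1}$, since every factor is at most $D$ and $0 \le a_j - (a_j - \epsilon)_+ \le \epsilon$. Summing over the finitely many points of $Y^*$ and applying subadditivity of the Lebesgue measure yields $\text{HV}_z(Y^*) - \text{HV}_z(\hat Y^*) \le |Y^*|\, m\, \epsilon\, D^{m-1} = C m \epsilon$ with $C := |Y^*|\, D^{m-1}$, a constant fixed by the number of Pareto points and the diameter of the bounded region below $z$.

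The main obstacle is the geometry in the last step: the dominated regions are unions of heavily overlapping boxes, so the volume of the set difference of the two \emph{unions} is not the sum of the per-box volume differences. The decisive observation is the containment of the shell in the union of per-point shells, which legitimizes replacing the intractable volume of the union's difference by a plain subadditive sum; everything after that is the elementary telescoping product bound. A secondary point to state carefully is that the estimate is meaningful only because $Y^*$ is finite and contained in a bounded region, so that $|Y^*|$ and $D$ are finite --- which is exactly the finiteness hypothesis in the statement, and is what lets the geometric factors be folded into the constant $C$.
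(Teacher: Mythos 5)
Your proof is correct, and its core mechanism is the same as the paper's proof in Appendix B.1: compare each Pareto point's box against an $\epsilon$-shifted copy, bound the per-point volume difference linearly in $\epsilon$, and sum over the finitely many points of $Y^*$. The difference lies in execution, and yours is the tighter of the two. The paper only computes the per-pair difference $\prod_i(z_i-y_i^*)-\prod_i(z_i-y_i^T)$ for matched pairs and then passes directly to the hypervolumes of the two unions of boxes with no justification; your interposed shifted front $\hat{Y}^*$, the inclusion $\mathrm{Dom}(Y_T)\supseteq\mathrm{Dom}(\hat{Y}^*)$, the containment of the shell $\mathrm{Dom}(Y^*)\setminus\mathrm{Dom}(\hat{Y}^*)$ in the union of per-point shells, and subadditivity of the Lebesgue measure supply exactly the missing step from single boxes to unions. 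Likewise, the paper normalizes to $(0,1)^m$, bounds the expansion by $(1+\epsilon)^m-1$, and recovers $m\epsilon$ only through the limit $\epsilon\to 0$, whereas your telescoping identity gives the non-asymptotic bound $m\epsilon D^{m-1}$ with the explicit constant $C=|Y^*|\,D^{m-1}$, and your clamp $(\cdot)_+$ correctly handles shifted boxes that become empty, a case the paper silently ignores. You also avoid the paper's questionable equality $|z_i-y_i^*|-|z_i-y_i^T|=|y_i^T-y_i^*|$ (which should be an inequality) by working directly with the coordinatewise bound $y_i^T\le y_i^*+\epsilon$, which is all the argument actually needs.
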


\subsection{CMOFL-PSL}


Pareto Set Learning (or PSL)~\cite{linParetoSetLearning2022} is a multi-objective Bayesian optimization (MOBO) algorithm. It learns a Pareto set model to map any valid preference to corresponding solutions and builds independent Gaussian process models to approximate each expensive objective function. Based on the Pareto set model and Gaussian process model, surrogate objectives are scalarized using the weighted Tchebycheff approach. Solving the Tchebycheff scalarized subproblem with specific trade-off preferences is equivalent to finding Pareto optimal solutions.



\begin{algorithm}[!ht]\vspace{-3pt}
	\caption{CMOFL-PSL}
	\begin{algorithmic}[1]
    \vspace{2pt}
	\Statex \textbf{Input:} generations $T$, datasets $\{D_k\}_{k=1}^K$ owned by $K$ clients; constraints $\phi_u, \phi_p, \phi_c$; penalty coefficients $\alpha_u, \alpha_p, \alpha_c$.
    \Statex \textbf{Output:} Pareto front $\{X_{T}, Y_{T}\}$ \vspace{4pt}
    \State Initialize solutions $\{X_{0},Y_{0}\}$.
   \For{each generation $t$ $=1,2,\cdots,T$} \vspace{2pt}
        \State Train Pareto Set model $h_\theta$ and Gaussian process model $g$ using $\{X_{t-1}, Y_{t-1}\}$; 
        \State $R$ $\leftarrow$ Generate $m$ candidate solutions using $h_\theta$ 
        \State Compute surrogate objective values for candidate solutions $R$: $\hat{Y}$ $\leftarrow$ $g(R)$;


        \For{each tuple ($\epsilon_u, \epsilon_p, \epsilon_c$) in $\hat{Y}$} \Comment{\gray{penalize objectives that violate constraints}}
            \begin{equation}
              \epsilon_i = \epsilon_i + \alpha_i \max\{0, \epsilon_i-\phi_i\}, i \in \{u, p, c\}
            \end{equation}
        \EndFor
        
        \State $X$ $\leftarrow$ Select $N$ solutions from $R$ with the highest hypervolume improvement based on $\hat{Y}$;        
        \State $Y \leftarrow$ FLO$(X, \{D_k\}_{k=1}^K)$ 
        \Comment{\gray{use FL to obtain real objective values}}
        
        \State $\{X_{t}, Y_{t}\}$ $\leftarrow$ $\{X_{t-1}, Y_{t-1}\}$ + $\{X, Y\}$; 
    \EndFor
         \State \Return $\{X_{T}, Y_{T}\}$
	\end{algorithmic}\label{alg:psl_fl}
\end{algorithm}

We implement our FL version of PSL based on work~\cite{linParetoSetLearning2022} and name it CMOFL-PSL. Algo. \ref{alg:psl_fl} describes the training process of CMOFL-PSL: In each generation, a Pareto Set model $h_\theta$ and the Gaussian process model $g$ are first trained with the accumulated Pareto optimal solutions and front $\{X_{t-1}, Y_{t-1}\}$ (Line 3 in Algo. \ref{alg:psl_fl}. We refer readers to work~\cite{linParetoSetLearning2022} for detail). Then, $m$ ($m \gg N$) candidate solutions $R$ are generated by $h_\theta$, and their surrogate objective values $\hat{Y}$ are derived by the Gaussian process $g$. We loop over objective values in $\hat{Y}$ and add penalties to the ones that violate prespecified constraints (Line 6 in Algo. \ref{alg:psl_fl}). After that, a greedy batch selection process is applied to $R$ for selecting top $N$ solutions $X$ that have the biggest hypervolume improvement based on $\hat{Y}$. Next, the selected solutions $X$ are evaluated by calling the federated learning optimization procedure to obtain their real objective values $Y$ (Line 8 in Algo. \ref{alg:psl_fl}). Last, $X$ and $Y$ are appended to accumulated Pareto optimal solutions and front $\{X_{t-1}, Y_{t-1}\}$, resulting in $\{X_{t}, Y_{t}\}$. Upon completing $T$ generations, the algorithm returns Pareto optimal solutions and their corresponding Pareto Front.

We analyze the convergence of Algo. \ref{alg:psl_fl} as follows: 

\begin{theorem} \label{thm:conver-PSL} \cite{zhang2020random} If the type of scalarizations method is hypervolume scalarization, the hypervolume regret after $T$ observations (i.e., generations) is upper bounded by:
\begin{equation}
  \sum_{t=1}^T\big(\text{HV}_z(Y^*)- \text{HV}_z(Y_t)\big)\leq O(m^2d^{1/2}[\gamma_Tln(T)T]^{1/2}),
\end{equation}  
where $Y_t$ is obtained by Algo. \ref{alg:psl_fl} and $\gamma_T$ is a kernel-dependent quantity known as the maximum information gain. For example,
$\gamma_T = O(\text{poly }ln(T))$ for the squared exponential kernel. Furthermore,  $\text{HV}_z(Y^*)- \text{HV}_z(Y_T) \leq O(m^2d^{1/2}[\gamma_Tln(T)/T]^{1/2})$.
\end{theorem}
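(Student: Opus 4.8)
The plan is to reduce the multi-objective hypervolume regret to a collection of single-objective Bayesian-optimization regrets via the random scalarization identity underlying \cite{zhang2020random}. The starting point is the representation of the hypervolume indicator as an expectation of a scalarized best-value: there is a dimension-dependent constant $c_m$ and a family of hypervolume scalarizations $\{s_\lambda\}$ indexed by weight vectors $\lambda$ drawn uniformly from the positive orthant of the unit sphere such that
\begin{equation*}
\text{HV}_z(Y) = c_m\,\EE_{\lambda}\Big[\max_{y \in Y} s_\lambda(y)\Big].
\end{equation*}
Applying this to both $Y^*$ and $Y_t$ turns each per-round hypervolume gap into $c_m\,\EE_{\lambda}[\max_{y^*\in Y^*}s_\lambda(y^*) - \max_{y\in Y_t}s_\lambda(y)]$, i.e.\ an average over scalarization directions of a single-objective simple regret.

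For the main body I would proceed in three steps. First, fix a direction $\lambda$ and treat $s_\lambda$ composed with the objective map as a scalar black-box function; because \pref{alg:psl_fl} fits independent Gaussian-process surrogates to each $f_i$ and selects points by (penalized) hypervolume improvement, I would argue that the selected point's scalarized value is within the Gaussian-process posterior confidence width of the scalarized optimum, so the instantaneous scalarized regret is controlled by the posterior standard deviation at the queried point. Second, I would sum these instantaneous regrets over $t=1,\dots,T$ and invoke the classical information-gain argument: the sum of posterior widths at the queried points is $O(\sqrt{\gamma_T T})$, and the confidence-level inflation contributes the $\sqrt{\ln T}$ factor, giving a per-direction cumulative bound of order $\sqrt{\gamma_T \ln(T)\,T}$. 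Third, I would track the prefactors --- the normalizing constant and the Lipschitz sensitivity of $s_\lambda$ contribute the $m^2$ dependence, while the decision-space dimension $d$ enters the posterior width through a $\sqrt{d}$ factor --- and integrate the per-direction bound over $\lambda$ to recover $O(m^2 d^{1/2}[\gamma_T \ln(T) T]^{1/2})$. The ``furthermore'' last-iterate statement then follows immediately: since \pref{alg:psl_fl} accumulates all evaluated points, $Y_t \subseteq Y_T$ and hence $\text{HV}_z(Y_t) \leq \text{HV}_z(Y_T)$ for every $t\le T$, so dividing the cumulative bound by $T$ upper-bounds $\text{HV}_z(Y^*)-\text{HV}_z(Y_T)$.

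The main obstacle is the first step --- transferring the single-objective GP-bandit regret guarantee to the acquisition actually used by PSL. The standard cumulative-regret analysis is stated for an upper-confidence-bound (or Thompson) acquisition on a single scalar objective, whereas \pref{alg:psl_fl} learns a Pareto-set model \cite{linParetoSetLearning2022} and selects a batch by greedy hypervolume improvement on surrogate values. I would need to show that greedy hypervolume-improvement selection dominates, up to constants, the direction-wise confidence-bound acquisition --- essentially that improving the surrogate hypervolume improves the random scalarizations in expectation --- so that the per-round confidence-width bound still applies. A secondary technical point is verifying that the additive constraint penalties $\alpha_i\max\{0,\epsilon_i-\phi_i\}$ only shift the objectives by a Lipschitz term and therefore preserve the smoothness and bounded-norm assumptions that the Gaussian-process regret analysis requires; once that is checked, the stated rate is unaffected.
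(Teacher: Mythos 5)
Your proposal follows essentially the same route as the paper's appendix proof: the paper likewise represents the hypervolume as an expectation of hypervolume scalarizations over random directions (its Lemma quoting \cite{deng2019approximating}), bounds the scalarized cumulative Bayes regret by $m(\gamma_T \ln T)^{1/2}$ via the maximum-information-gain argument (its Lemma quoting \cite{paria2020flexible}), and then invokes \cite{zhang2020random} to assemble these into the stated $O(m^2 d^{1/2}[\gamma_T \ln(T) T]^{1/2})$ rate; your monotonicity argument for the last-iterate bound (since $Y_t \subseteq Y_T$, divide the cumulative bound by $T$) is also the standard one. The ``main obstacle'' you flag --- that PSL's greedy hypervolume-improvement selection over a learned Pareto-set model is not the random-scalarization confidence-bound acquisition for which the cited regret analysis is actually carried out --- is a genuine mismatch, but be aware the paper does not close it either: it imports the theorem from \cite{zhang2020random} wholesale, treats the federated evaluation as a black box, and never verifies that Algorithm~\ref{alg:psl_fl}'s acquisition (or the additive constraint penalties) satisfies the hypotheses of that analysis. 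So your reconstruction matches the paper's argument, and is in fact more explicit than the paper about what a self-contained proof would still require.
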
  


Theorem \ref{thm:conver-PSL} demonstrates the convergence of Algo. \ref{alg:psl_fl} according to the hypervolume. Specifically, the convergence bound increases with the increment of the number of objectives $m$ or the dimension $d$ of a solution and decreases with the increment of $T$. The multi-objective Bayesian optimization in Algo. \ref{alg:psl_fl} (CMOFL-PSL) \cite{linParetoSetLearning2022} leverages the Tchebyshev scalarizations, whose relation to Hypervolume scalarizations is presented in Appendix B.2.

\section{Experiment} \label{sec:exp}


In this section, we elaborate on the empirical experiments to verify our proposed algorithms.



\subsection{Experimental Setups}\label{sec:exp_setup}
This section details the experimental setups, including the datasets and models we adopt to run experiments, federated learning setup, multi-objective optimization algorithm setup, and experimental settings in which we conduct our experiments.

\subsubsection{Datasets and Models}
We conduct experiments on two datasets:
\textit{Fashion-MNIST}~\cite{xiao2017fmnist} and \textit{CIFAR10} ~\cite{krizhevsky2014cifar}. Fashion-MNIST has 60000 training data and 10000 test data, while CIFAR10 has 50000 training data and 10000 test data. Both datasets have 10 classes. We adopt MLP (Multilayer Perceptrons) for conducting experiments on Fashion-MNIST and a modified LeNet \cite{lecun1998gradient} on CIFAR10. 

The MLP consists of 2 hidden layers and a softmax layer. Each hidden layer has 256 neurons, and the softmax layer has 10 neurons. The modified LeNet consists of 2 convolutional layers, 2 hidden layers, and a softmax layer. Each convolutional layer has 32 channels and a kernel size of $5\times5$, each hidden layer has 128 neurons, and the last softmax layer has 10 neurons. All activation functions are ReLU. Table \ref{table:data_model} summarizes the structures of the two models.
\begin{table}[!htbp]
	\centering
	\caption{Datasets and models for experiments. $ks$ is kernel size, $fm$ is the number of feature maps.}
 \vspace{-0.5em}
	\begin{tabular}{c||c|c|c}
	        \hline
		Dataset & Model  & Convolutional layers & Fully-connected layers  \\
         \hline
         \hline
            \\[-1em]
		Fashion-MNIST & MLP & \textemdash &  $256\rightarrow256\rightarrow10$   \\
		\hline   
          \\[-1em]
            CIFAR10 &  LeNet & ($ks: 5\times5, fm: 32$) $\rightarrow$ ($ks: 5\times5, fm: 32$) &  $128\rightarrow128\rightarrow10$ \\
		\hline      
	\end{tabular}
\label{table:data_model}
\end{table}
\vspace{-10pt}

We also consider the neural network structure as a dimension to optimize for trading off privacy leakage, utility loss, and training cost. The detail is covered in Sec. \ref{sec:exp_settings}.


\subsubsection{Federated Learning Setup}
We conduct federated learning with 5 clients and focus on IID (independent and identically distributed) data setting. We assign each client 12000 training data for Fashion-MNIST and 10000 training data for CIFAR10 to train their local models and use 10000 test data to validate the aggregated global model. The federated optimization is performed with 10 global communication rounds among clients and 5 local epochs with a batch size of 64 on each client. All clients use the SGD optimizer to train their local models. We simulate federated learning in standalone mode.

\subsubsection{Baseline}
We consider the FL version of NSGA-II~\cite{NSGA-II} and PSL~\cite{linParetoSetLearning2022} as our baselines. We name them MOFL-NSGA-II and MOFL-PSL, respectively. NSGA-II is a well-known multi-objective evolutionary algorithm, while PSL is a novel multi-objective Bayesian optimization algorithm. We implement their FL version based on works~\cite{Hangyu2020nsgafl,linParetoSetLearning2022}. MOFL-NSGA-II and MOFL-PSL do not handle constraints. For a fair comparison, we report the Pareto optimal solutions satisfying constraints generated from MOFL-NSGA-II and MOFL-PSL.

\subsubsection{Multi-Objective Optimization algorithm Setup}

For all CMOFL methods compared in this work, we set the total number of generations (i.e., iterations) to 20 and the population size to 20 unless otherwise specified. 

\textbf{NSGA-II Setup.} We follow literature~\cite{Hangyu2020nsgafl} to set NSGA-II parameters. 
For binary chromosome, we apply a single-point crossover with a probability of 0.9 and a bit-flip mutation with a probability of 0.1. For real-valued chromosome, we apply a simulated binary crossover (SBX)~\cite{Deb1995RealcodedGA} with a probability of 0.9 and $n_c$ = 2, and a polynomial mutation with a probability of 0.1 and $n_m$ = 20, where $n_c$ and $n_m$ denote spread factor distribution indices for crossover and mutation, respectively.

\textbf{PSL Setup.}  We follow literature~\cite{linParetoSetLearning2022} to set PSL parameters. At each iteration, we train the Pareto set model $h_{\theta}$ with 1000 update steps using Adam optimizer with a learning rate of 1e-5 and no weight decay. At each iteration, we generate 1000 candidate solutions using $h_\theta$ and select the population size of solutions from the 1000 candidates.

\subsubsection{Experimental settings} \label{sec:exp_settings}
We use three experimental settings to investigate the effectiveness of our proposed algorithms: (1) use \textit{Randomization} (RD) to protect data privacy and minimize utility loss and privacy leakage; (2) use \textit{BatchCrypt} (BC)  to protect data privacy and minimize utility loss and training cost; (3) use \textit{Sparsification} (SF) to protect data privacy and minimize utility loss, privacy leakage, and training cost.
Table \ref{tab:exp_setting} summarizes the three experimental settings and following elaborates on the setup of each setting. 
\begin{table}[!h]  \renewcommand\arraystretch{1.2}
	\caption{The three experimental settings that apply Randomization (RD), BatchCrypt (BC), and Sparsification (SF), respectively, to protect data privacy. FC: fully-connected layer.}
	\label{tab:exp_setting}
\small
\vspace{-5px}
\begin{tabular}{c c||c|c|c}
			\hline  
  ~ & ~ & \multicolumn{3}{c}{Experimental Settings} \\ \hline
  ~ & ~ & RD & BC & SF \\
  \cline{3-5}
   \multicolumn{2}{c||}{\multirow{3}{*}{Objectives to minimize}} & \multirow{3}{*}{\shortstack{utility loss\\ privacy leakage}} & \multirow{3}{*}{\shortstack{utility loss\\ training cost}} & \multirow{3}{*}{\shortstack{utility loss\\ privacy leakage \\ training cost}} \\ 
   & & & & \\
   & & & & \\
   \hline
   \multicolumn{2}{c||}{\multirow{2}{*}{Constraint}} & \multirow{2}{*}{\shortstack{privacy leakage $\leq 0.8$ \\ $\alpha_p=20$}} &  \multirow{2}{*}{\shortstack{training cost $\leq 500\text{s}$ \\ $\alpha_c=20$}} & \multirow{2}{*}{\shortstack{privacy leakage $\leq 0.8$ \\ $\alpha_p=20$ }}\\
     & & & & \\
   \hline
   \hline
   \multirow{7}{*}{\shortstack{Solution \\variables}} & \multirow{2}{*}{\shortstack{Protection \\parameter}} & \multirow{2}{*}{\shortstack{$\sigma_{\text{rd}}:$[0, 1] \\ $c_{\text{clip}}:$[1, 4]}} & \multirow{2}{*}{$bs:$\{100, 200, 400, 800\}} & \multirow{2}{*}{\shortstack{$\rho:$[0, 1]; \\ $\xi:$[0, 0.99]; }} \\ 
   & & & & \\
   \cline{2-5}
   & Learning rate & [0.01, 0.3] & [0.01, 0.3] & [0.01, 0.3] \\ 
   \cline{2-5}
   & \multirow{2}{*}{\shortstack{for MLP: \\ \# of FC neurons}} & \multirow{2}{*}{\textemdash} & \multirow{2}{*}{[1, 256]} & \multirow{2}{*}{[1, 256]}  \\ 
   & & & & \\
   \cline{2-5}
   & \multirow{3}{*}{\shortstack{for LetNet:\\ \# of channels \\\# of FC neurons}} & \multirow{3}{*}{\textemdash} & \multirow{3}{*}{\shortstack{$[1, 32]$\\$[1, 128]$}} & \multirow{3}{*}{\shortstack{$[1, 32]$\\$[1, 128]$}}  \\ 
   & & & & \\
   & & & & \\
   \hline
	\end{tabular}
\end{table}

All three settings measure utility loss $\epsilon_u$ using test error:
\begin{equation} 
     \epsilon_u = 1 - \text{Acc}(W_{\text{fed}}^{\calD}),
\end{equation}
where $W_{\text{fed}}^{\calD}$ is the protected global model and $\text{Acc}(\cdot)$ evaluates the accuracy of $W_{\text{fed}}^{\calD}$ using test data. This measurement uses 1.0 as the utility upper bound and is a variant of Eq. (\ref{def:utility}).

\textbf{Randomization setting.} 
Randomization (RD) protects data privacy by adding Gaussian noise to each client's model parameters to be shared with the server. In this setting, we leverage the measurement provided in Eq. (\ref{eq:dp_privacy_measure}) to measure the privacy leakage $\epsilon_p$: 
\begin{equation} \label{eq:dp_privacy_measure}
     \epsilon_p = 1 - \min\{1, C_1\frac{\sigma_{\text{rd}}^2}{c_{\text{clip}}^2}\sqrt{d_w}\},
\end{equation}
where $\sigma_{\text{rd}}$ denotes the standard deviation of the Gaussian noise added to model parameters, $c_{\text{clip}}$ denotes the clip norm, and $d_w$ denotes the dimension of model parameters, $C_1$ is a prespecified constant and is set to 1 for Fashion-MNIST and CIFAR10. This measurement is derived from Eq. (\ref{eq: def_of_pl}). We refer readers to Appendix A for details.

In this setting, a solution contains the following variables to optimize: learning rate, standard deviation $\sigma_{\text{rd}}$, and clip norm $c_{\text{clip}}$ of Gaussian noise. We assume clients require that the privacy leakage is constrained not to go beyond $0.8$. Note that the dimension of model parameters $d_w$ serves as a constant and will not be optimized.

\textbf{BatchCrypt setting.} BatchCrypt (BC)~\cite{zhang2020batchcrypt} is a batch encryption technique that aims to improve the efficiency of homomorphic encryption. BatchCrypt quantizes model parameters into low-bit integer representations, encodes quantized parameters into batches, and then encrypts each batch in one go. 
Essentially, the BatchCrypt batch size $bs$ controls the trade-off between training cost and utility loss. We measure the training cost $\epsilon_c$ of BatchCrypt as follows:
\begin{equation}
    \epsilon_c = \frac{1}{K}\sum_{k=1}^K \left( Q_{\text{time}_1
    }(W^O_k, bs) + Q_{\text{time}_2}(\{W^{\calD}_k\}_{k=1}^K) \right)
\end{equation}
where $Q_{\text{time}_1}$ measures the time spent on training and encrypting a client model while $Q_{\text{time}_2}$ measures the time spent on aggregating encrypted clients' models.


In this setting, a solution contains the following variables to optimize: learning rate, the number of neurons in each layer for the two hidden-layer MLP (and the number of channels in each layer for LetNet), and BatchCrypt batch size $bs$ chosen in the range of $[100, 200, 400, 800]$. The training cost is constrained not to go beyond $500$ seconds.

\textbf{Sparsification setting.} 
We follow the sparsification (SF) method proposed in~\cite{Hangyu2020nsgafl}. Intuitively, the SF can be considered as a privacy protection mechanism that decomposes each client's model into a public sub-model and a private sub-model, then shares the public sub-model with the server while retaining the private sub-model locally to protect privacy. The size of the private sub-model is controlled by two hyperparameters: (1) $\rho$, the probability of connection between every two neighboring hidden layers; (2) $\xi$, the fraction of the model parameters with the smallest update to be retained at local.

Let $\mu_k$ denote the mean of values of \textit{retained} (or private) model parameters of $k_{th}$ client. 
We use the measurements provided in Eq. (\ref{equ:sf_privacy_measure}) to measure the privacy leakage $\epsilon_p$.

\begin{equation}\label{equ:sf_privacy_measure}
\epsilon_p = \frac{1}{K}\sum_{k=1}^K\left(1-\sqrt{2}(1-\exp\{- \frac{\mu_k}{C_2} \})^{1/2}\right),
\end{equation}
where $C_2$ is a prespecified constant, it is set to 8 for Fashion-MNIST and 32 for CIFAR10. This measurement is derived from Eq. (\ref{eq: def_of_pl}). We refer readers to Appendix A for the proof.

We measure training cost using the following measurement \cite{zhu2019multi}:
\begin{equation}\label{equ:sf_cost}
\epsilon_c = \frac{1}{K} \sum_{k}v_k,
\end{equation}
where $v_k$ is the number of \textit{shared} (or public) model parameters of $k_{th}$ client. A larger $v_k$ indicates more model parameters of client $k$ are shared with the server, meaning a higher training cost. 

In this setting, a solution contains the following variables to optimize: learning rate, number of neurons in each layer for MLP (or number of channels in each layer for LetNet), connection probability $\rho$ chosen in the range of $[0, 1]$ and 
$\xi$ chosen in the range of $[0, 0.99]$. Similar to RD, we assume that clients require the privacy leakage to be constrained not to go beyond 0.8.




\subsection{Main Experimental Results}\label{main_results}
We compare our proposed algorithms, CMOFL-NSGA-II and CMOFL-PSL, with their corresponding baselines, MOFL-NSGA-II and MOFL-PSL, in terms of hypervolume trends with respect to the number of generations for the three experimental settings (RD, BC, and SF). For RD and SF settings, the privacy leakage is constrained to be less than 0.8, while for SF, the training cost is constrained to be less than 500 seconds.

\begin{figure*}[!h]
\vspace{-3pt}
	\centering
      	\begin{subfigure}{0.32\textwidth}
  		 	\includegraphics[width=1\textwidth]{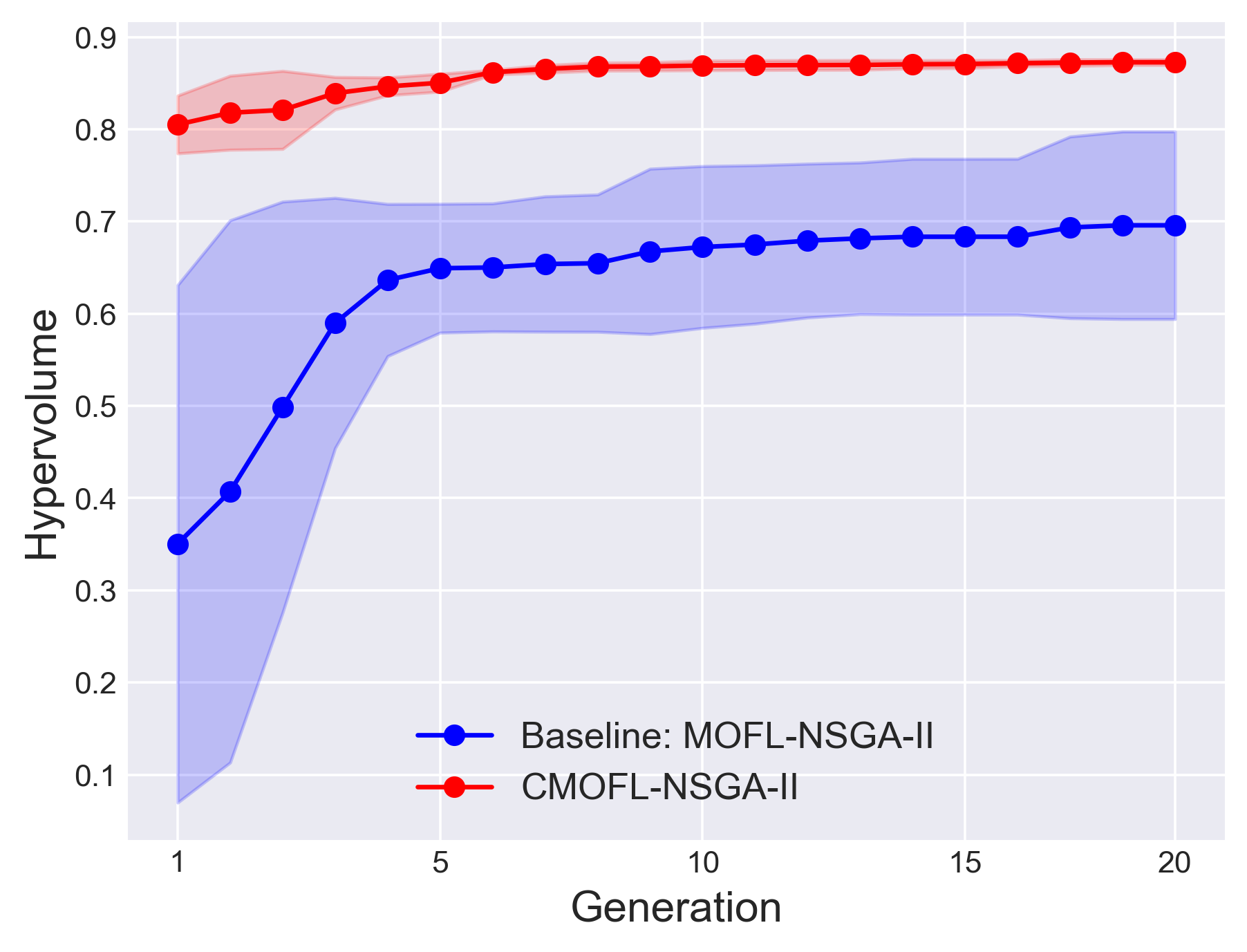}
      \subcaption{\scriptsize{BC: CMOFL-NSGA-II vs MOFL-NSGA-II}}
    		\end{subfigure}
    	\begin{subfigure}{0.32\textwidth}
  		 	\includegraphics[width=1\textwidth]{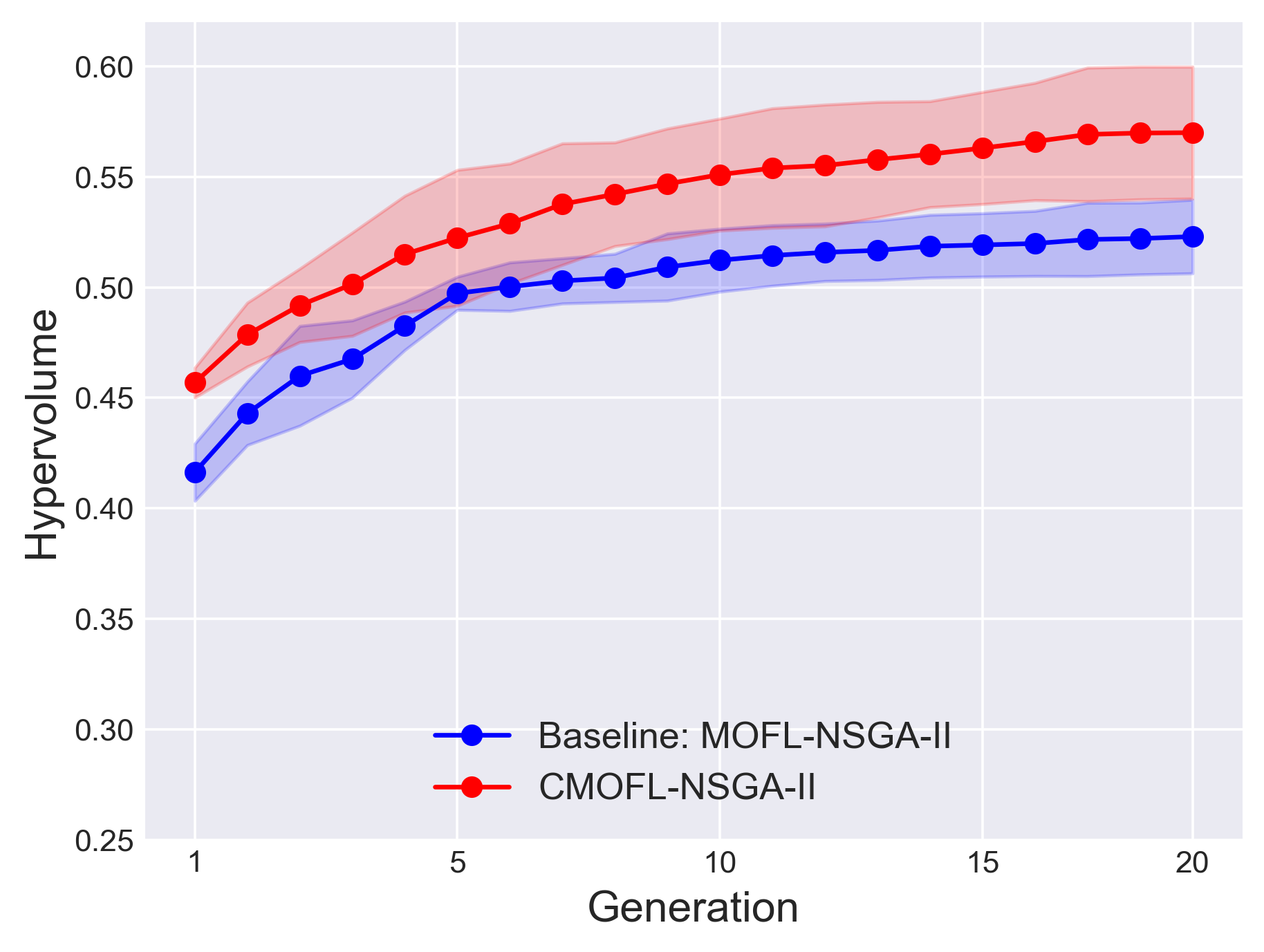}
            \subcaption{\scriptsize{RD: CMOFL-NSGA-II vs MOFL-NSGA-II}}
    		\end{subfigure}
   \begin{subfigure}{0.32\textwidth}
			\includegraphics[width=1\textwidth]{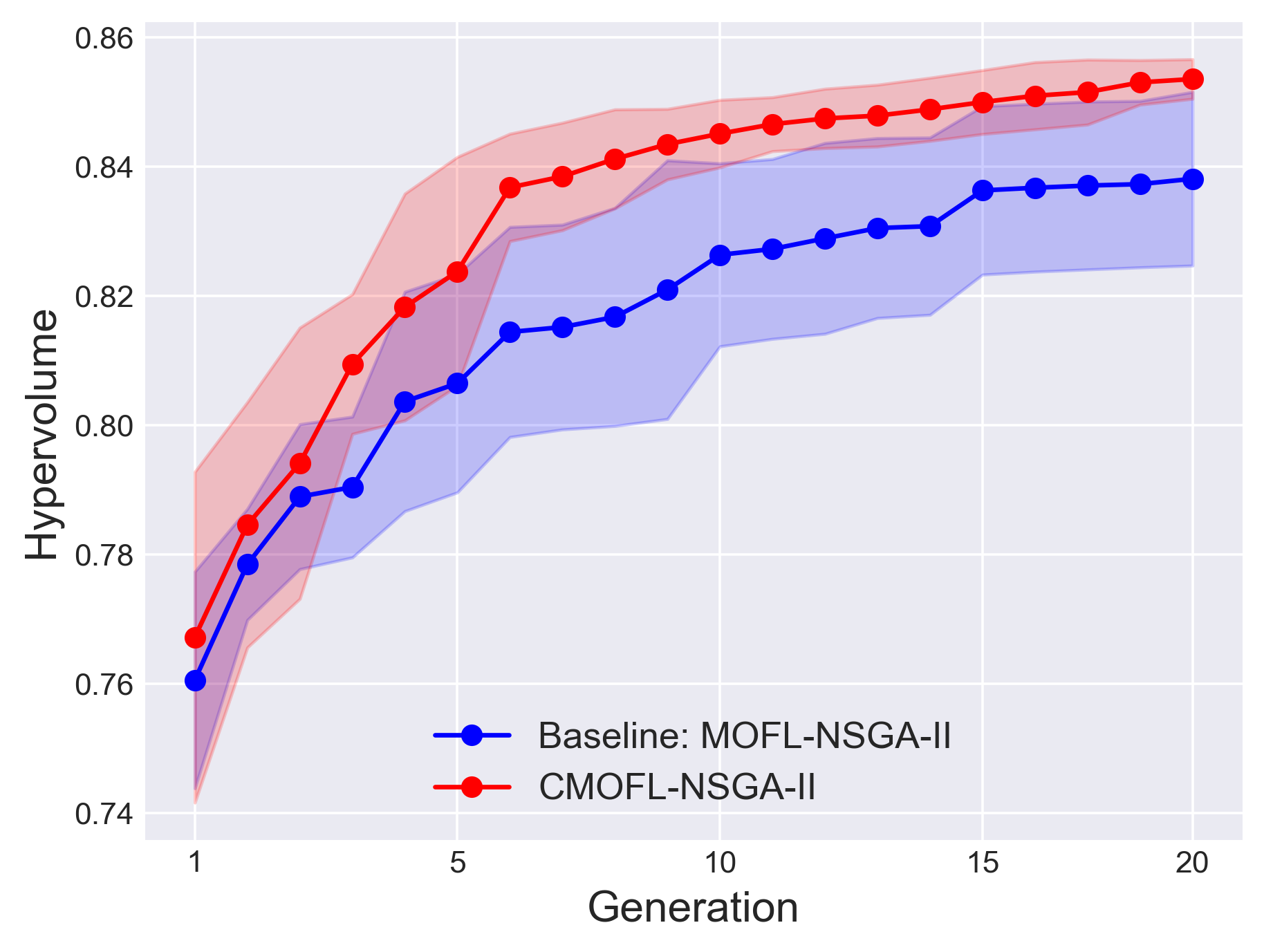}
         \subcaption{\scriptsize{SF: CMOFL-NSGA-II vs MOFL-NSGA-II}}
		\end{subfigure}

           \begin{subfigure}{0.32\textwidth}
  		 	\includegraphics[width=1\textwidth]{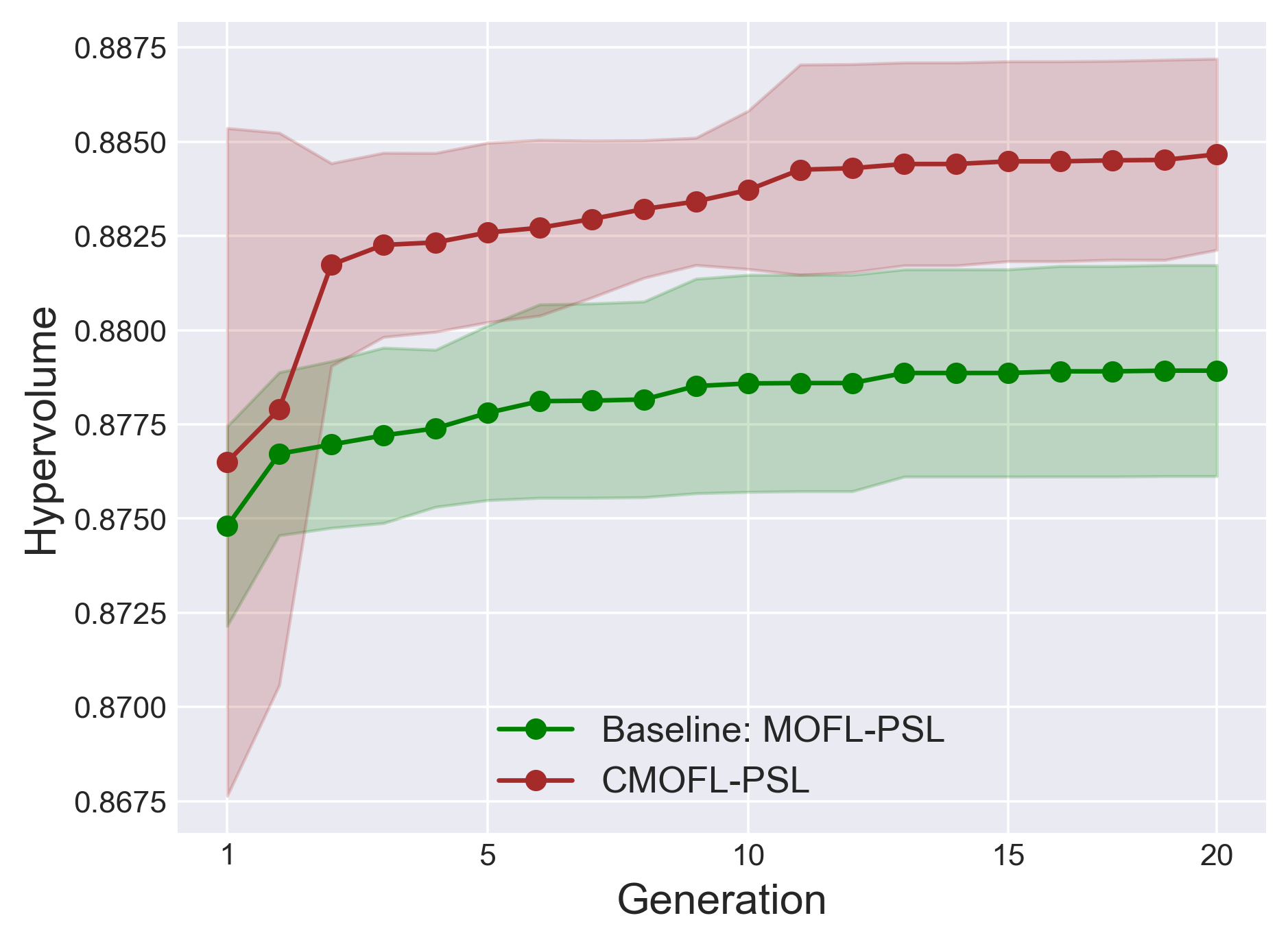}
      \subcaption{\scriptsize{BC: CMOFL-PSL vs MOFL-PSL}}
    		\end{subfigure}
    	\begin{subfigure}{0.32\textwidth}
  		 	\includegraphics[width=1\textwidth]{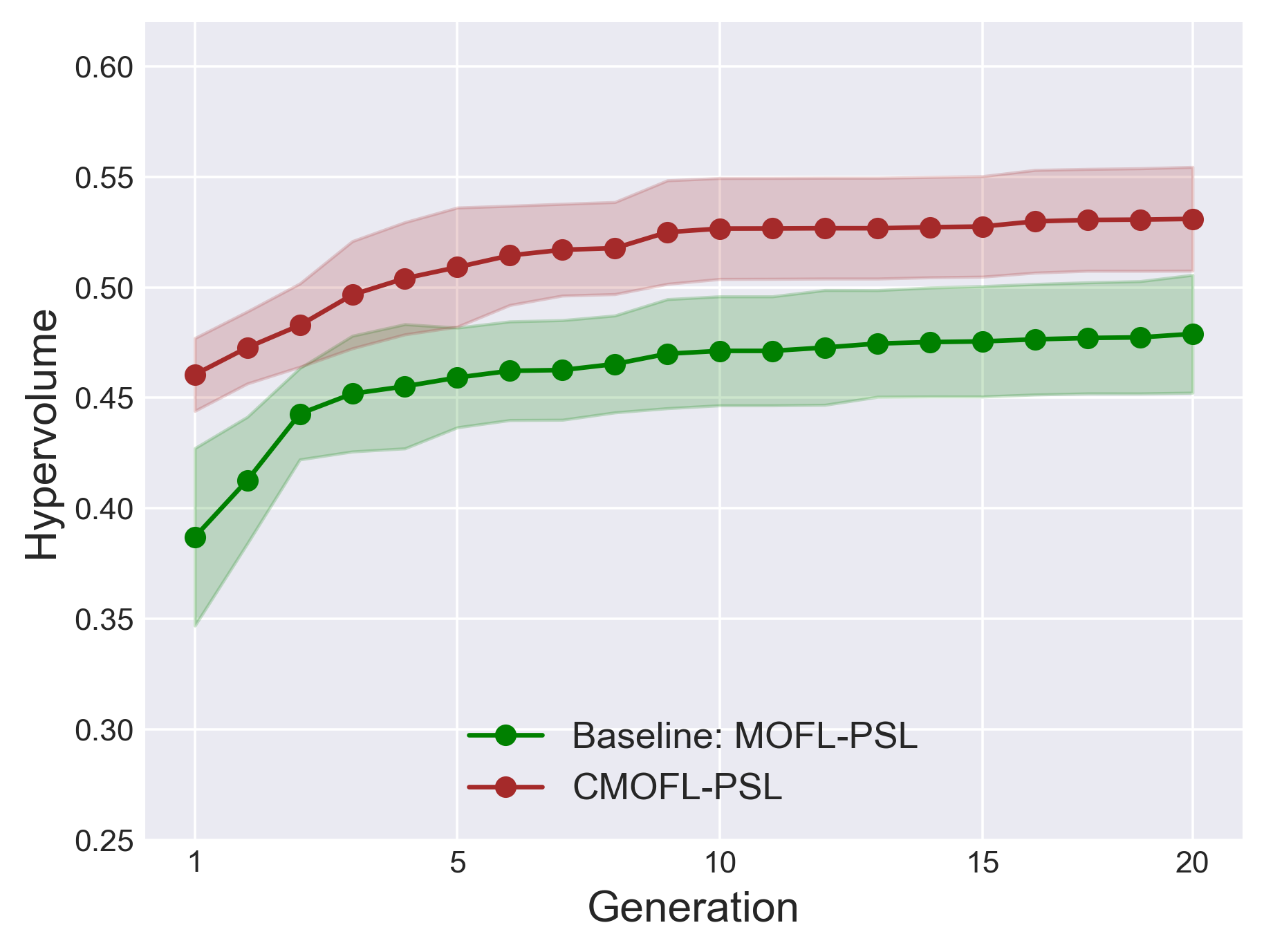}
            \subcaption{\scriptsize{RD: CMOFL-PSL vs MOFL-PSL}}
    		\end{subfigure}
   \begin{subfigure}{0.32\textwidth}
			\includegraphics[width=1\textwidth]{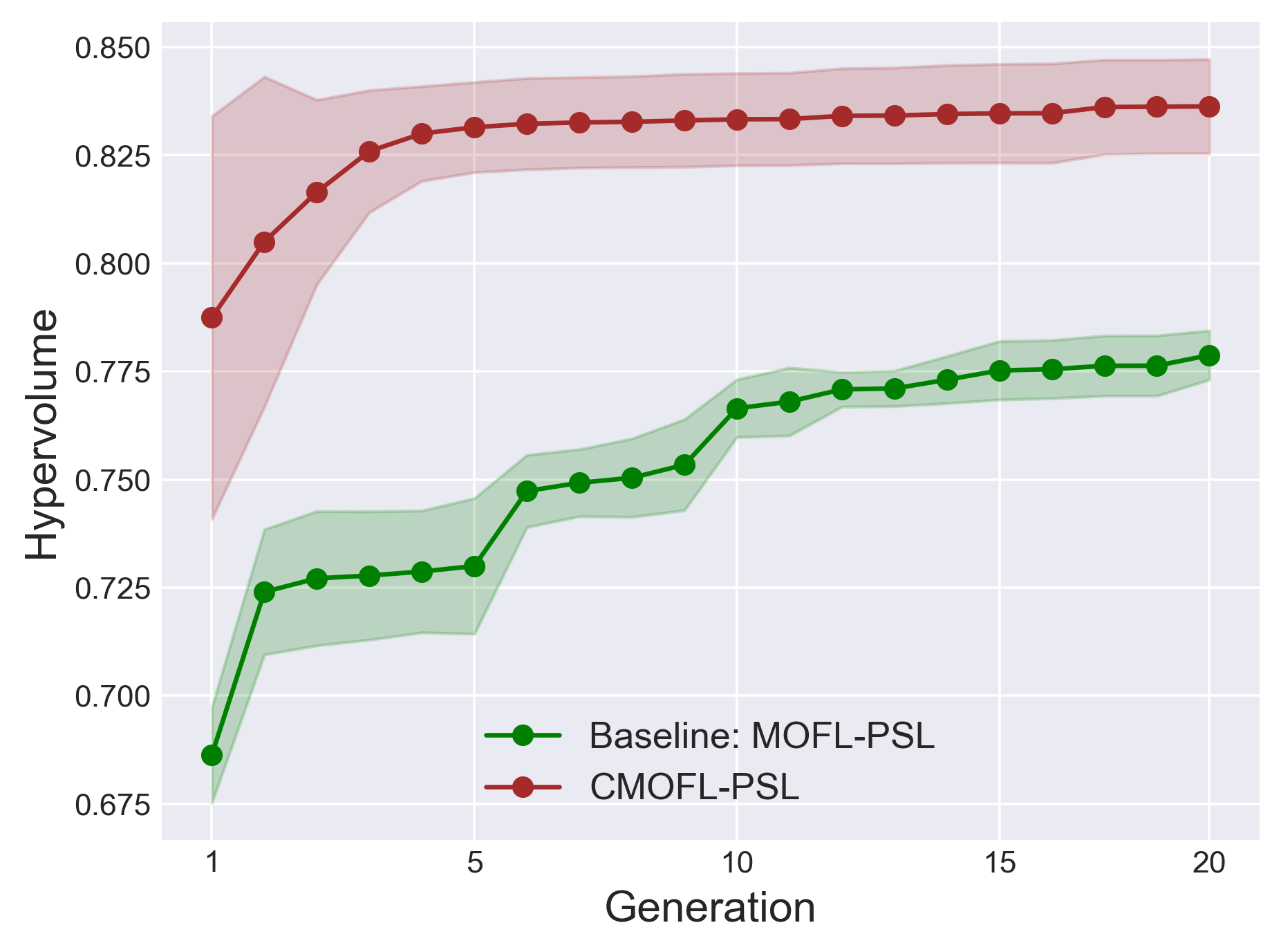}
         \subcaption{\scriptsize{SF: CMOFL-PSL vs MOFL-PSL}}
		\end{subfigure}
\vspace{-0.8em}
 \caption{Comparing hypervolume values of our proposed CMOFL algorithms with those of baseline MOFL algorithms on the Fashion-MNIST dataset for BC, RD, and SF, respectively. The first line compares CMOFL-NSGA-II and MOFL-NSGA-II for the three protection mechanisms. The second line compares CMOFL-PSL and MOFL-PSL for the three protection mechanisms.}
	\label{fig:hv_fmnist}
 \vspace{-4pt}
\end{figure*}

\begin{figure*}[!h]
\vspace{-3pt}
	\centering
      	\begin{subfigure}{0.32\textwidth}
  		 	\includegraphics[width=1\textwidth]{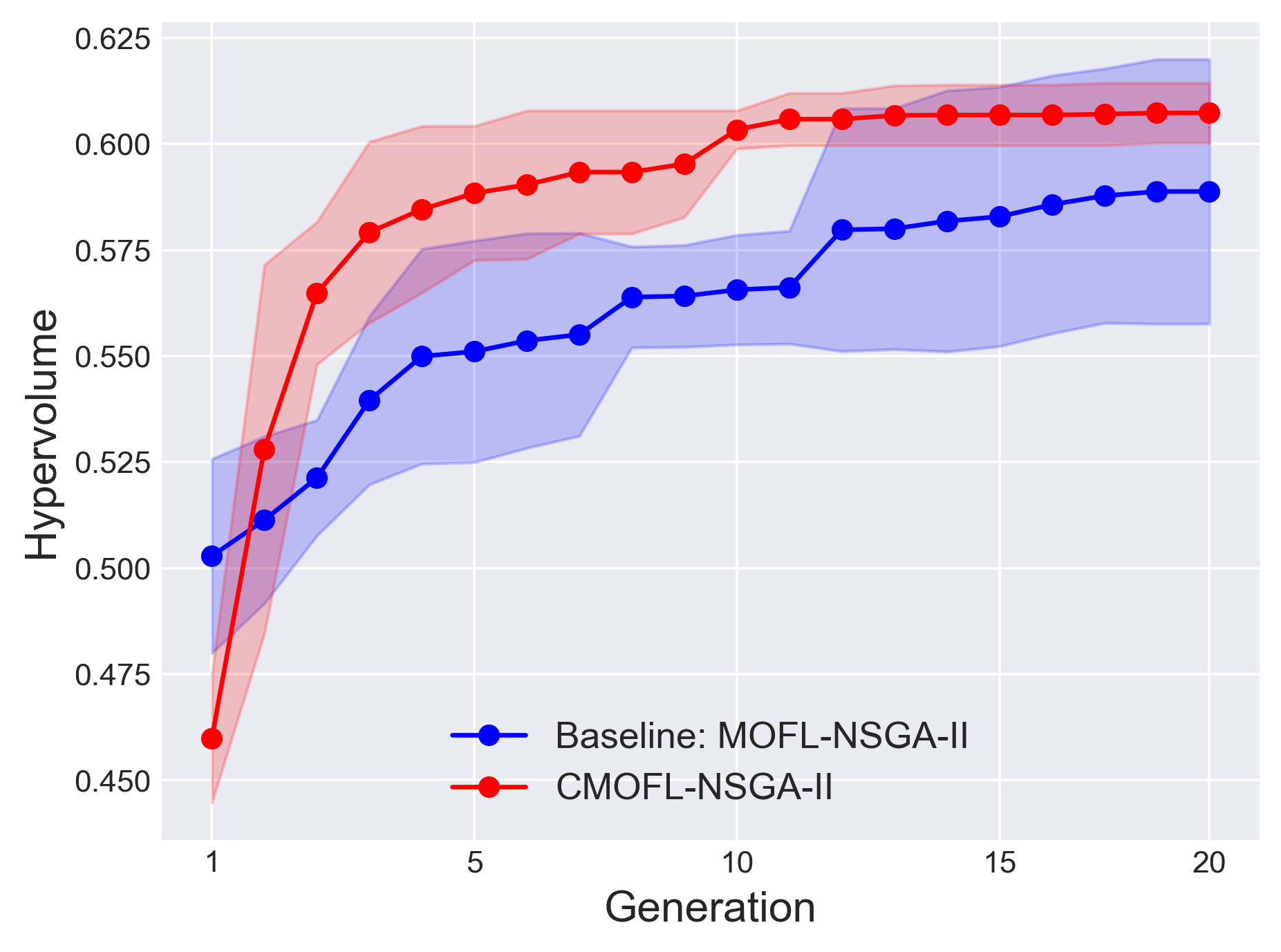}
      \subcaption{\scriptsize{BC: CMOFL-NSGA-II vs MOFL-NSGA-II}}
    		\end{subfigure}
    	\begin{subfigure}{0.32\textwidth}
  		 	\includegraphics[width=1\textwidth]{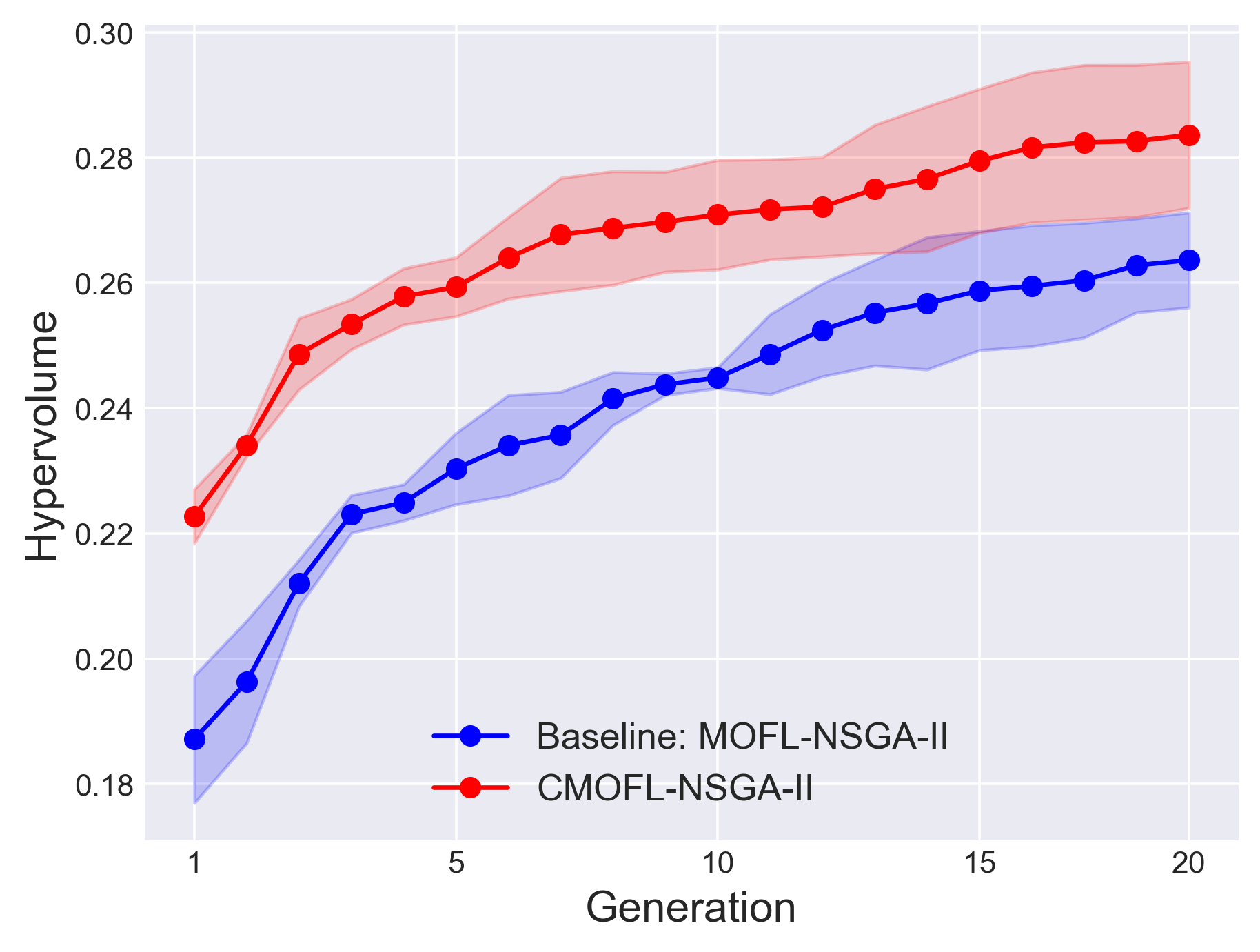}
            \subcaption{\scriptsize{RD: CMOFL-NSGA-II vs MOFL-NSGA-II}}
    		\end{subfigure}
   \begin{subfigure}{0.32\textwidth}
			\includegraphics[width=1\textwidth]{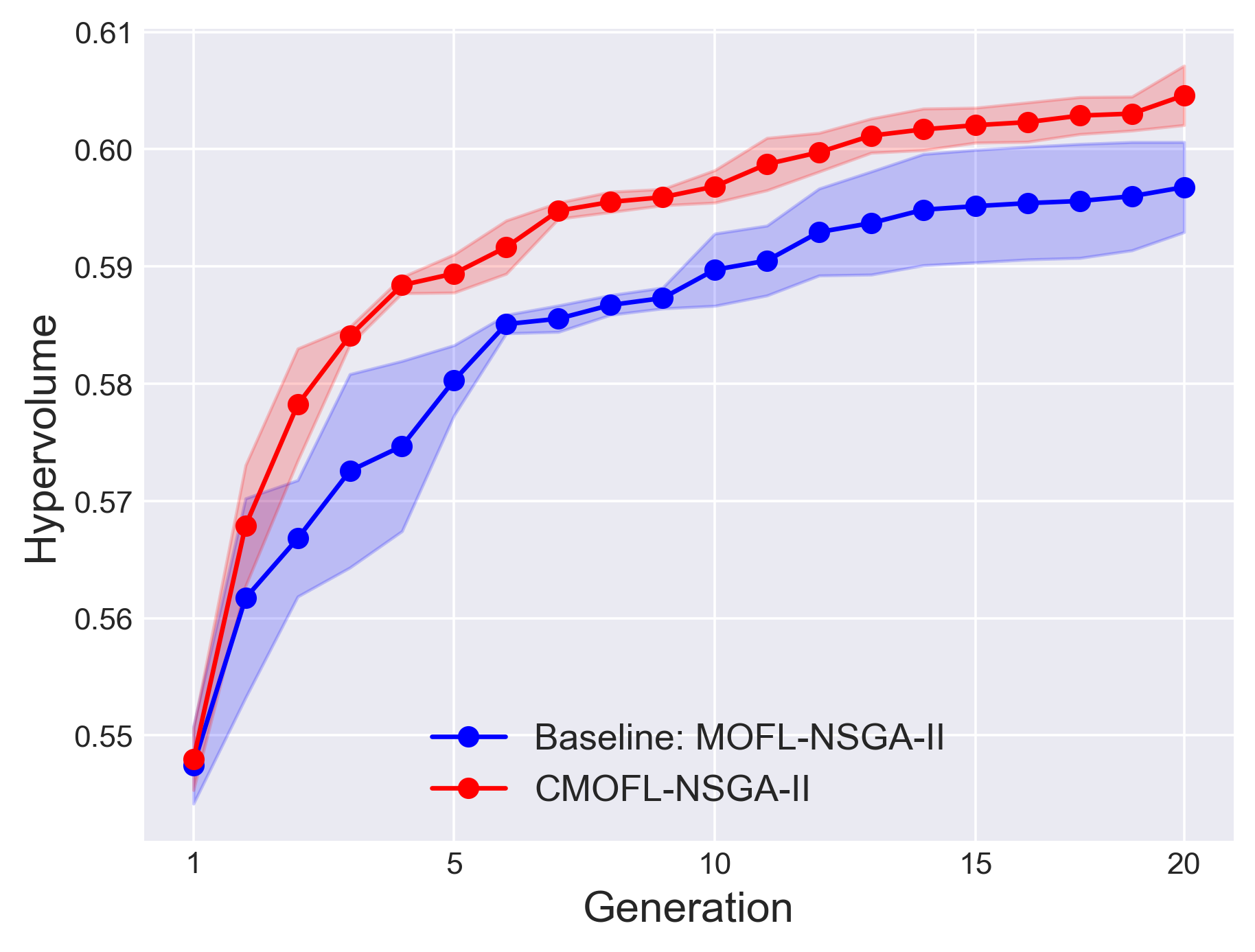}
         \subcaption{\scriptsize{SF: CMOFL-NSGA-II vs MOFL-NSGA-II}}
		\end{subfigure}

           \begin{subfigure}{0.32\textwidth}
  		 	\includegraphics[width=1\textwidth]{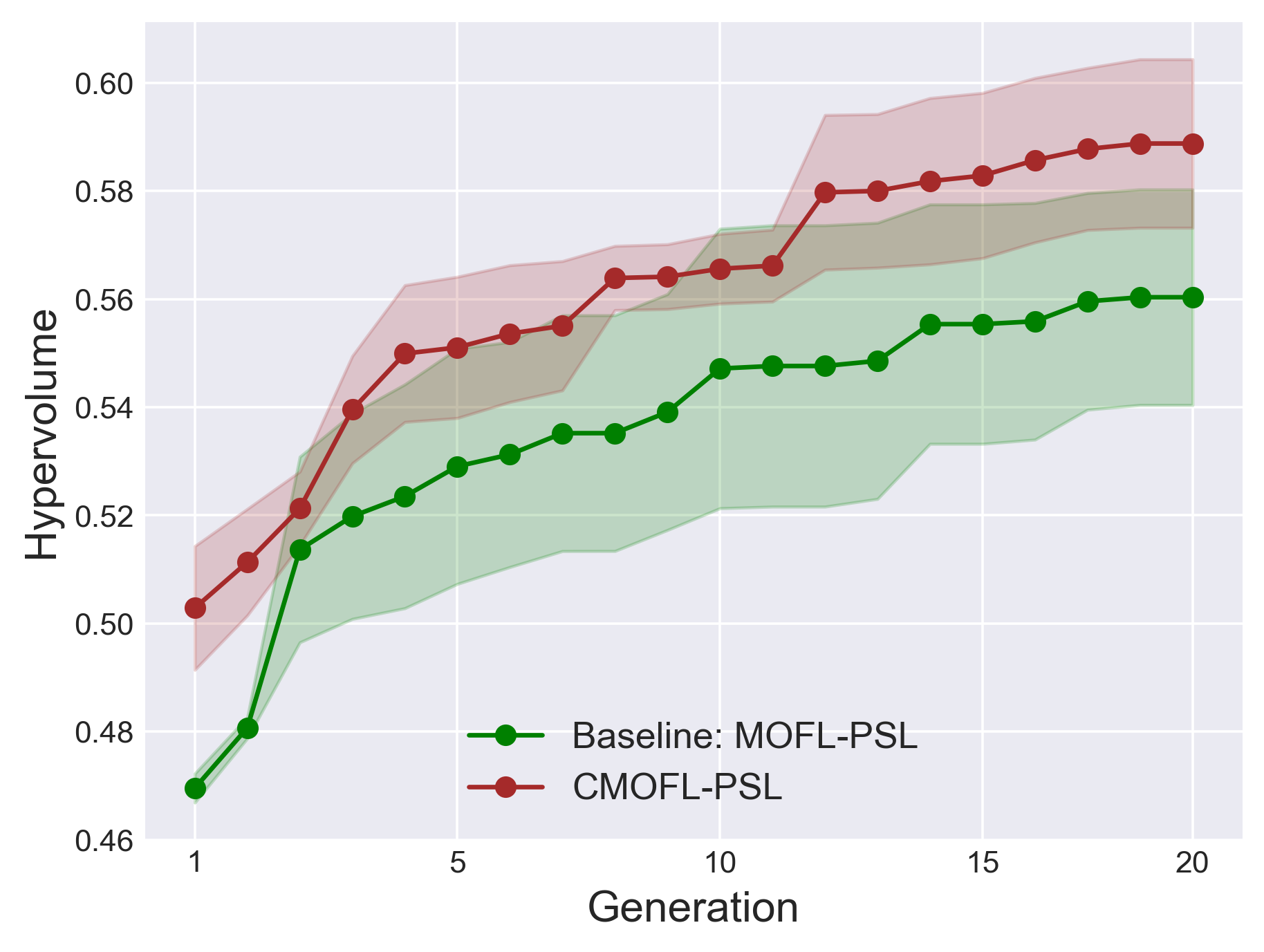}
      \subcaption{\scriptsize{BC: CMOFL-PSL vs MOFL-PSL}}
    		\end{subfigure}
    	\begin{subfigure}{0.32\textwidth}
  		 	\includegraphics[width=1\textwidth]{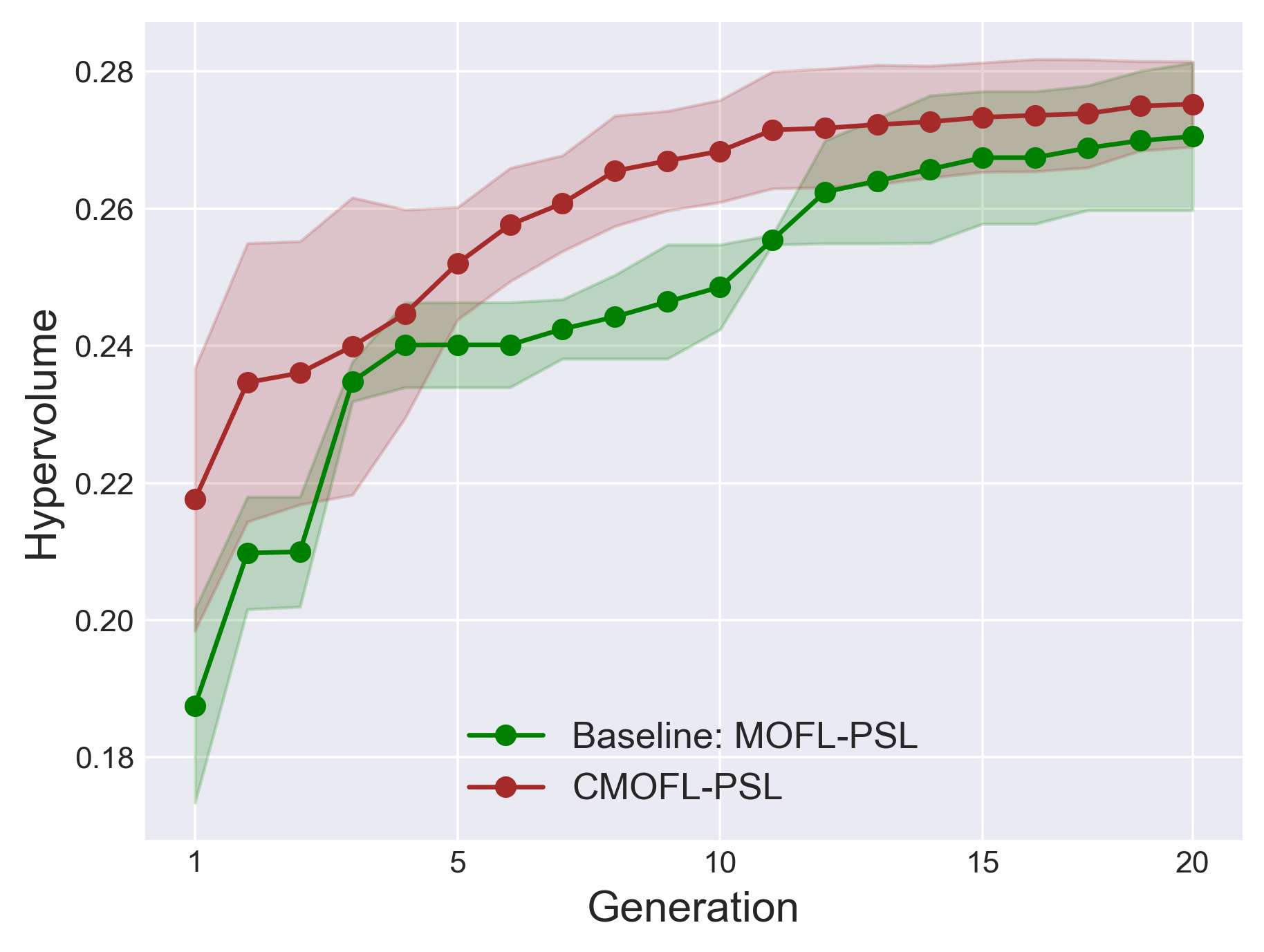}
            \subcaption{\scriptsize{RD: CMOFL-PSL vs MOFL-PSL}}
    		\end{subfigure}
   \begin{subfigure}{0.32\textwidth}
			\includegraphics[width=1\textwidth]{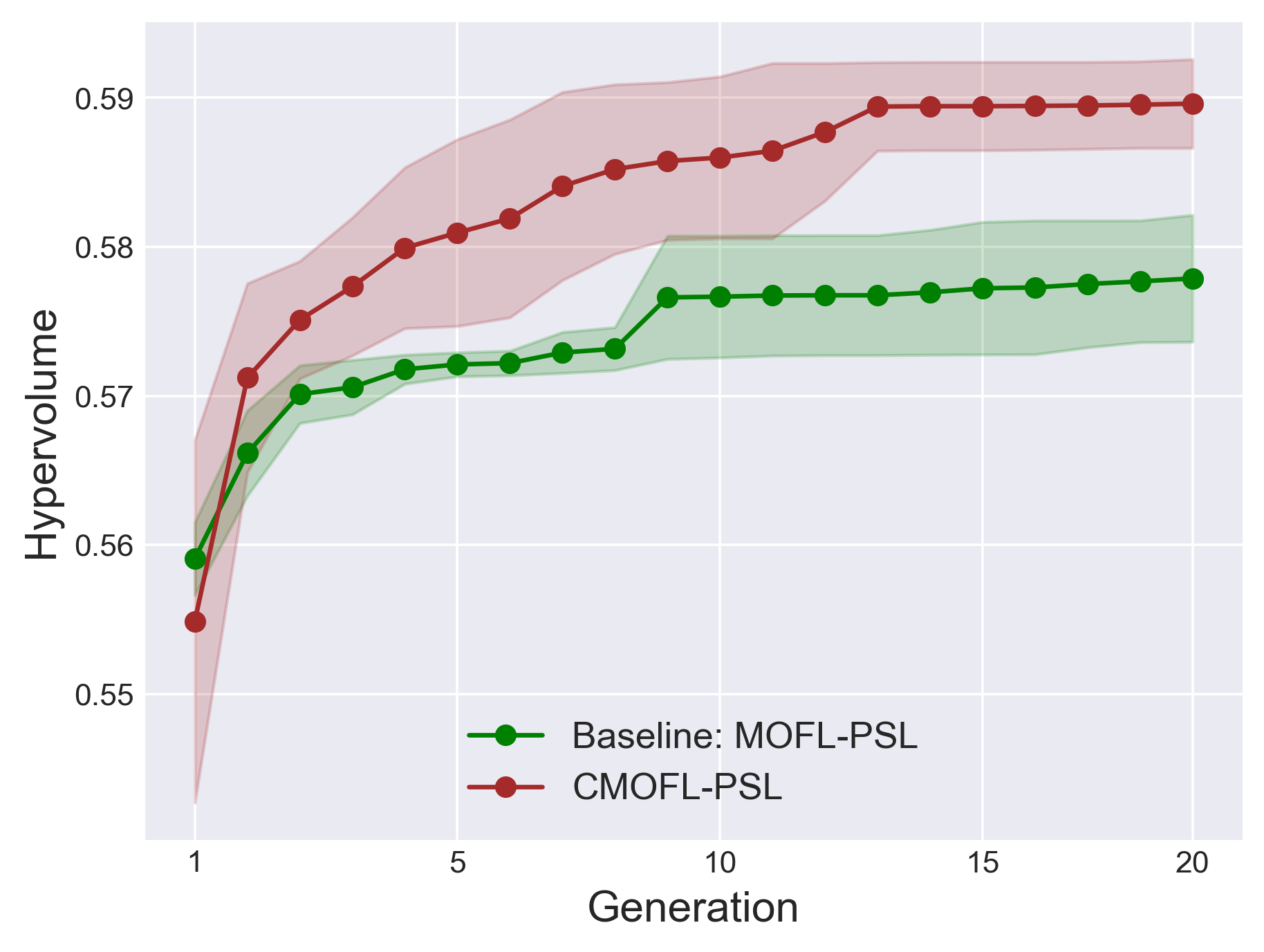}
         \subcaption{\scriptsize{SF: CMOFL-PSL vs MOFL-PSL}}
		\end{subfigure}
\vspace{-0.8em}
 \caption{Comparing hypervolume values of our proposed CMOFL algorithms with those of baseline MOFL algorithms on the CIFAR10 dataset for BC, RD, and SF, respectively. The first line compares CMOFL-NSGA-II and MOFL-NSGA-II for the three protection mechanisms. The second line compares CMOFL-PSL and MOFL-PSL for the three protection mechanisms.}
	\label{fig:hv_cifar}
 \vspace{-1em}
\end{figure*}

Figures \ref{fig:hv_fmnist} and \ref{fig:hv_cifar} illustrate the experimental results conducted on Fashion-MNIST and CIFAR10, respectively. The results are averaged over 3 different random seeds. They show that CMOFL-NSGA-II (red) and CMOFL-PSL (brown) achieve better hypervolume (HV) values than their corresponding baselines, MOFL-NSGA-II (blue) and MOFL-PSL (green), from the beginning to the 20th generation, demonstrating that CMOFL-NSGA-II and CMOFL-PSL, leveraging constraints to restrict the search for feasible solutions, can find better Pareto optimal solutions more efficiently.


    
\begin{figure*}[!h]
\vspace{-3pt}
	\centering
      	\begin{subfigure}{0.32\textwidth}
  		 	\includegraphics[width=1\textwidth]{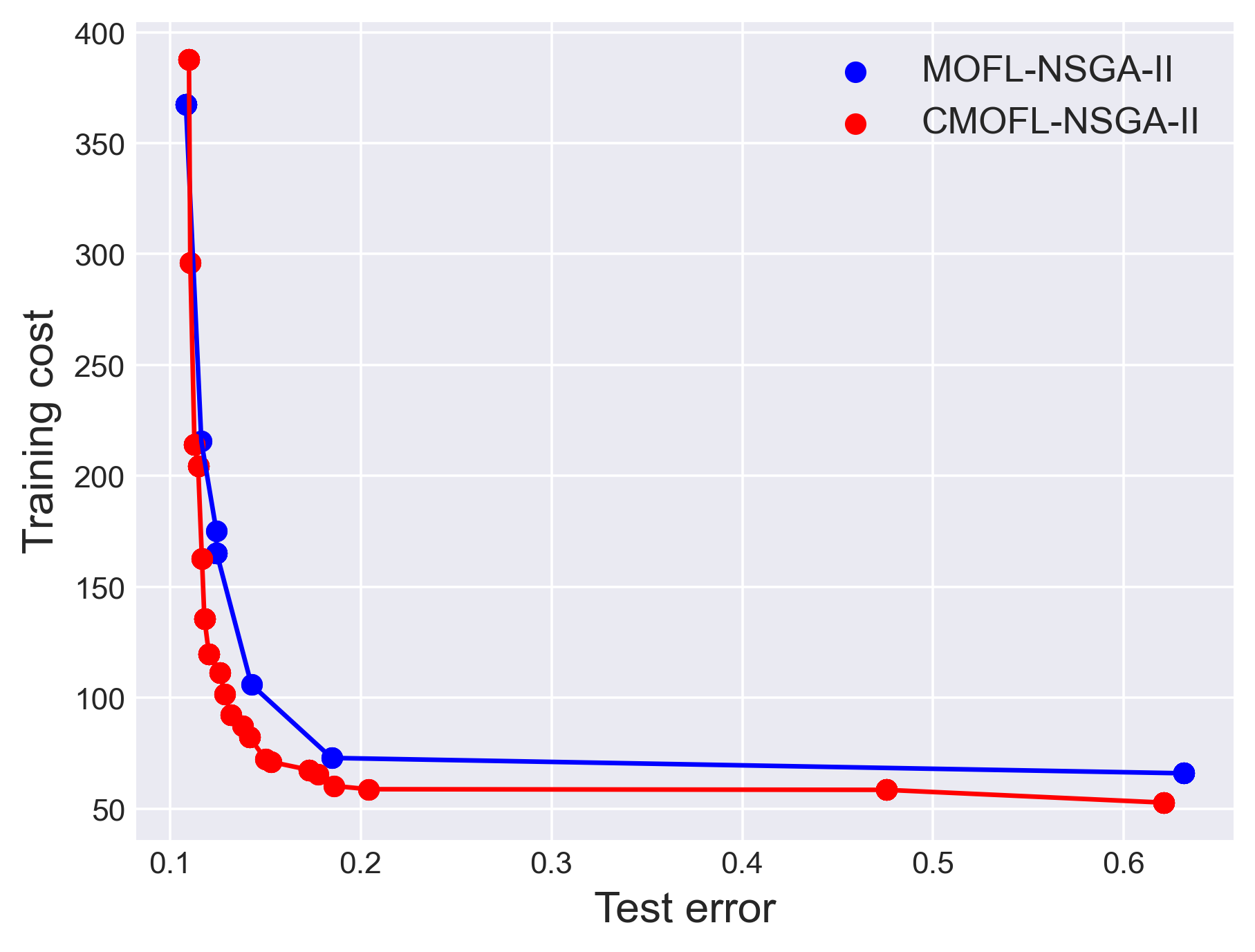}
      \subcaption{\scriptsize{BC: CMOFL-NSGA-II vs MOFL-NSGA-II}}
    		\end{subfigure}
    	\begin{subfigure}{0.32\textwidth}
  		 	\includegraphics[width=1\textwidth]{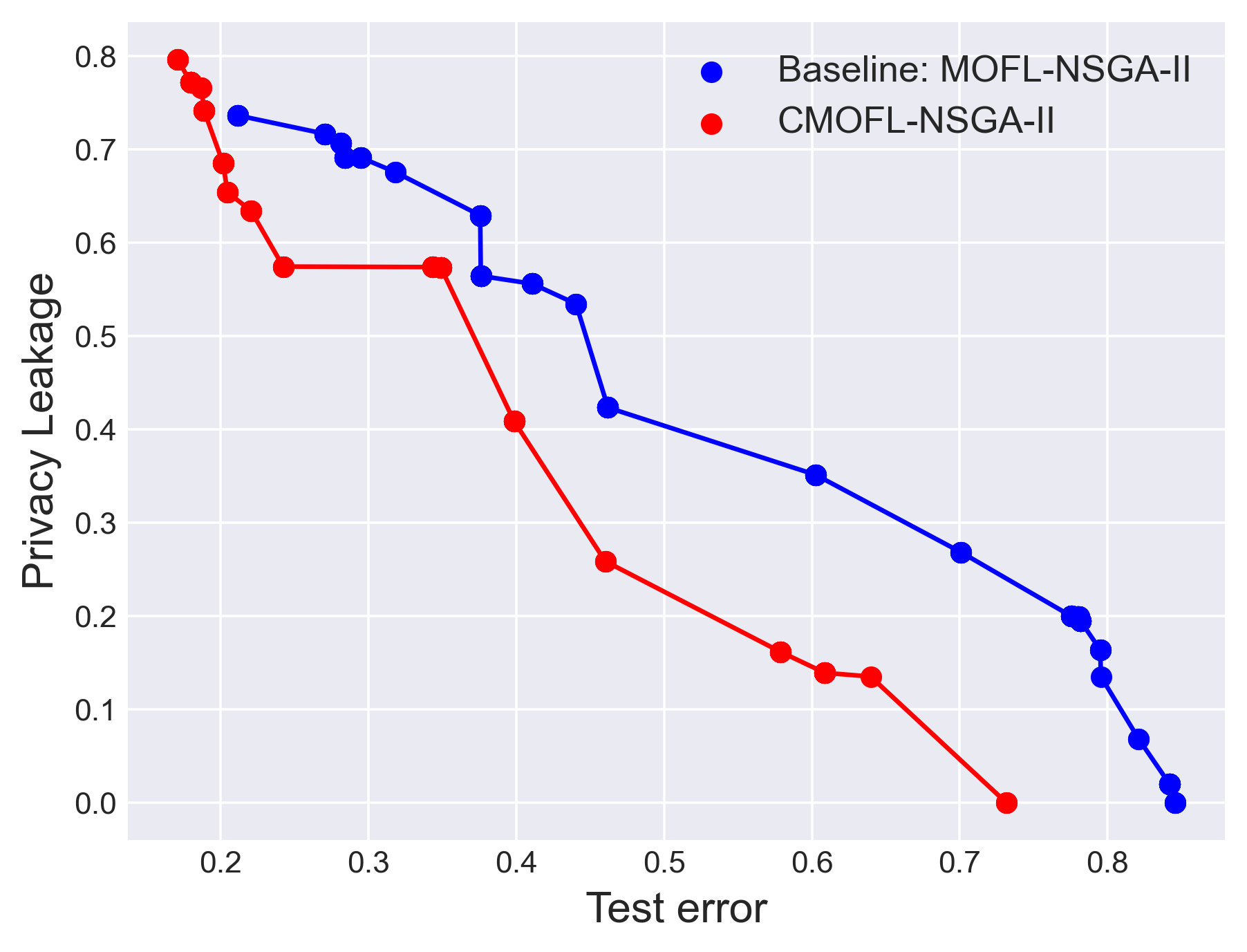}
            \subcaption{\scriptsize{RD: CMOFL-NSGA-II vs MOFL-NSGA-II}}
    		\end{subfigure}
   \begin{subfigure}{0.32\textwidth}
			\includegraphics[width=1\textwidth]{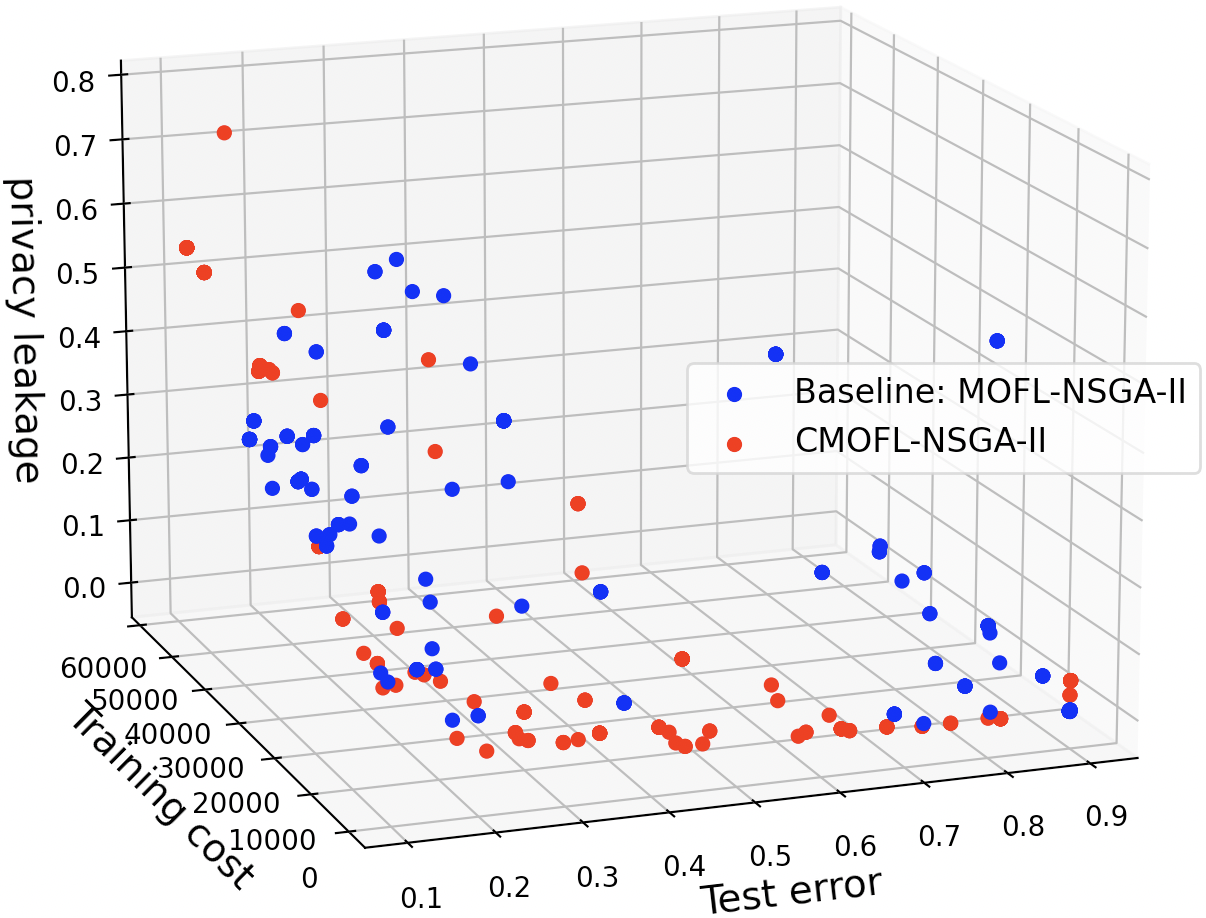}
         \subcaption{\scriptsize{SF: CMOFL-NSGA-II vs MOFL-NSGA-II}}
		\end{subfigure}

           \begin{subfigure}{0.32\textwidth}
  		 	\includegraphics[width=1\textwidth]{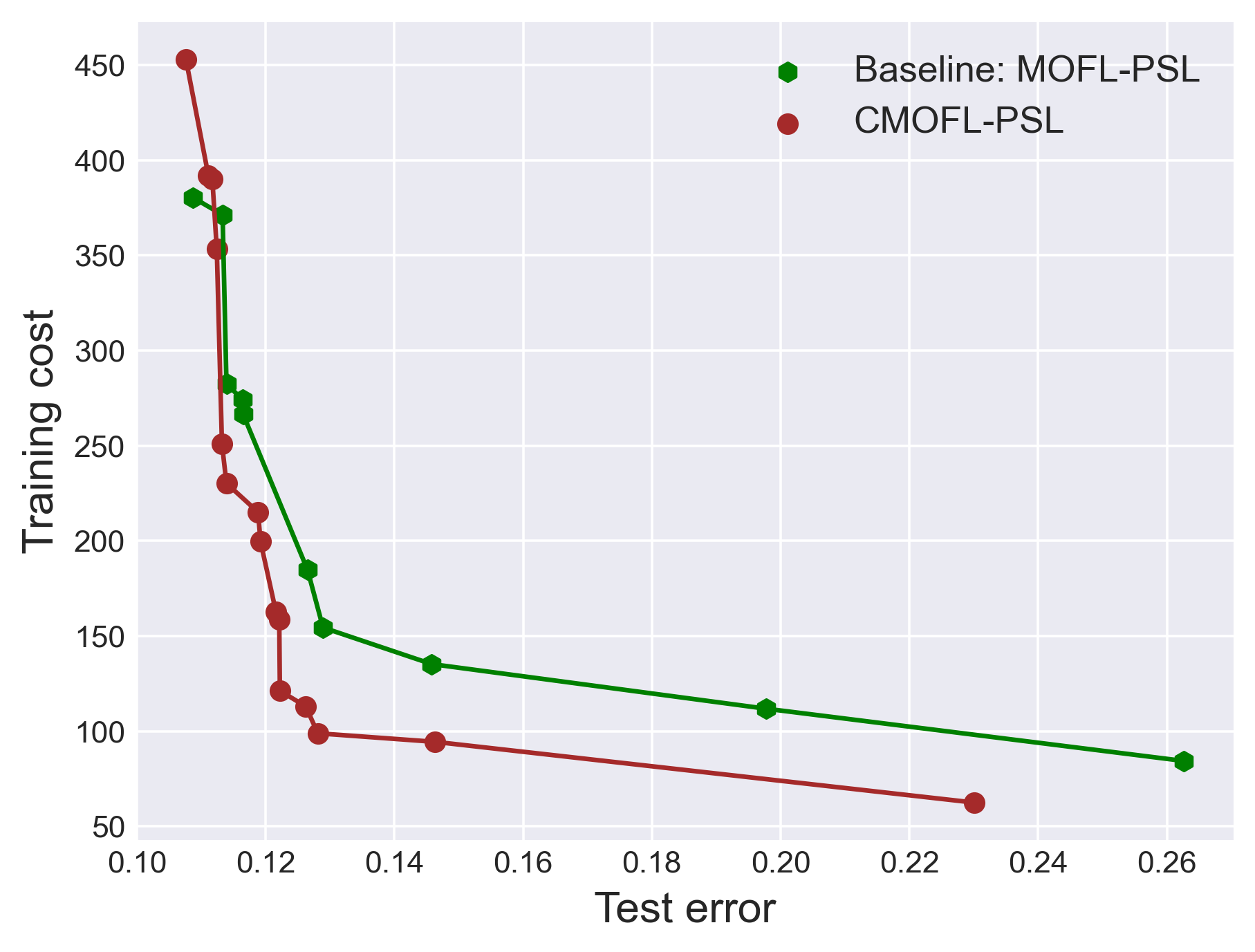}
      \subcaption{\scriptsize{BC: CMOFL-PSL vs MOFL-PSL}}
    		\end{subfigure}
    	\begin{subfigure}{0.32\textwidth}
  		 	\includegraphics[width=1\textwidth]{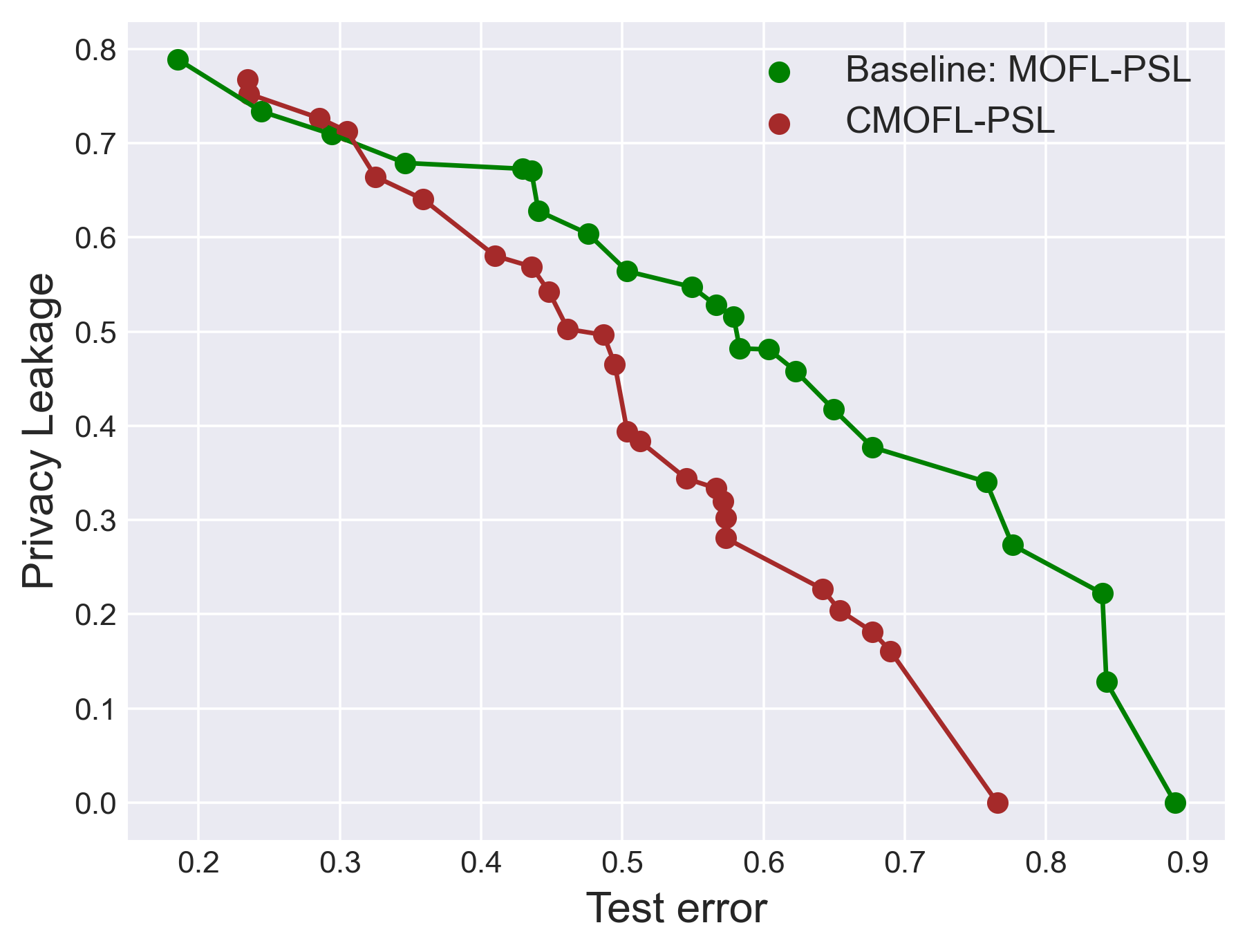}
            \subcaption{\scriptsize{RD: CMOFL-PSL vs MOFL-PSL}}
    		\end{subfigure}
   \begin{subfigure}{0.32\textwidth}
			\includegraphics[width=1\textwidth]{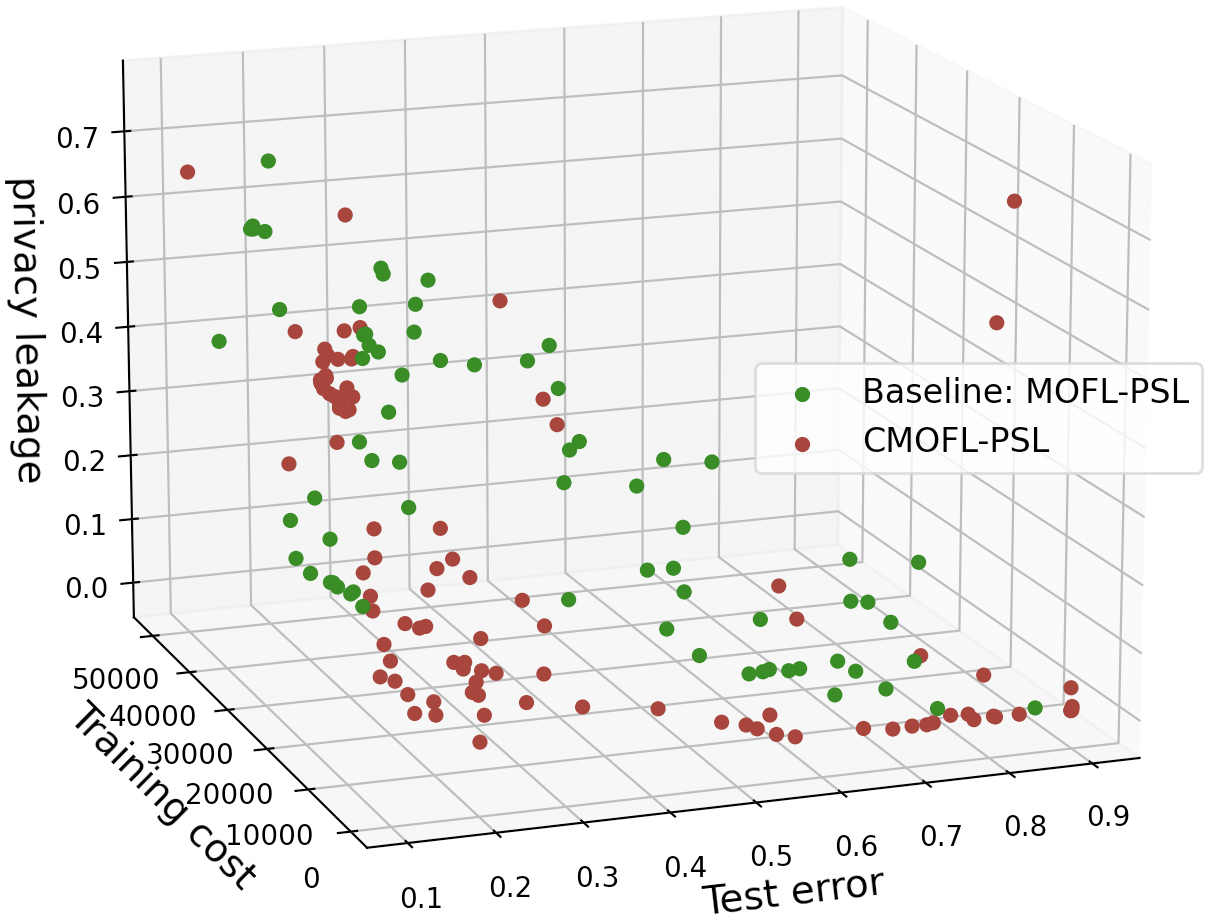}
         \subcaption{\scriptsize{SF: CMOFL-PSL vs MOFL-PSL}}
		\end{subfigure}

\vspace{-0.8em}
 \caption{Comparing Pareto fronts (at the 20th generation) of our proposed CMOFL algorithms and those of baseline MOFL algorithms on the Fashion-MNIST dataset for BC, RD, and SF, respectively. The first line shows CMOFL-NSGA-II (red) vs. MOFL-NSGA-II (blue) for each protection mechanism. The second line shows CMOFL-PSL (brown) vs. MOFL-PSL (green) for each protection mechanism. \textit{A better Pareto front curve should be more toward the bottom-left corner of each sub-figure}.}
	\label{fig:pareto_front}
 \vspace{-4pt}
\end{figure*}

We then compare the Pareto fronts (at the 20th generation) achieved by our proposed CMOFL algorithms with those achieved by baselines on the Fashion-MNIST dataset for BC, RD, and SF, respectively. Figure \ref{fig:pareto_front} illustrates the results (A better Pareto front curve should be more toward the bottom-left corner of each sub-figure.). It shows that our proposed CMOFL algorithms, CMOFL-NSGA-II (red) and CMOFL-PSL (brown), yield better Pareto fronts than baselines, MOFL-NSGA-II (blue) and MOFL-PSL (green), for all three protection mechanisms. 

\subsection{Effectiveness of CMOFL with a Limited FL Evaluation Budget}
CMOFL algorithms need to call the FL procedure to obtain real objective values for each solution. Therefore, if a CMOFL algorithm requires a large number of FL evaluations to find satisfactory Pareto optimal solutions, it is impractical in many real-world applications. 

To investigate the effectiveness of CMOFL-NSGA-II and CMOFL-PSL with limited FL evaluation budgets, we run each algorithm with 5 initial solutions and 20 generations that each has a population size of 5 (there are totally 105 FL evaluations, while experiments in Sec. \ref{main_results} involve 420 evaluations). We conduct experiments on the Fashion-MNIST dataset for BC, RD, and SF, respectively. The reported results are averaged over 3 different random seeds.

\begin{figure*}[!h]
\vspace{-3pt}
	\centering
           \begin{subfigure}{0.32\textwidth}
  		 	\includegraphics[width=1\textwidth]{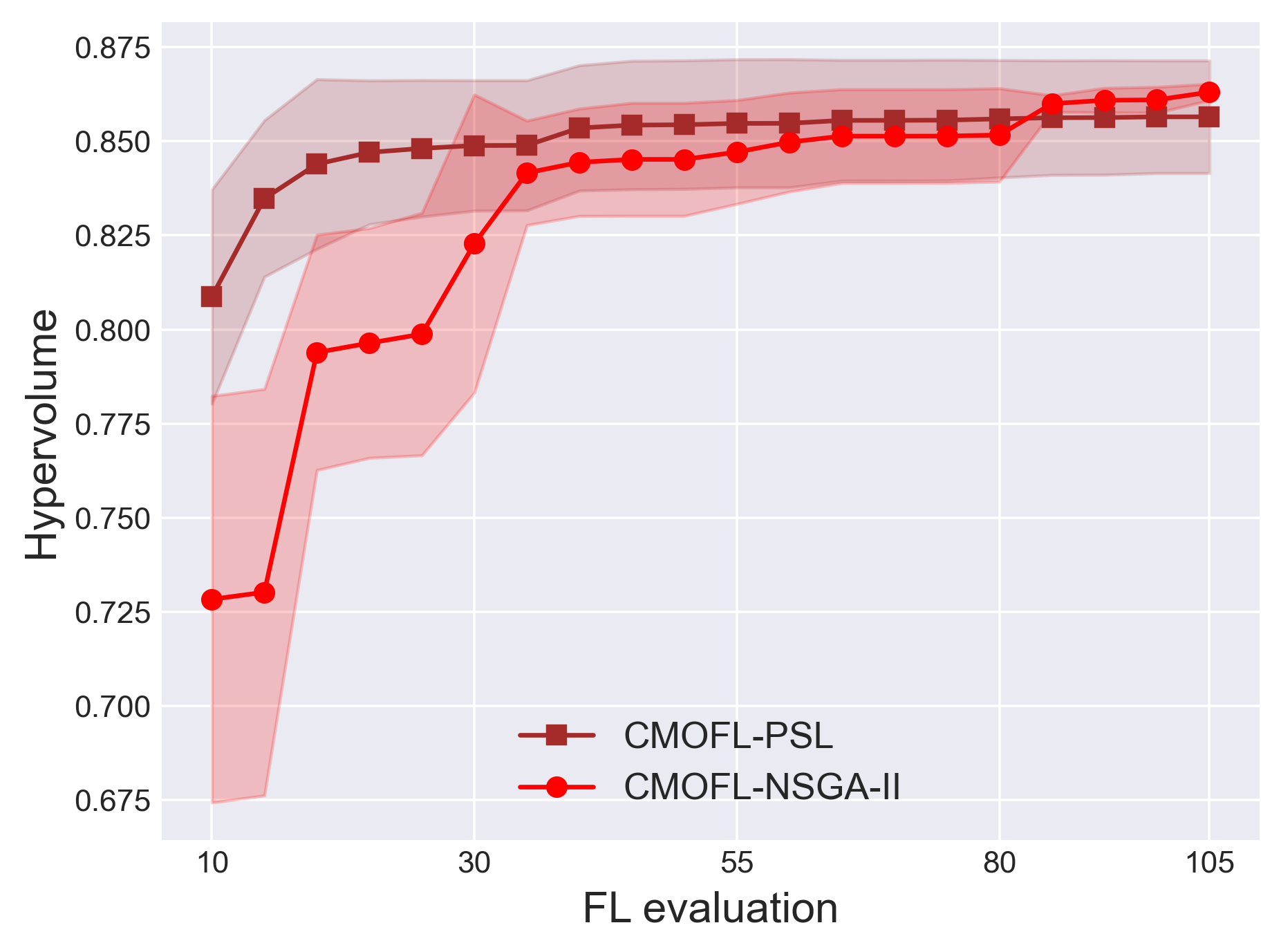}
      \subcaption{\scriptsize{BC: CMOFL-PSL vs CMOFL-NSGA-II}}
    		\end{subfigure}
    	\begin{subfigure}{0.32\textwidth}
  		 	\includegraphics[width=1\textwidth]{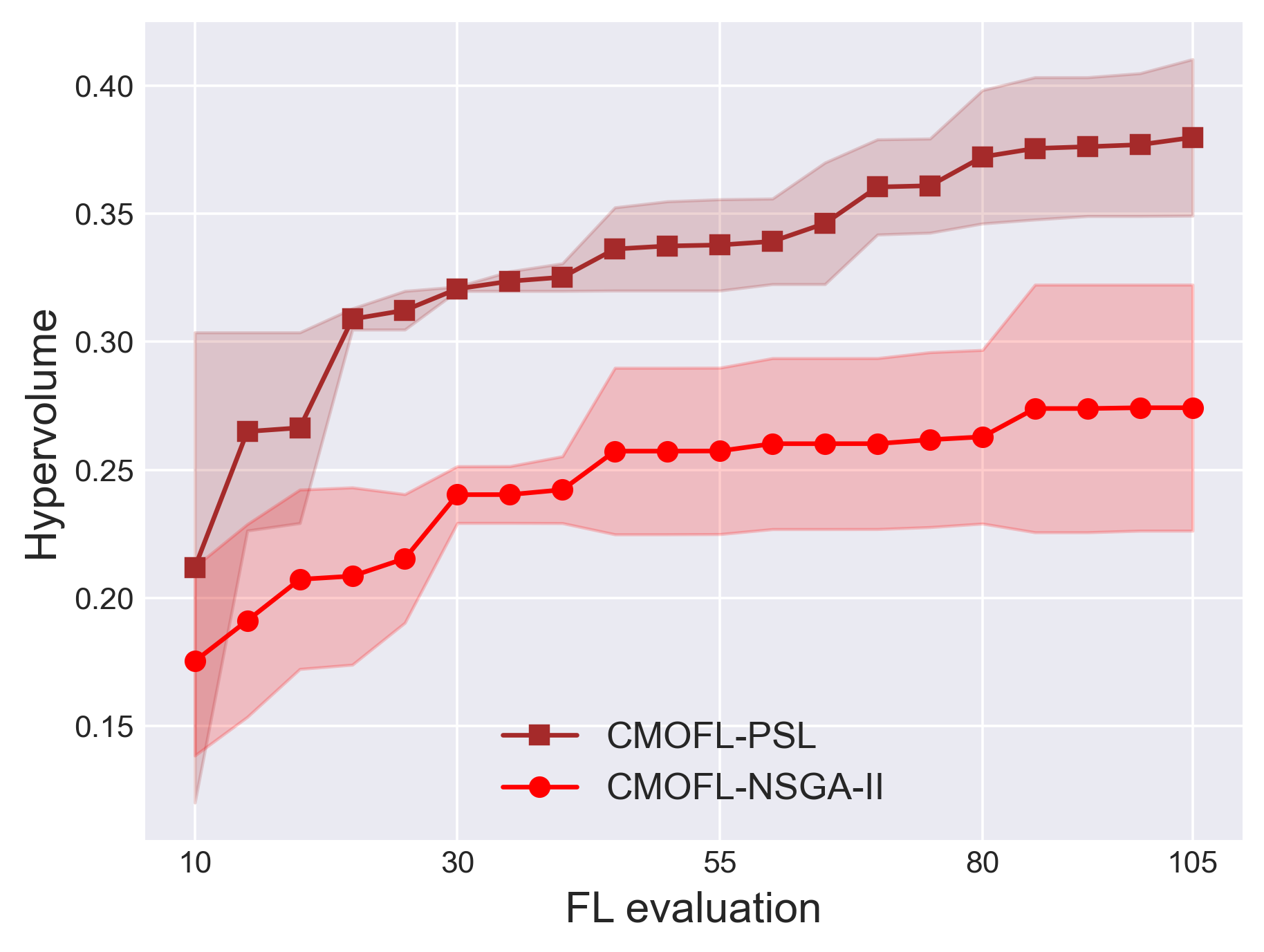}
            \subcaption{\scriptsize{RD: CMOFL-PSL vs CMOFL-NSGA-II}}
    		\end{subfigure}
   \begin{subfigure}{0.32\textwidth}
			\includegraphics[width=1\textwidth]{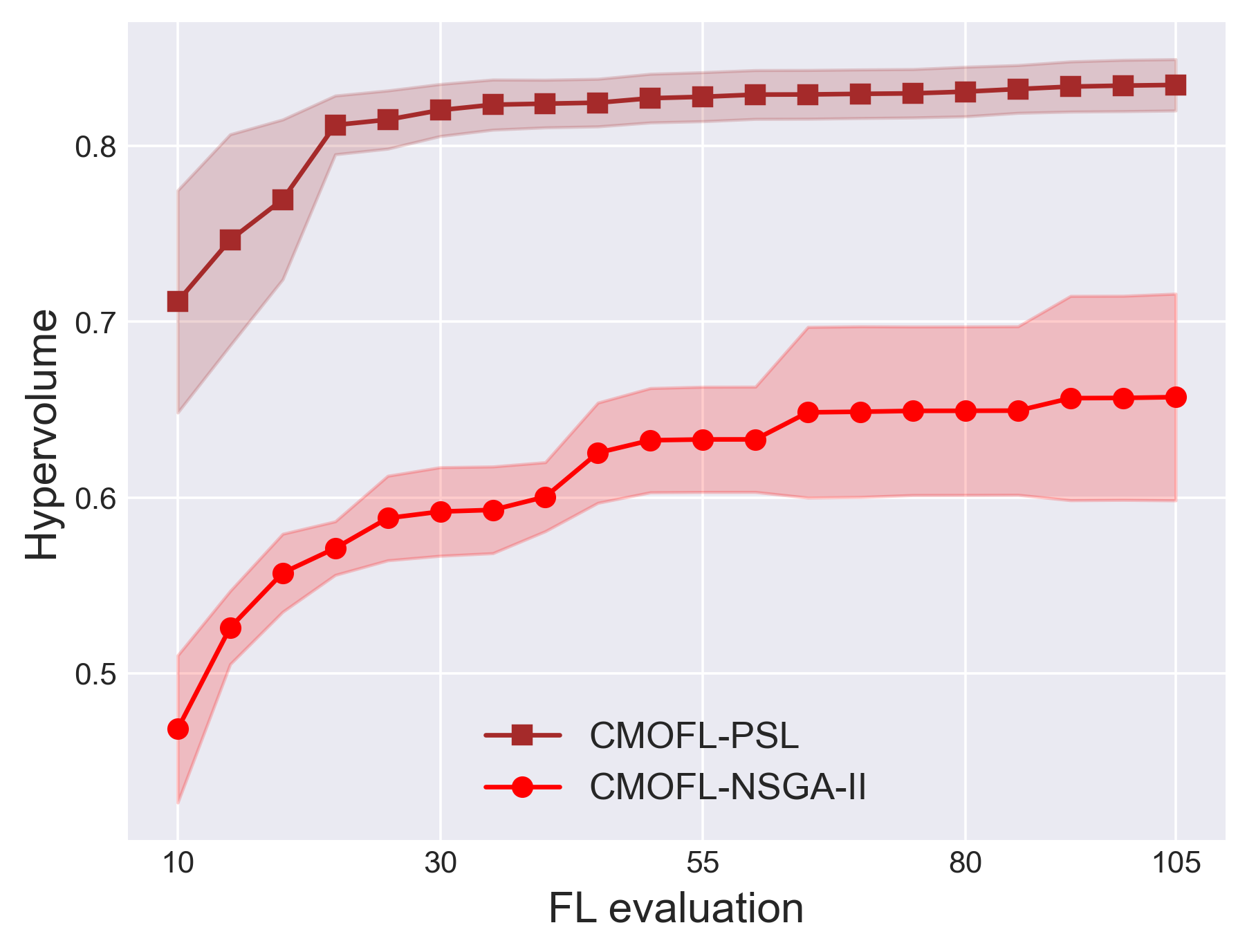}
         \subcaption{\scriptsize{SF: CMOFL-PSL vs CMOFL-NSGA-II}}
		\end{subfigure}

\vspace{-0.8em}
 \caption{Comparing hypervolume values of CMOFL-PSL (brown) and CMOFL-NSGA-II (red) when each generation uses 5 federated learning evaluations. Comparisons are conducted on the Fashion-MNIST dataset for BC, RD, and SF, respectively.}
	\label{fig:psl_nsga2}
 \vspace{-4pt}
\end{figure*}

Figure \ref{fig:psl_nsga2} compares CMOFL-PSL (brown) and CMOFL-NSGA-II (red) in terms of their hypervolume trends with each generation using 5 FL evaluations. It shows that CMOFL-PSL noticeably outperforms CMOFL-NSGA-II from the beginning of the optimization and converges to hypervolume values comparable to or better than CMOFL-NSGA-II, demonstrating that CMOFL-PSL can find better Pareto optimal solutions more efficiently than CMOFL-NSGA-II when the FL evaluation budget is limited. These results are expected because, per generation, CMOFL-PSL selects 5 best solutions from a large set of candidate solutions (1000 in this work) that are sampled from the Pareto set model and evaluated using a surrogate model (i.e., Gaussian Process) of the FL. In this way, CMOFL-PSL can efficiently explore broader solution space to find better Pareto optimal solutions. 

These experiments manifest that surrogate model-based MOO approaches, such as MOBO, are promising to solve TFL optimization problems in real-world scenarios where the FL evaluation budget is often limited. 

\subsection{Pareto Front as Guideline for Choosing Protection Mechanisms}
The Pareto front represents different optimal trade-offs among privacy leakage, utility loss, and training cost in this work. It helps define the applicable boundaries of protection mechanisms and hence could guide FL practitioners to choose the most appropriate protection mechanisms for applications at hand. For example, when Randomization is applied (see Figure \ref{fig:pareto_front} (b) and (e)), the Pareto front tells us that a noticeable amount of privacy leakage would be incurred as the test error decreases and vice versa. Therefore, it is challenging to find Randomization hyperparameters ($\sigma_{\text{rd}}$ and $c_{\text{clip}}$) that meet the requirements for applications in which both privacy and utility are critical (e.g., healthcare). Taking the BatchCrypt as another example (see Figure \ref{fig:pareto_front} (a) and (d)), the training cost increases significantly on the Pareto front as the utility loss approaches the optimal. This manifest that BatchCrypt is unsuitable for applications where efficiency and utility are of utmost importance (e.g., advertisement).

\section{Conclusion, Impact and Future Work}

In this paper, we formulate the problem of constrained multi-objective federated learning (CMOFL), which unifies multi-objective optimization and trustworthy federated learning. For one of the first attempts, we consider privacy leakage as an objective to minimize in CMOFL. We develop two improved CMOFL algorithms based on NSGA-II and PSL, respectively, for effectively and efficiently finding Pareto optimal solutions. Each algorithm leverages a regret function to penalize objectives that violate prespecified constraints. We design specific measurements of privacy leakage, utility loss, and training cost for three privacy protection mechanisms: differential privacy, BatchCrypt, and sparsification. Empirical experiments conducted under each of the three protection mechanisms demonstrate the efficacy of our proposed algorithms.

The benefits of finding Pareto optimal solutions (and front) for trustworthy federated learning (TFL) problems include: (1) Pareto optimal solutions can flexibly support participants' requirements. (2) Pareto front helps define applicable boundaries of privacy protection mechanisms. (3) Pareto front can serve as a tool to guide the standard-setting for privacy levels. 

At least three research directions are worth exploring: (1) Designing CMOFL algorithms that can find Pareto optimal solutions using a small amount of federated learning evaluation budget. In CMOFL, the evaluation of each solution is performed by federated learning, which is a time-consuming procedure. Therefore, a large number of federated learning evaluations is infeasible. (2) Applying CMOFL to vertical or hybrid FL settings. For example, we can leverage CMOFL to find Pareto optimal hyperparameters (e.g., tree number, tree depth, learning rate, leaf purity, and the number of aligned samples) that simultaneously optimize privacy, efficiency, and utility of SecureBoost~\cite{cheng2021secureboost,chen2021sbplus}, which is a widely used algorithm in vertical FL applications. (3) Investigating objectives to optimize beyond privacy, utility, and efficiency.

\newpage
\appendix
\setcounter{equation}{0}
\setcounter{theorem}{0}
\setcounter{prop}{0}
\setcounter{definition}{0}

\section{Privacy Leakage Measurements for Protection Mechanisms}\label{sec:applications}

In this section, we prove the correctness of privacy leakage measurements provided in Eq. (\ref{eq:dp_privacy_measure}) (for Randomization) and provided in  Eq. (\ref{equ:sf_privacy_measure}) (for Sparsification).



\subsection{Privacy Leakage}
Following \cite{du2012privacy,rassouli2019optimal}, the distortion (i.e., protection) extent is defined as the distance between the distribution $P_k^\calRO$ of original model parameters $W_k^\calRO$ (i.e., $W_k^\calRO \sim P_k^\calRO$) and the distribution $P_k^\calD$ of protected model parameters $W_k^\calD$ (i.e., $W_k^\calD \sim P_k^\calD$). In this work, we leverage the distortion extent to formulate the privacy leakage:
\begin{align}\label{eq: def_of_pl-app}
\epsilon_p = \frac{1}{K}\sum_{k=1}^K \epsilon_{p,k} \text{,  where } \epsilon_{p,k} = 1 - \text{TV}(P_k^\calRO||P_k^\calD),
\end{align}
where $\text{TV}(\cdot||\cdot)$ denotes the Total Variation distance between two distributions. A larger distortion applied to the original model parameters leads to larger $\text{TV}(P_k^\calRO||P_k^\calD)$, thereby less privacy leakage.

In this paper, suppose $W_k^\calRO$ to be the parameters sampled from the Multivariate Gaussian distribution $P_k^{\calRO} = \calN(\mu_0,\Sigma_0)$, where $\mu_0 = (\mu_u, \mu_o), \mu_u \in\mathbb R^q, \mu_o \in\mathbb R^{n-q}$ and $\Sigma_0 = \text{diag}(\Sigma_u^{q\times q}, \Sigma_o^{(n-q)\times (n-q)})$ is a diagonal matrix. Before we provide the proof for Proposition \ref{prop:1}, we first introduce the following two lemmas on estimating the total variation distance between two Gaussian distributions.
\begin{lemma}\label{lem:TV-gaussian}
(Total variation distance between Gaussians with the same mean \cite{devroye2018total}). Let $\mu \in R^n$, $\Sigma_1, \Sigma_2$ be diagonal matrix, and let $\lambda_1, \dots, \lambda_n$ denote the eigenvalues of $\Sigma_1^{-1}\Sigma_2-I_n$. Then, 
\begin{equation}
    \frac{1}{100} \leq \frac{{\text{TV}}(\calN(\mu, \Sigma_1), \calN(\mu, \Sigma_2))}{\min\{1,\sqrt{\sum_{i=1}^n\lambda_i^2}\}} \leq \frac{3}{2}.
\end{equation}
\end{lemma}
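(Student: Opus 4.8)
The plan is to exploit the product structure forced by the diagonal covariances, reduce the bound to one-dimensional estimates, and then treat the upper and lower bounds with genuinely different tools. Since total variation is translation-invariant I would first set $\mu = 0$; since $\Sigma_1,\Sigma_2$ are diagonal, both laws factor as products of univariate mean-zero Gaussians $\calN(0,\sigma_{1,i}^2)$ and $\calN(0,\sigma_{2,i}^2)$, and the eigenvalues of $\Sigma_1^{-1}\Sigma_2 - I_n$ are exactly $\lambda_i = \sigma_{2,i}^2/\sigma_{1,i}^2 - 1 \in (-1,\infty)$. Everything then reduces to the single-coordinate Hellinger affinity $\rho_i = \left(\frac{2\sigma_{1,i}\sigma_{2,i}}{\sigma_{1,i}^2+\sigma_{2,i}^2}\right)^{1/2}$, which depends on $\lambda_i$ alone.

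The next step is the key univariate estimate. Writing $S = \sum_i \lambda_i^2$, a one-variable calculus check on the closed form above yields $1 - \rho_i \asymp \min\{1,\lambda_i^2\}$ with universal constants (in particular $1-\rho_i = \tfrac{1}{16}\lambda_i^2 + O(\lambda_i^3)$ for small $\lambda_i$, while $\rho_i \to 0$ as $\lambda_i \to -1^+$ or $\lambda_i \to \infty$). For the upper bound this is essentially all one needs: the affinity tensorizes, $\rho = \prod_i \rho_i$, and with $H^2 = 1-\rho$ the standard bound $\text{TV} \le \sqrt2\,H$ together with $1 - \prod_i\rho_i \le \sum_i(1-\rho_i) \le C\sum_i\lambda_i^2$ gives $\text{TV}\le \sqrt{2C}\,\sqrt S$; combined with the trivial $\text{TV}\le 1$, and using that the relevant constant is close to $1$ in the regime $S<1$, this yields the claimed $\tfrac32\min\{1,\sqrt S\}$.

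The lower bound is the harder direction, and I would split it into regimes. If some $|\lambda_{i_0}|$ exceeds a fixed constant, the corresponding one-dimensional marginals already have constant total variation, and since the TV of marginals lower-bounds the TV of the joint law we are done (the right-hand side is at most $1$). Otherwise all $\lambda_i$ are small; if moreover $S \ge 1$, I would use the quadratic test statistic $T(x) = \sum_i \lambda_i x_i^2$, whose means under the two laws differ by exactly $\sum_i\lambda_i^2 = S$ while both variances are $O(S)$ (using $|\lambda_i|<\tfrac12$), so the means are separated by order $\sqrt S \ge 1$ standard deviations and a Chebyshev separation argument forces $\text{TV}\ge c$.

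The genuinely delicate case is \emph{all $\lambda_i$ small and $S<1$}, where I must show $\text{TV}\ge c\sqrt S$, and I expect this to be the main obstacle. The generic Hellinger inequality $H^2 \le \text{TV}$ is too weak here, delivering only $\text{TV}\ge c' S$, so the Gaussian structure has to be used directly. The plan is to work from $\text{TV} = \tfrac12\,\EE_{P_1}|L-1|$ for the likelihood ratio $L = p_2/p_1$: its logarithm is a sum of many independent, individually tiny terms of total variance $\asymp S$, so a central-limit / Berry--Esseen estimate (or a matched second- and fourth-moment anti-concentration bound) shows that $L-1$ is approximately a centered Gaussian of variance $\asymp S$, whence $\EE_{P_1}|L-1| \ge c\sqrt S$. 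Assembling the three regimes produces the lower constant $\tfrac{1}{100}$.
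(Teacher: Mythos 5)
You should first note a mismatch of expectations: the paper does not prove this lemma at all --- it is imported verbatim, with its constants $\tfrac{1}{100}$ and $\tfrac32$, from \cite{devroye2018total}, so the only meaningful comparison is against that reference. Your overall architecture (translation invariance, product structure from diagonality, Hellinger tensorization with $\text{TV}\le\sqrt{2}H$ for the upper bound, and a case-split lower bound combining one-dimensional projections, a quadratic test statistic, and a likelihood-ratio analysis) is faithful to how the cited result is actually proved, and your upper-bound argument is sound: the pointwise estimate $1-\rho_i\le\lambda_i^2$ over $\lambda_i\in(-1,\infty)$ together with $1-\prod_i\rho_i\le\sum_i(1-\rho_i)$ gives $\text{TV}\le\sqrt{2}\sqrt{S}$, which is within the claimed $\tfrac32$.

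The genuine gap is in your ``delicate case,'' and it is exactly where you predicted trouble. The Berry--Esseen route does not work uniformly: the Kolmogorov-distance error in approximating $\log L$ by a Gaussian is of order $\sum_i|\lambda_i|^3/S^{3/2}$, and this need not be $o(\sqrt{S})$. Concretely, take $n$ coordinates with $\lambda_i=n^{-2}$ for every $i$, so $S=n^{-3}$ and the target lower bound is $\sqrt{S}=n^{-3/2}$; the CLT error is of order $n^{-1/2}\gg n^{-3/2}$ (here $\log L$ is a rescaled centered $\chi^2_n$, whose normal approximation error is genuinely $\asymp n^{-1/2}$), so the Gaussian comparison is swamped by the approximation error, while your large-$|\lambda_{i_0}|$ regime gives only $n^{-2}$, also too weak. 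No reshuffling of your three regimes closes this hole: whenever $\max_i|\lambda_i|$ lies between roughly $S$ and $\sqrt{S}$, neither the projection argument nor the CLT argument reaches $c\sqrt{S}$. The repair is precisely your parenthetical fallback, which must be promoted to the main argument: for the quadratic form $W=\tfrac12\sum_i\lambda_i(x_i^2-1)$ one has $\EE W^2=S/2$ and $\EE W^4\le C S^2$ \emph{regardless} of how the weights are distributed, so H{\"o}lder in the form $\EE W^2\le(\EE|W|)^{2/3}(\EE W^4)^{1/3}$ yields $\EE|W|\ge cS^{3/2}/S=c\sqrt{S}$ directly, with no normality claim needed; one then passes from $\EE|W|$ to $\EE|L-1|$ by $|e^u-1|\gtrsim\min\{1,|u|\}$ and a truncation handled by $\EE W^2\le S$. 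A secondary quantitative issue: in your Regime B, a midpoint-threshold Chebyshev bound gives $\text{TV}\ge1-O(1/S)$, which is a positive constant only for $S$ larger than a sizable absolute constant, not for all $S\ge1$; the intermediate range $1\le S\le O(1)$ must be covered either by the coupling/Cauchy--Schwarz bound $\text{TV}\ge\delta^2/(2\mathrm{Var}_{P_1}+2\mathrm{Var}_{P_2}+2\delta^2)$ or by selecting a subset of coordinates with partial sum $S'\asymp1$ and applying the small-$S$ moment argument to that marginal.
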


\begin{lemma} \label{lem:TV-diffmean-Gauss}
(Total variation distance between Gaussians with different means \cite{devroye2018total})
Assume that $\Sigma_1, \Sigma_2$
are positive definite, and let
\begin{equation}
    h = h(\mu_1, \Sigma_1, \mu_2, \Sigma_2) =\big{(}1-\frac{\text{det}(\Sigma_1)^{1/4}\text{det}(\Sigma_2)^{1/4}}{\text{det}(\frac{\Sigma_1+\Sigma_2}{2})^{1/2}}\exp\{-\frac{1}{8}(\mu_1-\mu_2)^T(\frac{\Sigma_1+\Sigma_2}{2})^{-1}(\mu_1-\mu_2)\}\big{)}^{1/2}
\end{equation}
Then, we have
\begin{equation}
    h^2 \leq {\text{TV}}(\calN(\mu_1, \Sigma_1), \calN(\mu_2, \Sigma_2)) \leq \sqrt{2}h.
\end{equation}
\end{lemma}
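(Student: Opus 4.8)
The plan is to recognize that the quantity $h$ appearing in the statement is precisely the \emph{Hellinger distance} between the two Gaussians, and then to invoke a universal (distribution-free) sandwich relating Hellinger distance and total variation distance. Recall that for densities $p_1, p_2$ the squared Hellinger distance is $H^2 = 1 - \int \sqrt{p_1 p_2}$, where $\int\sqrt{p_1 p_2}$ is the Bhattacharyya coefficient. The first goal is therefore to show that $h^2 = 1 - \int \sqrt{p_1 p_2}$ when $p_i$ is the density of $\calN(\mu_i,\Sigma_i)$; the second is to deduce the two-sided bound from the general comparison $H^2 \le \text{TV} \le \sqrt{2}\, H$, valid for arbitrary probability measures.

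For the first step I would compute the Bhattacharyya coefficient of two multivariate Gaussians directly. Writing out $\sqrt{p_1(x)p_2(x)}$, the product of the two exponential factors has exponent $-\tfrac14\big[(x-\mu_1)^T\Sigma_1^{-1}(x-\mu_1) + (x-\mu_2)^T\Sigma_2^{-1}(x-\mu_2)\big]$. I would complete the square in $x$: the two quadratic forms combine into a single quadratic centered at a weighted mean, plus a residual term depending only on $\mu_1-\mu_2$. Integrating the Gaussian in $x$ produces a normalizing factor, and after collecting the $(2\pi)$ and determinant terms from the two densities the $x$-independent prefactor simplifies to exactly $\det(\Sigma_1)^{1/4}\det(\Sigma_2)^{1/4}/\det(\tfrac{\Sigma_1+\Sigma_2}{2})^{1/2}$, while the residual yields $\exp\{-\tfrac18(\mu_1-\mu_2)^T(\tfrac{\Sigma_1+\Sigma_2}{2})^{-1}(\mu_1-\mu_2)\}$. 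Thus $\int\sqrt{p_1 p_2}$ equals the product inside the parentheses defining $h$, so $h^2 = 1 - \int\sqrt{p_1p_2} = H^2$.

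For the second step I would prove the general sandwich. The lower bound follows from $\text{TV} = \tfrac12\int|\sqrt{p_1}-\sqrt{p_2}|\,(\sqrt{p_1}+\sqrt{p_2}) \ge \tfrac12\int(\sqrt{p_1}-\sqrt{p_2})^2 = H^2$, using $\sqrt{p_1}+\sqrt{p_2} \ge |\sqrt{p_1}-\sqrt{p_2}|$. The upper bound follows from Cauchy-Schwarz, $\text{TV} \le \tfrac12\big(\int(\sqrt{p_1}-\sqrt{p_2})^2\big)^{1/2}\big(\int(\sqrt{p_1}+\sqrt{p_2})^2\big)^{1/2}$; since $\int(\sqrt{p_1}-\sqrt{p_2})^2 = 2H^2$ and $\int(\sqrt{p_1}+\sqrt{p_2})^2 = 4 - 2H^2$, this gives $\text{TV} \le H\sqrt{2-H^2} \le \sqrt{2}\, H$. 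Combining with $h = H$ yields the claim.

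The main obstacle is the Gaussian Bhattacharyya computation in the first step: the completion of the square and the bookkeeping of the determinant prefactors are where all the structure of the formula for $h$ is produced, and one must check that the positive-definiteness assumption on $\Sigma_1,\Sigma_2$ makes every inverse and determinant well defined. The distribution-free sandwich in the second step is routine. Since the statement is quoted verbatim from \cite{devroye2018total}, an acceptable alternative is simply to cite that reference; the sketch above is the self-contained route.
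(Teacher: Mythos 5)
Your argument is correct, but note that the paper itself does not prove this lemma at all: it is quoted, with attribution, from \cite{devroye2018total} and used purely as an imported tool in the proof of Proposition~1. Your self-contained derivation is the standard route, and essentially the one underlying the cited reference. You correctly identify $h$ as the Hellinger distance of the two Gaussians (with the convention $H^2 = 1 - \int\sqrt{p_1 p_2}$, i.e.\ one minus the Bhattacharyya coefficient); the completion of the square does yield exactly the stated prefactor $\det(\Sigma_1)^{1/4}\det(\Sigma_2)^{1/4}/\det(\tfrac{\Sigma_1+\Sigma_2}{2})^{1/2}$ and the factor $\exp\{-\tfrac18(\mu_1-\mu_2)^T(\tfrac{\Sigma_1+\Sigma_2}{2})^{-1}(\mu_1-\mu_2)\}$, with positive definiteness of $\Sigma_1,\Sigma_2$ ensuring that $\tfrac{\Sigma_1+\Sigma_2}{2}$ and all determinants involved are well defined; and the distribution-free sandwich $H^2 \leq \text{TV} \leq H\sqrt{2-H^2} \leq \sqrt{2}H$ follows from the pointwise inequality $\sqrt{p_1}+\sqrt{p_2} \geq |\sqrt{p_1}-\sqrt{p_2}|$ and Cauchy--Schwarz exactly as you wrote. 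What the paper's citation buys is brevity; what your derivation buys is a verification of the constants and of the normalization conventions (TV as half the $L_1$ distance, Hellinger with the factor $\tfrac12$), which is genuinely useful here because these conventions differ across the literature and the paper goes on to use the bounds quantitatively in its Proposition~1 for the randomization and sparsification privacy-leakage measurements.
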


\begin{prop} \label{prop:1}
We have the following measurements of the privacy leakage based on the estimation of $\text{TV}(P_k^\calRO||P_k^\calD)$ for Randomization \cite{abadi2016deep} and Sparsification \cite{nori2021fast}:
\begin{itemize}
    \item \textbf{Randomization}: 
    \begin{equation}
     \epsilon_p = 1 - \min\{1, \Theta(\frac{\sigma_{\text{rd}}^2}{c_{\text{clip}}^2}\sqrt{d_w})\},
\end{equation}    
    where $\sigma_{\text{rd}}$ is the variance of Gaussian noise, $c_{\text{clip}}$ is the clip norm and $d_w$ is the dimension of model weights.
    \item \textbf{Sparsification}: \begin{equation}\label{equ:sf_privacy_measure_2}
    \epsilon_p = \frac{1}{K}\sum_{k=1}^K\left(1-\sqrt{2}(1-O(\exp\{- \mu_k \}))^{-1/2}\right),
    \end{equation}
    where $\mu_k$ is the mean of remained weights of $k_{th}$ client. 
\end{itemize}
\end{prop}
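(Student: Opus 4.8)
The plan is to treat each mechanism as inducing an explicit Gaussian model for the protected distribution $P_k^{\calD}$ and then read off $\text{TV}(P_k^{\calRO} \| P_k^{\calD})$ by substituting into the two total-variation estimates, Lemma~\ref{lem:TV-gaussian} (same mean) and Lemma~\ref{lem:TV-diffmean-Gauss} (different means). Since $\epsilon_{p,k} = 1 - \text{TV}(P_k^{\calRO}\|P_k^{\calD})$ and $\epsilon_p$ is just the client average, it suffices to pin down the TV term for a single client and the two displayed formulas follow by averaging over $k$.

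For \textbf{Randomization}, I would model the mechanism as adding independent noise $\calN(0,\sigma_{\text{rd}}^2 I)$ to the clipped weights, so that $P_k^{\calD}=\calN(\mu_0,\Sigma_0+\sigma_{\text{rd}}^2 I)$ shares the mean $\mu_0$ of $P_k^{\calRO}=\calN(\mu_0,\Sigma_0)$. This is exactly the same-mean regime of Lemma~\ref{lem:TV-gaussian}: with $\Sigma_1=\Sigma_0$ and $\Sigma_2=\Sigma_0+\sigma_{\text{rd}}^2 I$ one computes $\Sigma_1^{-1}\Sigma_2-I_n=\sigma_{\text{rd}}^2\Sigma_0^{-1}$, whose eigenvalues are $\lambda_i=\sigma_{\text{rd}}^2/s_i$ for $s_i$ the diagonal entries of $\Sigma_0$. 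Hence $\sqrt{\sum_i\lambda_i^2}=\sigma_{\text{rd}}^2\sqrt{\sum_i s_i^{-2}}$, and under the modeling assumption that clipping to norm $c_{\text{clip}}$ forces each coordinate scale to behave as $s_i=\Theta(c_{\text{clip}}^2)$, this collapses to $\Theta(\sigma_{\text{rd}}^2\sqrt{d_w}/c_{\text{clip}}^2)$. Lemma~\ref{lem:TV-gaussian} then sandwiches the distance as $\text{TV}=\Theta(\min\{1,\sigma_{\text{rd}}^2\sqrt{d_w}/c_{\text{clip}}^2\})$, which gives the stated $\epsilon_p$.

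For \textbf{Sparsification}, the retained (private) block is kept locally and effectively suppressed in the shared model, so I would model the protection as a mean shift on the private coordinates while leaving the covariance essentially unchanged: writing $\mu_0=(\mu_u,\mu_o)$, take $P_k^{\calRO}=\calN((\mu_u,\mu_o),\Sigma_0)$ and $P_k^{\calD}=\calN((0,\mu_o),\Sigma_0)$. This is the different-mean regime of Lemma~\ref{lem:TV-diffmean-Gauss}. Because the two covariances coincide, the determinant prefactor in $h$ equals $1$, so $h=(1-\exp\{-\tfrac18(\mu_1-\mu_2)^T\Sigma_0^{-1}(\mu_1-\mu_2)\})^{1/2}$; the mean gap $\mu_1-\mu_2=(\mu_u,0)$ turns the quadratic form into a (rescaled) norm of the retained block, which I would absorb into the constant $C_2$ and the client statistic $\mu_k$ so that the exponent reads $-\mu_k/C_2$. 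Applying the upper estimate $\text{TV}\le\sqrt2\,h$ then yields the lower bound $\epsilon_{p,k}\ge 1-\sqrt2(1-\exp\{-\mu_k/C_2\})^{1/2}$, the claimed measurement.

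The main obstacle is not the TV algebra, which is a direct substitution into the two lemmas, but justifying the Gaussian surrogate models and translating the abstract covariance entries $s_i$ and the mean gap into the reported quantities $c_{\text{clip}}$, $d_w$, and $\mu_k$. I expect the delicate points to be (i) arguing that the per-coordinate scale after clipping is $\Theta(c_{\text{clip}}^2)$, so that $\sqrt{\sum_i s_i^{-2}}$ contributes precisely the $\sqrt{d_w}/c_{\text{clip}}^2$ factor, and (ii) identifying the Mahalanobis mean gap of the suppressed block with the single scalar $\mu_k/C_2$. Both steps lean on the diagonality of $\Sigma_0$ assumed in the setup and are order-of-magnitude arguments rather than exact identities, which is why the statements are phrased with $\Theta(\cdot)$ and $O(\cdot)$ rather than equalities.
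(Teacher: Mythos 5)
Your proposal is correct and follows essentially the same route as the paper's own proof: Randomization is handled by the same-mean Gaussian TV estimate (Lemma~\ref{lem:TV-gaussian}) with per-coordinate variance $\Theta(c_{\text{clip}}^2)$ and eigenvalues $\sigma_{\text{rd}}^2/c_{\text{clip}}^2$ giving the $\frac{\sigma_{\text{rd}}^2}{c_{\text{clip}}^2}\sqrt{d_w}$ factor, and Sparsification by the different-mean estimate (Lemma~\ref{lem:TV-diffmean-Gauss}) with the determinant prefactor absorbed into a constant and the Mahalanobis gap of the suppressed block identified with $\mu_k$, exactly as the paper does. One remark: in the sparsification step your inequality $\epsilon_{p,k} \geq 1-\sqrt{2}\,h$ is the direction that actually follows from $\text{TV} \leq \sqrt{2}\,h$, whereas the paper writes $\epsilon_{p,k} \leq 1-\sqrt{2}\,h$, which is a sign slip on its side (an upper bound on $\epsilon_{p,k}$ would instead require the $h^2 \leq \text{TV}$ side of the lemma).
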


\begin{proof}
For \textbf{randomization mechanism}, the variance of each clients' model weights are first changed to $c_{\text{clip}}$, and then $W_k^\calD = W_k^\calRO + \calN(0, \sigma_{\text{rd}}^2)$. Therefore, $\text{TV}(P_k^\calRO, P_k^\calD) = \text{TV}(\calN(\mu_0, c_{\text{clip}}^2), \calN(\mu, c_{\text{clip}}^2+ \sigma_{\text{rd}}^2))$. According to Eq. \eqref{eq: def_of_pl-app}, we have
\begin{equation}\begin{split}
    \epsilon_{p,k} &= 1- \text{TV}(P_k^\calRO, P_k^\calD) \\
               &\geq 1 - \frac{3}{2} \sqrt{\sum_{i=1}^n\lambda_i^2} \\
               & = 1- \frac{3}{2}\sqrt{\sum_{i=1}^{d_w}(\frac{c_{\text{clip}}^2 + \sigma_{\text{rd}}^2}{c_{\text{clip}}^2}-1)^2} \\
               & = 1- \frac{3}{2}\sqrt{\sum_{i=1}^{d_w}\frac{\sigma_{\text{rd}}^4}{c_{\text{clip}}^4}} \\
               & = 1- \frac{3}{2}\frac{\sigma_{\text{rd}}^2}{c_{\text{clip}}^2}\sqrt{d_w} \\
               &  \geq 1- \min\{1,\frac{3}{2}\frac{\sigma_{\text{rd}}^2}{c_{\text{clip}}^2}\sqrt{d_w}\}
    \end{split}
\end{equation}
where the first inequality is due to Lemma \ref{lem:TV-gaussian} that $\text{TV}(P_k^\calRO, P_k^\calD) \leq \frac{3}{2} \sqrt{\sum_{i=1}^n\lambda_i^2} $, and the last inequality is because of TV distance does not exceed 1. On the other hand, 
\begin{equation}\begin{split}
    \epsilon_{p,k} &= 1- \text{TV}(P_k^\calRO, P_k^\calD) \\
               &\leq 1 - \frac{1}{100} \min\{1, \sqrt{\sum_{i=1}^n\lambda_i^2}\} \\
               & = 1- \min\{1,\frac{1}{100}\sqrt{\sum_{i=1}^{d_w}(\frac{c_{\text{clip}}^2 + \sigma_{\text{rd}}^2}{c_{\text{clip}}^2}-1)^2} \}\\
               & = 1- \frac{1}{100}\min\{1,\sqrt{\sum_{i=1}^{d_w}\frac{\sigma_{\text{rd}}^4}{c_{\text{clip}}^4}}\} \\
               & = 1- \frac{1}{100}\min\{1,\frac{\sigma_{\text{rd}}^2}{c_{\text{clip}}^2}\sqrt{d_w}\} \\
               & \leq 1- \min\{1,\frac{1}{100}\frac{\sigma_{\text{rd}}^2}{c_{\text{clip}}^2}\sqrt{d_w}\}.
    \end{split}
\end{equation}
Therefore, we prove $\epsilon_p = \sum_{k=1}^K(1 - \min\{1, \Theta(\frac{\sigma_{\text{rd}}^2}{c_{\text{clip}}^2}\sqrt{d_w})\})$ for randomization mechanism.

In the \textbf{sparsification} mechanism, clients initiate the sparsity mechanism by transmitting partial parameters to the server. Specifically, we assume that each client transmits the first $q$ dimensions to the server, without loss of generality. We further assume that the vector composed of the last $(n-q)$ dimensions follows a Gaussian distribution, denoted by $\calN(\mu_g, \Sigma_g)$, where $\Sigma_g$ is a diagonal matrix. In this context, the private information of the model follows $P_k^\calD \sim \calN(\mu, \Sigma)$, where $\mu = (\mu_u, \mu_g)$ and $\Sigma = \text{diag}(\Sigma_u^{q \times q}, \Sigma_g^{(n-q) \times (n-q)})$. 
According to Eq. \eqref{eq: def_of_pl-app} and Lemma \ref{lem:TV-diffmean-Gauss}, we obtain
\begin{equation}\label{eq:sparse-1}
    \epsilon_{p,k} \leq 1- \sqrt{2}h.
\end{equation}
Since the variance $\Sigma$ is bounded and assume the $\mu_g = 0$, we have
\begin{equation*}\label{eq:sparse-2}
\begin{split}
  h &= h(\mu_0, \Sigma_0, \mu, \Sigma) =\big{(}1-\frac{\text{det}(\Sigma_0)^{1/4}\text{det}(\Sigma)^{1/4}}{\text{det}(\frac{\Sigma_0+\Sigma}{2})^{1/2}}\exp\{-\frac{1}{8}(\mu-\mu_0)^T(\frac{\Sigma+\Sigma_0}{2})^{-1}(\mu-\mu_0)\}\big{)}^{1/2} \\
  & = \big{(}1-\exp\{-\Theta(\mu_g^T\mu_g)\}\big{)}^{1/2}
  \end{split}
\end{equation*}
Combining Eq. \eqref{eq:sparse-1} and \eqref{eq:sparse-2}, we can further obtain
\begin{equation*}
    \epsilon_{p,k} \leq 1-\sqrt{2}(1-\exp\{- O(\mu_k) \})^{1/2},
\end{equation*}
where $\mu_k$ is $\mu_g$ for $k_{th}$ client, i.e., the mean of remained weights of $k_{th}$ client. Therefore, we have 
\begin{equation*}
    \epsilon_{p} =\sum_{k=1}^K\epsilon_{p,k}\leq 1-\sqrt{2}(1-\exp\{- O(\mu_k) \})^{1/2}.
\end{equation*}
\end{proof}


\section{Proofs of Theorem 1 and Theorem 2}
In this section, we provide proofs of Theorem 1 and Theorem 2. We simplify these proofs in a centralized setting because the federated learning optimization utilized in CMOFL-NSGA-II and CMOFL-PSL serve as a black-box objective function. In addition, we combine the constraint with the objective function, which is reasonable according to the Lemma \ref{theorem_1_lemma}.
\begin{lemma} \label{theorem_1_lemma} \cite{luenberger1984linear}
Assume $\Omega$ is the convex region, $f$ and $g$ are convex functions.
Assume also that there is a point $x_1$ such that $g(x_1) < 0$. Then, if $x^*$ solves
\begin{equation*}
\begin{split}
        &\min f(x)  \\
        \text{subject to } & g(x) <0  \text{ and } x\in \Omega ,
\end{split}
\end{equation*}
then there is a $\alpha$ with $\alpha \geq 0$  such that $x^*$ solves the Lagrangian relaxation problem
\begin{equation*}
\begin{split}
    \min f(x) + \alpha g(x) \\
    \text{subject to } x\in \Omega.
    \end{split}
\end{equation*}
\end{lemma}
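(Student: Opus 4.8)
The plan is to derive this as the classical Lagrangian strong-duality result for convex programs under Slater's condition, via a separating-hyperplane argument in the plane. Write $f^\star = f(x^*)$ for the optimal value. The key object is the two-dimensional ``constraint--value'' set
\begin{equation*}
B = \{(y,z) \in \mathbb{R}^2 : \exists\, x \in \Omega \text{ with } g(x) \le y \text{ and } f(x) \le z\},
\end{equation*}
which I would first show is convex: given two points of $B$ with witnesses $x_a, x_b \in \Omega$, the combination $x_\theta = \theta x_a + (1-\theta)x_b$ lies in $\Omega$, and convexity of $g$ and $f$ pushes $g(x_\theta)$ and $f(x_\theta)$ below the corresponding convex combinations of coordinates, so the combined point is again in $B$.

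Next I would isolate the optimum as a boundary phenomenon. Consider the open convex set $C = \{(y,z) : y < 0,\ z < f^\star\}$. I claim $B \cap C = \emptyset$, since any point of the intersection would supply an $x \in \Omega$ with $g(x) < 0$ and $f(x) < f^\star$, i.e.\ a strictly feasible point beating the optimum, contradicting optimality of $x^*$. With $B$ and $C$ disjoint convex sets, the separating-hyperplane theorem yields $(\mu, \lambda) \ne (0,0)$ with $\mu y + \lambda z \ge \mu y' + \lambda z'$ for all $(y,z) \in B$ and $(y',z') \in C$. Driving $y' \to -\infty$ and $z' \to -\infty$ inside $C$ forces $\mu \ge 0$ and $\lambda \ge 0$, and since $(0, f^\star)$ lies in both $B$ (witnessed by $x^*$, as $g(x^*)\le 0$) and $\overline{C}$, the separating value equals $\lambda f^\star$, giving
\begin{equation*}
\mu\, g(x) + \lambda\, f(x) \ge \lambda f^\star \qquad \text{for every } x \in \Omega.
\end{equation*}

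The role of Slater's assumption is precisely to rule out the degenerate multiplier. If $\lambda = 0$ then $\mu > 0$, and the displayed inequality collapses to $g(x) \ge 0$ for all $x \in \Omega$, contradicting $g(x_1) < 0$. Hence $\lambda > 0$; setting $\alpha = \mu/\lambda \ge 0$ and dividing through yields $f(x) + \alpha g(x) \ge f^\star$ for all $x \in \Omega$. Evaluating at $x^*$, together with feasibility ($g(x^*) \le 0$) and $\alpha \ge 0$, pins down the complementary-slackness identity $\alpha g(x^*) = 0$, so that $f(x^*) + \alpha g(x^*) = f^\star$ and $x^*$ therefore minimizes the Lagrangian $f + \alpha g$ over $\Omega$.

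The step I expect to be the main obstacle is the clean application of separation: one must verify that $B$ and $C$ admit a proper separating hyperplane and handle the strict-versus-nonstrict constraint boundary carefully, since the program is stated with $g(x) < 0$ whereas the duality naturally lives on the closed feasible set $\{g \le 0\}$. Slater's point $x_1$ is exactly what reconciles the two, guaranteeing the separating hyperplane is non-vertical (equivalently $\lambda > 0$) so that the multiplier $\alpha$ is well defined; once that is secured, the remaining manipulations are routine.
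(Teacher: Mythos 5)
The paper itself gives no proof of this lemma: it is imported verbatim, citation and all, from Luenberger's textbook \cite{luenberger1984linear}, and is used only to justify folding the constraints into the objective in the proofs of Theorems 1 and 2. So there is no in-paper argument to compare against; what you have written is a correct, self-contained proof, and it is essentially the classical one found in the cited reference (and in Boyd--Vandenberghe): form the epigraph-type set $B$ in constraint--value space, show disjointness from the open corner set $C$ below the optimal value, separate, use Slater's point to rule out a vertical (degenerate) hyperplane so the multiplier $\alpha=\mu/\lambda$ exists, and finish with complementary slackness. All the steps check out: convexity of $B$, the corner point $(0,f^\star)$ pinning the separation constant to $\lambda f^\star$, the sign argument forcing $\mu,\lambda\ge 0$, and the contradiction with $g(x_1)<0$ when $\lambda=0$. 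One remark worth making explicit: the paper states the constraint as $g(x)<0$ (strict), almost certainly a typo for the standard $g(x)\le 0$. Under the strict version, feasibility of $x^*$ gives $g(x^*)<0$, so your complementary-slackness identity $\alpha g(x^*)=0$ actually forces $\alpha=0$ and the lemma degenerates to the (still true, but much weaker) claim that $x^*$ minimizes $f$ over all of $\Omega$. Your argument is valid for either reading, which is a point in its favor, but if you want the lemma to carry its intended content you should prove and state it for the closed constraint $g(x)\le 0$, exactly as your separation argument naturally does.
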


\subsection{Proof for Theorem 1}
\begin{lemma} \label{lem:lemm4}
The work \cite{zheng2022first} considers the following ONEMINMAX and LOTZ benchmarks in multi-objective problems. Let $d$ be the dimension of the solution space.
\begin{itemize}
    \item \textbf{LOTZ:} If the population size $N$ is at least $5(d+1)$,
then the expected runtime is $O(d^2
)$ iterations and $O(Nd^2)$
fitness evaluations.
\item \textbf{ONEMINMAX:} if the population size $N$ is at least $4(d + 1)$, then the
expected runtime is $O(d \text{log} d)$ iterations and $O(Nd \text{log} d)$
fitness evaluations.
\end{itemize}
\end{lemma}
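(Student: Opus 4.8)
The plan is to reconstruct the runtime analysis of \cite{zheng2022first}, of which this lemma is a verbatim restatement, and the argument rests on two pillars: a \emph{retention} guarantee that, once a Pareto-optimal objective value appears in the population, it is never lost, and a \emph{coverage-time} estimate bounding how quickly the remaining Pareto-optimal values are discovered. Throughout, write $d$ for the bit-string length and note that both benchmarks are bi-objective ($m=2$) with Pareto fronts of exactly $d+1$ distinct objective vectors: for \textbf{ONEMINMAX} these are $(i, d-i)$ for $0 \le i \le d$ (every search point is Pareto optimal), while for \textbf{LOTZ} they are $(a, d-a)$, attained only by the structured strings $1^a 0^{d-a}$.

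First I would establish the retention lemma. After fast non-dominated sorting, all Pareto-optimal individuals lie in the first front $F_1$, so selection into the next generation never discards a front-$1$ individual in favour of a dominated one whenever $|F_1| \le N$. The delicate point is that crowding-distance selection may delete one of two individuals sharing the same objective vector, since interior duplicates receive crowding distance $0$. I would show that with $N \ge 4(d+1)$ (ONEMINMAX) and $N \ge 5(d+1)$ (LOTZ) the combined parent-plus-offspring population of size $2N$ can hold only a bounded number of copies of each of the at most $d+1$ distinct front values, so crowding-distance truncation is forced to keep at least one representative of every value currently present. This monotone-coverage property converts the analysis into a hitting-time problem on the set of covered front vectors.

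Next I would bound the coverage time, and this is where the two benchmarks diverge. For \textbf{ONEMINMAX}, a single-bit flip applied to an individual with value $i$ yields $i+1$ with probability $(d-i)/d$ and $i-1$ with probability $i/d$; hence extending the covered range of values at the end near $d$ (say that end sits at $d-j$) succeeds in one generation with probability $\Theta(j/d)$, so the expected time to push it outward by one step is $\Theta(d/j)$. Summing $\sum_{j=1}^{d} \Theta(d/j) = \Theta(d H_d) = O(d \log d)$ gives the iteration bound, and multiplying by $N$ (the per-generation evaluations) gives $O(N d \log d)$. For \textbf{LOTZ}, by contrast, extending the front from $1^a 0^{d-a}$ to $1^{a+1} 0^{d-a-1}$ (or the symmetric step) requires flipping one \emph{specific} bit, an event of probability $\Theta(1/d)$ regardless of position; each of the $\Theta(d)$ extensions thus costs $\Theta(d)$ expected generations, which together with the $O(d^2)$ time to first reach the front (a LeadingOnes-type argument) yields $O(d^2)$ iterations and $O(N d^2)$ evaluations. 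Combining the retention lemma with these coverage estimates and taking expectations establishes both claims.

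I expect the retention lemma to be the main obstacle. Bounding the number of surviving duplicate copies of each front value — and thereby justifying the precise constants $4(d+1)$ and $5(d+1)$ — requires a careful case analysis of how crowding distances are assigned when many individuals coincide in objective space, since a crude bound would permit the population to collapse onto a few values and lose coverage. The coverage-time arguments, by contrast, reduce to standard drift and coupon-collector computations once monotone coverage is secured.
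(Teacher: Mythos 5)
First, a framing point: the paper never actually proves this lemma. It is an imported result, quoted from the cited runtime analysis of NSGA-II \cite{zheng2022first}, and the paper's ``proof'' in Appendix B.1 consists of restating the lemma and immediately moving on to Theorem 1. So the only thing your reconstruction can be measured against is the proof in the cited work itself. Against that benchmark, your skeleton is the right one: the cited analysis does rest on a crowding-distance retention lemma (interior duplicates receive crowding distance zero; at most two individuals per objective, hence at most four per distinct objective vector in these bi-objective problems, can have positive crowding distance; a sufficiently large population therefore never loses a covered front value) combined with hitting-time bounds of exactly the two flavors you describe --- a coupon-collector sum $\sum_{j}\Theta(d/j)=\Theta(d\log d)$ for ONEMINMAX, and $\Theta(d)$ waiting times for each of $\Theta(d)$ specific single-bit flips for LOTZ.

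The genuine gap is the one you flag yourself and then leave open: the retention lemma is never established, and it is the only place the constants $4(d+1)$ and $5(d+1)$ can come from. Worse, the duplicate-counting argument you outline is problem-independent --- it counts positive-crowding-distance individuals per covered value in exactly the same way for both benchmarks --- so it could only ever produce the same population-size threshold twice, and therefore cannot reproduce the asymmetry that is the actual content of the statement. That asymmetry traces to a structural difference your sketch ignores: in ONEMINMAX every search point is Pareto optimal, so the first non-dominated front of the combined parent-offspring population consists solely of Pareto-optimal points, whereas in LOTZ the population's first front can contain points off the true Pareto front, and retention must be argued for coverage of that (possibly suboptimal) front as well; accommodating those individuals is what drives the larger constant in the cited analysis. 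Without this, your central claim that ``crowding-distance truncation is forced to keep at least one representative of every value currently present'' is unsupported precisely where support is needed. The coverage-time estimates are, as you say, routine once retention is secured, so the proposal is a faithful outline of the cited proof's strategy but not a proof.
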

Lemma \ref{lem:lemm4} demonstrates the NSGA-II algorithm could obtain almost Pareto optimal solutions (i.e., within a small $\epsilon$ error) for LOTZ and ONEMINMAX benchmarks with sufficiently large population size $N$. 
Further, we provide the convergence analysis of Algo. \ref{alg:nsga_fl} when the objective values obtained by Algo. \ref{alg:nsga_fl} approaches the finite Pareto optimal objective values within $\epsilon$ error from the perspective of hypervolume.

\begin{theorem}
Let $Y^*$ be the optimal objective values w.r.t $m$ objectives. If for any $y^* \in Y^*, \exists y^T \in Y_T$ obtained by Algo. \ref{alg:nsga_fl}, s.t.  $\|y^T-y^*\| \leq \epsilon$, then we have
\begin{equation}
    \text{HV}_z(Y^*)-\text{HV}_z(Y_T) \leq Cm\epsilon,
\end{equation}
where $\text{HV}_z(\cdot)$ represents the hypervolume with reference point $z$ and $C$ is a constant. 
\end{theorem}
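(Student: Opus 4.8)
The plan is to convert the pointwise $\epsilon$-approximation hypothesis into a coordinate-wise domination statement, reduce the hypervolume gap to the volume of a thin ``shell,'' and bound that shell by a telescoping argument over the $m$ coordinates. Throughout, for a finite set $Y \subset \mathbb{R}^m$ I write $D(Y) = \{q \in \mathbb{R}^m : \exists\, y \in Y \text{ with } y \le q \le z \text{ componentwise}\}$ for the region dominated by $Y$ and capped by the reference point $z$, so that $\text{HV}_z(Y) = \Lambda(D(Y))$. Two elementary facts will be used repeatedly: $D(Y)$ is upward closed inside the box $\prod_i(-\infty,z_i]$, and $\text{HV}_z$ is monotone under adding points to $Y$.

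First I would pass from approximation to domination. Fix $y^*\in Y^*$ and pick $y^T\in Y_T$ with $\|y^T-y^*\|\le\epsilon$; since each coordinate gap is controlled by the norm, $y^T_i \le y^*_i+\epsilon$ for all $i$, i.e.\ $y^T \le y^*+\epsilon\mathbf{1}$ with $\mathbf{1}=(1,\dots,1)$. Thus $y^T$ dominates everything $y^*+\epsilon\mathbf{1}$ dominates, so $D(\{y^*+\epsilon\mathbf{1}\})\subseteq D(Y_T)$; unioning over $y^*\in Y^*$ gives $D(Y^*+\epsilon\mathbf{1})\subseteq D(Y_T)$, where $Y^*+\epsilon\mathbf{1}=\{y^*+\epsilon\mathbf{1}:y^*\in Y^*\}$. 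By monotonicity $\text{HV}_z(Y_T)\ge\text{HV}_z(Y^*+\epsilon\mathbf{1})$, so it suffices to bound $\text{HV}_z(Y^*)-\text{HV}_z(Y^*+\epsilon\mathbf{1})$. I would then telescope: for $0\le j\le m$ set $Y^{(j)}=\{y^*+\epsilon\sum_{i=1}^{j}e_i : y^*\in Y^*\}$ so that $Y^{(0)}=Y^*$ and $Y^{(m)}=Y^*+\epsilon\mathbf{1}$, and write the gap as $\sum_{j=1}^{m}\big(\text{HV}_z(Y^{(j-1)})-\text{HV}_z(Y^{(j)})\big)$. Passing from $Y^{(j-1)}$ to $Y^{(j)}$ raises only the $j$-th coordinate of every point by $\epsilon$. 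Slicing $D(\cdot)$ along coordinate $j$ (Fubini), for each fixed value $q_{-j}$ of the other coordinates the slice is an interval $[\ell_j(q_{-j}),z_j]$; because the shift leaves every $y_{-j}$ unchanged and raises each $y_j$ by $\epsilon$, the lower endpoint $\ell_j(q_{-j})$ rises by exactly $\epsilon$, so each slice loses length at most $\epsilon$. Integrating over $q_{-j}$ gives
\[
\text{HV}_z(Y^{(j-1)})-\text{HV}_z(Y^{(j)})\ \le\ \epsilon\,\Lambda_{m-1}\!\big(\text{proj}_{-j}D(Y^{(j-1)})\big)\ \le\ \epsilon\prod_{i\ne j}(z_i-b_i),
\]
where $b_i$ lower-bounds the $i$-th objective on $Y^*$ and $\Lambda_{m-1}$ is the $(m-1)$-dimensional Lebesgue measure. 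Summing over $j$ and taking $C=\max_j\prod_{i\ne j}(z_i-b_i)$ yields $\text{HV}_z(Y^*)-\text{HV}_z(Y_T)\le Cm\epsilon$.

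The hard part will be the per-coordinate slice estimate: one must check that shifting a single coordinate of the finite front raises the lower boundary of $D(\cdot)$ by exactly $\epsilon$ on every slice, uniformly in $q_{-j}$, so the removed volume is a genuine slab of width $\le\epsilon$ rather than something whose width tracks the staircase geometry of the front. This is precisely why I shift one coordinate at a time: a simultaneous shift in all directions does not decompose cleanly, since different Pareto points realize the lower boundary in different coordinate directions. Finiteness of $Y^*$ and boundedness of $z$ are what turn the projected $(m-1)$-dimensional areas into the finite constant $C$ (absorbing any norm-equivalence factor from $\|\cdot\|$), and the monotonicity step already handles the possibility that $Y_T$ carries extra, possibly dominated, points.
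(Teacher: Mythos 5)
Your proof is correct, but it takes a genuinely different route from the paper's. The paper argues point-by-point: for each matched pair $(y^*, y^T)$ it compares the volumes of the two single boxes anchored at $z$, bounding $\prod_{i=1}^m(z_i-y_i^*)-\prod_{i=1}^m(z_i-y_i^T)\le \prod_{i=1}^m(z_i-y_i^T+\epsilon)-\prod_{i=1}^m(z_i-y_i^T)\le(1+\epsilon)^m-1$ after normalizing $z,y\in(0,1)^m$, notes that $(1+\epsilon)^m-1$ behaves like $m\epsilon$ as $\epsilon\to 0$, and then invokes finiteness of $Y^*$ to absorb the number of Pareto points into the constant $C$; the staircase geometry of the union of boxes is handled only implicitly in that final step. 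You instead convert $\epsilon$-closeness into componentwise domination $y^T\le y^*+\epsilon\mathbf{1}$, use monotonicity of $\text{HV}_z$ to reduce the problem to bounding $\text{HV}_z(Y^*)-\text{HV}_z(Y^*+\epsilon\mathbf{1})$, and then telescope one coordinate at a time with a Fubini slicing argument. This buys two things the paper's argument does not give: the union geometry is treated rigorously (your key observation --- that a shift in a single coordinate leaves the eligibility condition $y_{-j}\le q_{-j}$ unchanged and hence raises the lower envelope $\ell_j(q_{-j})=\min\{y_j: y_{-j}\le q_{-j}\}$ of every slice by exactly $\epsilon$ --- is exactly what makes the slab estimate sound), and your constant $C=\max_j\prod_{i\ne j}(z_i-b_i)$ depends only on the bounding box of the objective values, not on the cardinality of $Y^*$, whereas the paper's constant implicitly scales with $|Y^*|$ and its derivation leans on an informal small-$\epsilon$ limit. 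The paper's computation is shorter and more elementary (products and a binomial expansion); yours is tighter, fully self-contained, and dispenses with the $(0,1)^m$ normalization.
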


\begin{proof}
Let $z = [z_1, \cdots, z_m]$ be the reference point. If for any $y^*= [y^*_1, \cdots, y^*_m] \in Y^*, \exists y^T = [y^T_1, \cdots, y^T_m] \in Y_T$ obtained by Algo. \ref{alg:nsga_fl}, s.t.  $\|y^T-y^*\| \leq \epsilon$, we obtain
\begin{equation} \label{eq:refer-app}
\begin{split}
     \|z_i - y_i^*\| - \|z_i - y_i^T\|
     = \|y_i^T- y_i^*\| \leq \|y^T- y^*\| \leq \epsilon,
    \end{split}
\end{equation}
without loss of generality, assume $z,y \in (0,1)^m$, Eq. \eqref{eq:refer-app} leads to:
\begin{equation*}
\begin{split}
        \text{Volume}(y^*, r) - \text{Volume}(y^T, r) & = \prod_{i=1}^m(z_i-y_i^*)- \prod_{i=1}^m(z_i-y_i^T)\\
        & \leq \prod_{i=1}^m(z_i-y_i^T+\epsilon) - \prod_{i=1}^m(z_i-y_i^T)\\
        & = \epsilon\sum_{k=0}^{m-1}\sum_{i_1<i_2<\cdots<i_k}(z_{i_1}-y_{i_1})\cdots(z_{i_k}-y_{i_k})(\epsilon)^{m-k-1} \\
        & \leq \sum_{k=0}^{m-1}\tbinom{m}{k}(\epsilon)^{m-k} \\
        & = (1+\epsilon)^m - 1
        \end{split}
\end{equation*}
In addition, as $\epsilon$ tends to zero, the $(1+\epsilon)^m - 1 $ tends to $m\epsilon$. Since the $Y^*$ is finite set, then
\begin{equation*}
    \text{HV}_z(Y^*)-\text{HV}_z(Y_T) \leq Cm \epsilon,
\end{equation*}
where $C$ is a constant.
\end{proof}
\subsection{Proof for Theorem 2}
In this section, we analyze the convergence of Algo. \ref{alg:psl_fl} through regret bound following works~\cite{zhang2020random,paria2020flexible}. 

Multi-objective Bayesian Optimization (MOBO) is an extension of single-objective Bayesian optimization that aims at solving expensive multi-objective optimization problems. The MOBO approach proposed by Paria et al. \cite{paria2020flexible} utilizes scalarization-based algorithms such as Tchebyshev scalarizations \cite{nakayama2009sequential} and Hypervolume scalarizations \cite{zhang2020random}. In this approach, the multi-objective problem is iteratively scalarized into single-objective problems with random preferences, followed by applying single-objective Bayesian Optimization to solve them. \\
\textbf{Tchebyshev scalarizations \cite{nakayama2009sequential}} is defined as:
\begin{equation}
    s_{\lambda}(y) = \max_{1\leq i \leq m}\lambda_i(y_i-z_i),
\end{equation}
where $z = [z_1, \cdots, z_m]$ is reference point, $y = [y_1, \cdots, y_m]$ and $\lambda = [\lambda_1, \cdots, \lambda_m]$. \\
\textbf{Bayes Regret.}
MOBO aims to return a set of points from the
user specified region $X\subset \calX$. This can be achieved by minimizing the following Bayes regret denoted by $\calR_B$,
\begin{equation*}
    \calR_B(X) = \EE_{\lambda \in p(\lambda)}\big(\max_{x \in \calX}s_{\lambda}(f(x))- \max_{x \in X}s_{\lambda}(f(x)\big),
\end{equation*}
\begin{equation*}
    X^*= \argmin_{X \subset\calX}\calR_B(X),
\end{equation*}
where $p(\lambda)$ is a prior. Based on the Bayes regret, the instantaneous regret incurred by our algorithm at step $t$ is defined as:
\begin{equation}
    r(x_t, \lambda^t) = \max_{x \in \calX}s_{\lambda^t}(f(x)) - s_{\lambda^t}(f(x_t)),
\end{equation}
where $x_t$ is obtained by Algo. 3. The
cumulative regret till step $T$ is defined as,
\begin{equation*}
    \calR_C(T) = \sum_{t=1}^T r(x_t, \lambda^t)
\end{equation*}

\begin{lemma}
\cite{paria2020flexible}\label{theorem2-app}
The expected regret of algorithm \ref{alg:psl_fl} after $T$ observations (i.e., generations) can be upper bounded:
\begin{equation}
    \EE\calR_C(T) \leq  m(\gamma_TlnT)^{1/2},
\end{equation}
where $\gamma_T$ is a kernel-dependent quantity known as the maximum information gain \cite{paria2020flexible}. For example,
$\gamma_T = O(\text{poly }ln(T))$ for the squared exponential kernel. The expectation $\EE$ is the expectation is taken over $\{\lambda^t\}_{t=1}^T$.
\end{lemma}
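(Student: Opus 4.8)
The plan is to follow the standard Gaussian-process bandit (GP-UCB) regret analysis, adapted to the random-scalarization multi-objective setting of \cite{paria2020flexible}, in which the single-objective Bayesian-optimization subroutine of Algo. \ref{alg:psl_fl} acts on the Tchebyshev-scalarized surrogate. Fix a weight $\lambda$ and write $g_\lambda(x)=s_\lambda(f(x))=\max_{1\le i\le m}\lambda_i(f_i(x)-z_i)$; each objective $f_i$ is modeled by an independent Gaussian process, so at the start of round $t$ we have a posterior mean $\mu_{t-1,i}$ and posterior standard deviation $\sigma_{t-1,i}$ for every $i$. The first step is to establish uniform confidence bounds: by the standard concentration argument for GP posteriors (Srinivas et al.), together with a union bound over the $m$ objectives and a discretization/covering of $\calX$, there is a multiplier $\beta_t=\Theta(\ln t)$ such that, on an event of probability at least $1-\delta$, the bound $|f_i(x)-\mu_{t-1,i}(x)|\le \beta_t^{1/2}\sigma_{t-1,i}(x)$ holds simultaneously for all $x\in\calX$, all $i\in[m]$, and all $t\le T$. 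This event is the origin of the $\ln T$ factor in the final bound.

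The second step bounds the per-round instantaneous regret. The subroutine selects $x_t$ to maximize the optimistic scalarized acquisition $\hat g_{\lambda^t,t}(x)=\max_i \lambda_i^t\big(\mu_{t-1,i}(x)+\beta_t^{1/2}\sigma_{t-1,i}(x)-z_i\big)$. Letting $x^\star$ maximize $g_{\lambda^t}$, optimism on the confidence event gives $g_{\lambda^t}(x^\star)\le \hat g_{\lambda^t,t}(x^\star)\le \hat g_{\lambda^t,t}(x_t)$, so
\begin{equation*}
r(x_t,\lambda^t)=g_{\lambda^t}(x^\star)-g_{\lambda^t}(x_t)\le \hat g_{\lambda^t,t}(x_t)-g_{\lambda^t}(x_t).
\end{equation*}
Applying the elementary inequality $\max_i a_i-\max_i b_i\le \max_i(a_i-b_i)$ together with the confidence bound, the right-hand side is at most $2\beta_t^{1/2}\max_i \lambda_i^t\sigma_{t-1,i}(x_t)\le 2\beta_t^{1/2}\sum_{i=1}^m\lambda_i^t\sigma_{t-1,i}(x_t)$. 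This is the step that handles the non-smooth max scalarization, and it is where the linear dependence on $m$ is introduced (by replacing the max over objectives with the sum).

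The third step sums over $t$ and converts accumulated uncertainty into information gain. Applying Cauchy–Schwarz in $t$ and the classical inequality $\sum_{t=1}^T\sigma_{t-1,i}^2(x_t)=O(\gamma_T)$ (which bounds accumulated posterior variance by the maximum information gain of objective $i$) yields $\sum_{t=1}^T\sigma_{t-1,i}(x_t)=O(\sqrt{T\gamma_T})$ for each $i$; combining with the per-round bound and summing the $m$ objectives produces $\calR_C(T)=O\!\big(m\sqrt{\beta_T\gamma_T T}\big)$ on the confidence event, which with $\beta_T=\Theta(\ln T)$ is of order $m(\gamma_T T\ln T)^{1/2}$ — the cumulative-regret form whose $\sqrt{T}$ factor is precisely what propagates into Theorem \ref{thm:conver-PSL}. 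Finally, taking expectation over $\{\lambda^t\}_{t=1}^T$ and absorbing the $O(\delta T)$ contribution of the complementary event by the choice $\delta=1/T$ gives the stated bound on $\EE\,\calR_C(T)$, with $\gamma_T=O(\poly\,\ln T)$ for the squared-exponential kernel.

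I expect the main obstacle to be the first step: making the GP confidence bounds hold \emph{uniformly} over the continuous domain $\calX$ and over all $m$ objectives at once, which requires either a covering-number/discretization argument controlled by the smoothness of GP sample paths or an RKHS-norm assumption on each $f_i$, and careful tracking of how the union bound over objectives inflates $\beta_t$. By contrast, the multi-objective wrinkle enters only through the benign max-subadditivity inequality of the second step, and the conversion of summed posterior variances to $\gamma_T$ in the third step is classical, contributing no difficulty beyond bookkeeping of the noise-variance constant.
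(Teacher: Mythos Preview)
The paper does not supply its own proof of this lemma; it is quoted directly from \cite{paria2020flexible} and used as a black box toward Theorem~\ref{thm:conver-PSL}. Your three-step sketch (uniform GP confidence bounds with $\beta_t=\Theta(\ln t)$; optimism combined with the max-subadditivity inequality $\max_i a_i-\max_i b_i\le\max_i(a_i-b_i)$ to reduce the scalarized instantaneous regret to $O(\beta_t^{1/2}\sum_i\sigma_{t-1,i}(x_t))$; Cauchy--Schwarz plus the information-gain inequality $\sum_t\sigma_{t-1,i}^2(x_t)=O(\gamma_T)$) is exactly the argument of \cite{paria2020flexible}, so your proposal is correct and matches the source the paper cites.

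One remark worth making explicit: your derivation yields $\EE\,\calR_C(T)=O\big(m\sqrt{\gamma_T T\ln T}\big)$, not $m(\gamma_T\ln T)^{1/2}$ as the lemma is typeset in the paper. You are right that the $\sqrt{T}$ factor is what actually propagates into the $[\gamma_T\ln(T)T]^{1/2}$ appearing in Theorem~\ref{thm:conver-PSL}, so the omission in the lemma's display is a typographical slip in the paper rather than a gap in your argument.
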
 
Under the further assumption of the space of $\lambda$ being a bounded subset of a normed linear space, and the scalarizations $s_\lambda$ being Lipschitz in $\lambda$, it can be shown that $\EE r(x_t, \lambda^t) \leq \EE R_C(T) + o(1)$, which combined with Lemma \ref{theorem2-app} shows that the Bayes regret converges to zero as $T \to \infty$ \cite{paria2020flexible}.\\
\textbf{Relation between Bayes regret and Hypervolume regret.}
Hypervolume defined in Def. \ref{def:hypervolume} reflects the quality of a multi-objective algorithm. Zhang et al. formulates \textbf{Hypervolume scalarizations \cite{zhang2020random}} as:
\begin{equation}
    s_{\lambda}(y) = \max_{1\leq i \leq m}(\min(0, \frac{y_i-z_i}{\lambda_i})),
\end{equation}
Tchebyshev scalarizations are similar to hypervolume scalarizations, but they differ in that Tchebyshev scalarizations involve the multiplication of the coefficient $\lambda$, while hypervolume scalarizations involve the overriding of $\lambda$. The following Lemma \ref{lem:hyper-sca} formulates the hypervolume based on the hypervolume scalarizations:
\begin{lemma}\cite{deng2019approximating} \label{lem:hyper-sca}
Let $y =\{y_1, ..., y_m\}$ and $z =\{z_1, ..., z_m\}$ be two sets of $m$ points. Then, the hypervolume of $y$ with respect to a reference point $z$ is given
by:
\begin{equation}
   \text{HV}_{z}(y) = c_k\EE_{\lambda \in \calS_+^{k-1}}\max_{y\in Y}s_{\lambda}(y-z).
\end{equation}
\end{lemma}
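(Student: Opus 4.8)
The plan is to prove the identity by reading the hypervolume as the Lebesgue measure of the dominated region and then evaluating that measure in spherical coordinates centred at the reference point $z$ (here the dimension is $m$, so $k=m$). Concretely, I would write the dominated region as $A = \{q \in \mathbb{R}^m : \exists\, y \in Y \text{ with } y_i \le q_i \le z_i \ \forall i\}$, so that by Definition~\ref{def:hypervolume} we have $\text{HV}_z(Y) = \Lambda(A)$. Every $q \in A$ satisfies $q \le z$ componentwise, hence $z - q$ lies in the nonnegative orthant, and I would parametrise it by direction and magnitude, setting $z - q = r\lambda$ with radius $r = \|z-q\| \ge 0$ and direction $\lambda \in \calS_+^{m-1}$, the positive orthant of the unit sphere.

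Next I would apply the standard polar change of variables, under which Lebesgue measure factorises as $dq = r^{m-1}\,dr\,d\sigma(\lambda)$, where $\sigma$ is the surface measure on the sphere. The key geometric step is to identify, for each fixed direction $\lambda$, the radii $r$ for which $q = z - r\lambda$ lies in $A$. Since $q \in A$ iff some $y \in Y$ obeys $y_i \le z_i - r\lambda_i$ for all $i$, i.e.\ $r \le (z_i - y_i)/\lambda_i$ for all $i$, the admissible radii form the interval $[0, R(\lambda)]$ with $R(\lambda) = \max_{y \in Y} \min_{1 \le i \le m} (z_i - y_i)/\lambda_i$. This $R(\lambda)$ is exactly the hypervolume scalarization $\max_{y \in Y} s_\lambda(y-z)$ up to the sign/clipping convention used in the definition of $s_\lambda$: the inner $\min(0,\cdot)$ merely clips the radius to zero in directions where no point of $Y$ dominates $z$, which contributes nothing to the measure since it corresponds to $r=0$.

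Integrating the radial variable then gives $\int_0^{R(\lambda)} r^{m-1}\,dr = R(\lambda)^m / m$, so that $\Lambda(A) = \frac{1}{m}\int_{\calS_+^{m-1}} R(\lambda)^m\,d\sigma(\lambda)$. Rewriting the surface integral as an expectation over $\lambda$ drawn uniformly from $\calS_+^{m-1}$ absorbs the total surface area together with the factor $1/m$ into a single dimension-dependent constant $c_k$, yielding $\text{HV}_z(Y) = c_k\,\EE_{\lambda \in \calS_+^{m-1}}\big[\max_{y \in Y} s_\lambda(y-z)\big]$, which is the claim.

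The step I expect to be the main obstacle is the rigorous justification of the polar decomposition on $A$: the dominated region is a non-convex, ``staircase''-shaped union of boxes, so I would need to argue that for almost every direction $\lambda$ the radial slice $\{r \ge 0 : z - r\lambda \in A\}$ is the single interval $[0, R(\lambda)]$ rather than a more complicated set. This follows from the fact that $A$ is a lower set for the partial order $\le$ (if $q \in A$ and $q' \le q$ with $q' \le z$, then $q' \in A$), so each radial slice is downward closed and therefore an interval. I would also verify that $R(\lambda)$ is measurable --- it is a finite max of minima of continuous functions, hence continuous wherever finite --- so that Tonelli's theorem legitimises the iterated integral. The remaining bookkeeping, namely matching the exponent $m$ and pinning down $c_k$ from the surface area of $\calS_+^{m-1}$, is routine.
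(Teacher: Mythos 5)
The paper never actually proves this lemma: it is imported wholesale from the cited reference (Deng and Zhang), and that reference's derivation is precisely the polar-coordinate argument you give --- write the dominated region as a union of boxes anchored at $z$, slice it radially, and integrate. Your treatment of the two genuinely delicate points (each radial slice is an interval because the dominated region is a lower set, and $R(\lambda)$ is measurable) is also correct. So in approach you are faithfully reconstructing the proof behind the citation.

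However, your last step has a genuine gap, and it is not the bookkeeping you claim it is. Your own computation yields $\Lambda(A) = \frac{1}{m}\int_{\calS_+^{m-1}} R(\lambda)^m \, d\sigma(\lambda)$, i.e.\ the \emph{$m$-th power} of $R(\lambda) = \max_{y \in Y}\min_{1\le i\le m}(z_i - y_i)/\lambda_i$ sits inside the integral. You then identify $R(\lambda)$ with $\max_{y\in Y} s_\lambda(y-z)$ ``up to the sign/clipping convention'' and conclude $\text{HV}_z(Y) = c_k\,\EE_\lambda\bigl[\max_{y\in Y} s_\lambda(y-z)\bigr]$, which silently replaces $\EE[R(\lambda)^m]$ by $\EE[R(\lambda)]$. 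No constant $c_k$ can absorb a power: take $Y = \{z - a\mathbf{1}\}$ with $a>0$, so that $\text{HV}_z(Y) = a^m$, while $R(\lambda) = a\,\min_i(1/\lambda_i)$ is linear in $a$; for $m\ge 2$ the two sides scale differently and the un-powered identity is false. The identity is true exactly when the scalarization itself carries the power, as in the original sources, where $s_\lambda(v) = \min_i \max\bigl(0, v_i/\lambda_i\bigr)^k$ with $k=m$; under that definition your integral formula \emph{is} the lemma, with $c_k$ the normalizing constant coming from the surface measure of $\calS_+^{m-1}$ and the factor $1/m$. In other words, your derivation is sound up to and including the polar-coordinate integral, but the conclusion does not follow for the scalarization as defined in this paper (which, your calculation incidentally reveals, is mis-stated: it omits the $m$-th power, it already subtracts $z$ internally so that applying it to $y-z$ subtracts $z$ twice, and its $\min(0,\cdot)$/$\max_y$ sign convention makes any non-dominating point score the maximal value $0$). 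The honest conclusion of your argument is the corrected statement, and the discrepancy should be flagged rather than absorbed into ``convention.''
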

Further, we obtain the Theorem 2 when the type of scalarizations method is hypervolume scalarization, 
\begin{theorem} \cite{zhang2020random} If the type of scalarizations method is hypervolume scalarization, then the hypervolume regret after $T$ observations is upper bounded as:
\begin{equation}
  \sum_{t=1}^T\big(\text{HV}_z(Y^*)- \text{HV}_z(Y_t)\big)\leq O(m^2d^{1/2}[\gamma_Tln(T)T]^{1/2}).
\end{equation}  
Furthermore,  $\text{HV}_z(Y^*)- \text{HV}_z(Y_T) \leq O(m^2d^{1/2}[\gamma_Tln(T)/T]^{1/2})$
\end{theorem}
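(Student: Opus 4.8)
The plan is to prove the bound by reducing the cumulative \emph{hypervolume} regret to the cumulative \emph{scalarized} (Bayes) regret, for which Lemma~\ref{theorem2-app} already supplies a rate, and then to convert the resulting cumulative bound into a last-iterate bound using the monotonicity of the accumulated Pareto archive. The bridge between the two notions of regret is the integral representation of the hypervolume in Lemma~\ref{lem:hyper-sca}, which expresses $\text{HV}_z(Y)$ as a normalizing constant $c_m$ times an expectation of the hypervolume scalarization $s_\lambda$ over $\lambda$ drawn from the positive orthant of the unit sphere $\calS_+^{m-1}$.

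First I would apply Lemma~\ref{lem:hyper-sca} to both $Y^*$ and $Y_t$ and subtract, obtaining
\begin{equation*}
\text{HV}_z(Y^*) - \text{HV}_z(Y_t) = c_m\,\EE_{\lambda}\Big[\max_{y^*\in Y^*} s_\lambda(y^*-z) - \max_{y\in Y_t} s_\lambda(y-z)\Big].
\end{equation*}
Next I would bound the integrand by the instantaneous scalarized regret $r(x_t,\lambda^t)$. Because $Y^*$ is the image of the full Pareto set and $s_\lambda$ is monotone in each coordinate, the maximizer of $s_\lambda$ over all feasible points is attained on the Pareto front, so $\max_{y^*\in Y^*} s_\lambda(y^*-z) = \max_{x\in\calX} s_\lambda(f(x))$; and since $x_t$ contributes a point to the archive $Y_t$, we have $\max_{y\in Y_t} s_\lambda(y-z) \geq s_\lambda(f(x_t))$. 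For the sampled preference $\lambda^t$ these two facts give
\begin{equation*}
\max_{y^*\in Y^*} s_{\lambda^t}(y^*-z) - \max_{y\in Y_t} s_{\lambda^t}(y-z) \leq \max_{x\in\calX} s_{\lambda^t}(f(x)) - s_{\lambda^t}(f(x_t)) = r(x_t,\lambda^t).
\end{equation*}
Taking expectation over $\lambda^t\sim p(\lambda)$, matched to the sphere measure so the two expectations coincide, and summing over $t$ yields $\sum_{t=1}^T \big(\text{HV}_z(Y^*)-\text{HV}_z(Y_t)\big) \leq c_m\,\EE[\calR_C(T)]$.

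I would then invoke Lemma~\ref{theorem2-app}, whose cumulative-regret rate for $\EE[\calR_C(T)]$ scales as $m(\gamma_T T\ln T)^{1/2}$, and absorb the geometric constant $c_m$ together with the range/Lipschitz magnitude of $s_\lambda$ over the bounded objective box into $O(m\,d^{1/2})$ (the extra factor $m$ coming from $c_m$ and the $d^{1/2}$ from the magnitude of the scalarization on the $d$-dependent domain). This produces the claimed $\sum_{t=1}^T\big(\text{HV}_z(Y^*)-\text{HV}_z(Y_t)\big) \leq O\big(m^2 d^{1/2}[\gamma_T\ln(T)T]^{1/2}\big)$. For the \emph{furthermore} last-iterate statement I would use that Algo.~\ref{alg:psl_fl} accumulates its archive, so $Y_{t-1}\subseteq Y_t$ and $\text{HV}_z(Y_t)$ is non-decreasing; hence $\text{HV}_z(Y^*)-\text{HV}_z(Y_T)\leq \text{HV}_z(Y^*)-\text{HV}_z(Y_t)$ for every $t\leq T$, and dividing the cumulative bound by $T$ gives $\text{HV}_z(Y^*)-\text{HV}_z(Y_T)\leq O\big(m^2 d^{1/2}[\gamma_T\ln(T)/T]^{1/2}\big)$.

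The main obstacle I anticipate is the constant bookkeeping in the step that turns $c_m$ and the scalarization magnitude into the explicit $m^2 d^{1/2}$ prefactor: one must verify that the spherical normalizing constant in Lemma~\ref{lem:hyper-sca} scales like $m$ and that $s_\lambda$ is bounded (and Lipschitz in $\lambda$) with the stated $d^{1/2}$ dependence on the objective domain, while also checking that the algorithm's sampling distribution $p(\lambda)$ is exactly the uniform measure on $\calS_+^{m-1}$ required by the integral identity. The two reduction inequalities are routine once the identity $\max_{y^*\in Y^*}s_\lambda = \max_{x\in\calX} s_\lambda(f(x))$ is justified from the monotonicity of the hypervolume scalarization and the definition of the Pareto-optimal objective set.
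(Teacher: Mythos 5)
Your overall skeleton --- Lemma~\ref{lem:hyper-sca} to pass from hypervolume to scalarized regret, Lemma~\ref{theorem2-app} for the cumulative Bayes-regret rate, and archive monotonicity ($Y_{t-1}\subseteq Y_t$, so $\text{HV}_z(Y_t)$ is non-decreasing) to convert the cumulative bound into the last-iterate ``furthermore'' claim --- is exactly the assembly the paper performs, and your monotonicity/division-by-$T$ step is correct. Be aware, though, that the paper never actually carries out the bridge you attempt: it states the two lemmas, remarks on the relation between Tchebyshev and hypervolume scalarizations, and then imports the theorem wholesale from \cite{zhang2020random}. You are attempting strictly more than the paper does, which is commendable, but the attempt as written does not close.

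The genuine gap is at the step ``taking expectation over $\lambda^t \sim p(\lambda)$, matched to the sphere measure so the two expectations coincide.'' They do not coincide. In the identity of Lemma~\ref{lem:hyper-sca} the expectation is over a \emph{fresh} preference $\lambda$, drawn independently of the archive $Y_t$; in the instantaneous regret $r(x_t,\lambda^t)$ the preference $\lambda^t$ is precisely the one used to select $x_t$, so $\lambda^t$ and $Y_t$ are correlated --- the archive contains a point tailored to $\lambda^t$. Writing $G(\lambda;Y_t)=\max_{y^*\in Y^*}s_\lambda(y^*-z)-\max_{y\in Y_t}s_\lambda(y-z)$, your pointwise inequality controls $\EE\left[G(\lambda^t;Y_t)\right]$, whereas the hypervolume identity requires controlling $\EE_{\lambda}\left[G(\lambda;Y_t)\right]$ with $\lambda$ independent of $Y_t$. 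The first quantity is systematically the smaller of the two (the archive is well adapted to $\lambda^t$ but not to a generic $\lambda$), so bounding it says nothing about the second; the inequality you would need runs in the opposite direction. Decoupling the preference from the archive is the technical core of the proof in \cite{zhang2020random}: one shows that low regret on the i.i.d.\ sampled preferences $\lambda^1,\dots,\lambda^T$ implies low regret on average over \emph{all} preferences, via a uniform concentration argument over $\calS_+^{m-1}$ that exploits the Lipschitzness of $s_\lambda$ in $\lambda$ together with covering numbers of the sphere. That argument is also where the $m^2$, $d^{1/2}$, and logarithmic factors actually arise --- not from the normalizing constant $c_m$ and the range of $s_\lambda$, as your constant bookkeeping suggests.
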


\newpage
\bibliographystyle{ACM-Reference-Format}
\bibliography{reference}

\end{document}